\documentclass[runningheads]{llncs}
\usepackage[T1]{fontenc}
\usepackage{graphicx}
\usepackage{amssymb}
\usepackage{booktabs}
\usepackage{amsmath}
\usepackage{xcolor}
\usepackage{multirow}
\usepackage{subcaption}
\usepackage{makecell}

\usepackage[ruled]{algorithm}
\usepackage[noend]{algpseudocode}
\algrenewcommand{\algorithmicrequire}{\textbf{Input:}}
\algrenewcommand{\algorithmicensure}{\textbf{Output:}}
\usepackage{bm}
\algnewcommand\algorithmicinput{\textbf{Symbols:}}
\algnewcommand\Symbols{\item[\algorithmicinput]}
\algrenewcommand\textproc{}

\usepackage{cite}
\usepackage[colorlinks=true, citecolor=blue, linkcolor=blue, urlcolor=blue]{hyperref}

\begin{document}
\title{Learning Safe Agent Behaviour from Human Preferences and Justifications via World Models}
\titlerunning{Learning Safe Behaviour from Preferences \& Justifications via World Models}
%
\author{Ilias Kazantzidis\inst{1}\orcidID{0000-0002-1127-3843} \and
Timothy J. Norman\inst{1}\orcidID{0000-0002-6387-4034} \and
Yali Du\inst{2}\orcidID{0000-0001-5683-2621} \and
Christopher T. Freeman\inst{1}\orcidID{0000-0003-0305-9246}}
\authorrunning{I. Kazantzidis et al.}
%
\institute{University of Southampton, Southampton SO17 1BJ, United Kingdom\\
\email{\{ik3n19,t.j.norman,ctf1\}@soton.ac.uk} \and
King's College London, London WC2R 2LS, United Kingdom\\
\email{yali.du@kcl.ac.uk}}
\maketitle              

\begin{abstract}

We address the problem of safely training an agent policy and deploying a good and safe policy, in settings where the environment dynamics are unknown and no suitable reward function is available. In the context of safety-critical environments, we consider traditional reinforcement learning impractical and resort to the resource of human input. We introduce \textit{DROPJ}, a human-centred method for both safe training and deployment. We first learn a world model (a learned simulator) from a dataset of prior real-world trajectories. A human then \textit{plays the game} in this learned simulator to extract several informative simulated trajectories. From these, we sample pairs of simulated trajectory segments and elicit from a human their preference over these segments, as well as a reason (\textit{justification}) for their choice. We then train a reward model from these justified preferences and use it, together with the world model, to directly deploy the agent using model predictive control. Running real-user experiments, we find that generating informative simulated trajectories from a user significantly reduces the computational cost during training compared to other strategies, and can also improve the performance during deployment. In the context of training within a learned simulator, we show that the use of preferences rather than other types of feedback substantially improves the performance during deployment. We further demonstrate that safety justifications accompanying preferences can significantly enhance safety or prioritise user-prescribed aspects of safety associated with them during deployment.

\keywords{Safe Learning from Human Preferences  \and Safe Reinforcement Learning \and Human-Agent Interaction \and Human-Robot Interaction \and Human-in-the-loop Machine Learning \and Learning Human Values and Preferences.}
\end{abstract}

\section{Introduction}
\label{sec:introduction}

Over the last couple of years, Reinforcement Learning (RL) has been proven to be an efficient way of training agents for sequential decision-making \cite{mnih2015human,silver2016mastering,vinyals2019grandmaster}. However, widespread use of end-to-end RL for many useful real-world and safety-critical applications, such as self-driving cars and robotics, is limited. One important reason is that in practical settings that the dynamics function of the environment may be unknown, and thus we do not have access to a reliable (near-)perfect simulator, safety becomes a serious concern if training has to take place in the real world. For instance, an autonomous RL car would have to bump into a post in order to learn not to bump into a post. This problem, known as safe exploration, has been targeted by several methods in the Safe RL literature \cite{garcia2015comprehensive,achiam2017constrained,chow2018lyapunov,cheng2019end,turchetta2020safe,wachi2023safe,manishapo}. Although these methods try to provide safety guarantees by setting up constraints, they assume access to a well-defined environmental reward function. However, in complex domains, \textit{reward functions} may also be unavailable or poorly aligned with human objectives \cite{leike2018scalable}. Learning a model from human feedback \cite{knox2009interactively}, particularly a reward model from pairwise preferences \cite{christiano2017deep}, offers a promising solution, as the learned policy is better aligned with human objectives. Yet, this alone does not tackle safe learning, as the agent would still encounter unsafe states if trained in the real world, risking damage. Moreover, methods that the human continuously oversees the agent during training \cite{saunders2018trial,kazantzidis2022train} would typically impose an infeasible human burden, as well as their use would be impractical in domains where continuous operation is essential.

\begin{figure}[t]
\centerline{\includegraphics[width=1.3\textwidth]{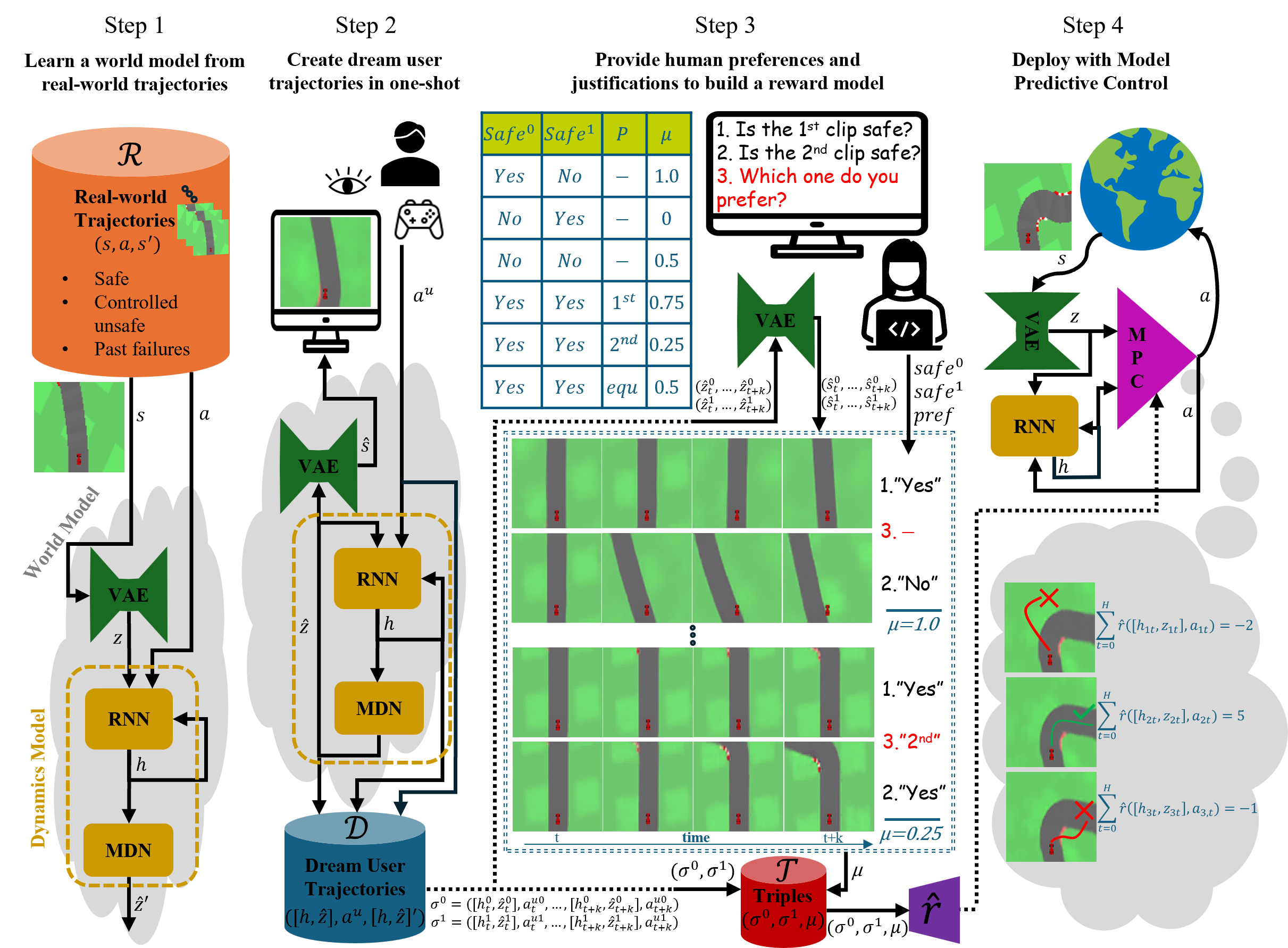}}
\caption{The DROPJ framework \cite{kazantzidis2026safe}. In Step 3, we show the case with a single safety justification (associated with unsafe steps of the car driving onto grass).}
\label{fig:method}
\end{figure}

An approach that tries to tackle the issue of safe learning under both unavailable dynamics and reward functions is to build a dynamics model from an offline dataset of past real-world reward-free trajectories --- effectively an imperfect simulator (also referred to as a \textit{world model}) --- and construct a reward model from human feedback on trajectories extracted safely inside that simulator. Using this reward model, either an RL policy could be learned inside that simulated environment and transferred to the real world, or, better still, an agent could be deployed directly to the real world using Model Predictive Control (MPC) \cite{garcia1989model}. However, works that have followed this approach \cite{reddy2020learning,rahtz2022safe} have two limitations. First, an iterative, computationally expensive process, of generating trajectory segments, seeking human feedback on them, training a reward model, and then generating new segments based on the improved reward model, is used to generate informative queries for human feedback. This hypothetical query generation is particularly expensive and difficult to scale in complex tasks, as the user would have to stay on a long-lasting loop providing feedback. Second, although training is carried out safely inside the imperfect simulator, unsafe-state violations can still occur during deployment if safety requirements are not captured by the reward model.

We present \textbf{DROPJ}: \textbf{D}ream-World \textbf{R}eward Learning from \textbf{O}ne-Shot Human \textbf{P}references and \textbf{J}ustifications \cite{kazantzidis2026towards}. As illustrated in Figure \ref{fig:method}, in Step 1, DROPJ learns a world model (a learned simulator) from prior real-world reward-free trajectories (episodes). In Step 2, a user `plays the game' for a few minutes in order to generate \textit{dream user trajectories}, i.e.\ not real but trajectories extracted with the learned simulator. Here, exploration and unsafe-state visits are allowed. In Step 3, pairs of segments are sampled from the dream user trajectories and a user expresses their preference and a \textit{justification} for it. A reward model is then learned with preferences and justifications similar to \cite{christiano2017deep}. In Step 4, equipped with the learned dynamics and reward models, we directly deploy the agent with MPC. Thus, DROPJ solves the two limitations of the previous paragraph: in Step 2, it facilitates the generation of trajectory segments by leveraging the understanding of a user who safely takes actions via the learned simulator, and creates through it diverse examples in \textit{one-shot}; and in Step 3, it tackles safety during deployment, using \textit{safety justifications} to accompany preferences.

Our method, in contrast to many previous works, \textbf{assumes no access to predefined environment dynamics or handcrafted reward functions}, reflecting realistic real-world conditions. We evaluate, with real users, not only performance and safety, but also human burden and computational cost. The two core contributions are:
\begin{enumerate}
\item A technique, we call \textit{one-shot}, where examples for human feedback are generated within a learned simulator from a user in `one-shot'. This technique improves both computational cost (time) and performance compared to previous techniques.
\item The extension of the \textit{justifications} idea in three important ways: (i) the user feedback (preferences with justifications) is provided \textit{offline} over trajectory segments, instead of online over state-action pairs; (ii) preferences with justifications are used to train a reward model, instead of directly optimising a policy; and (iii) multiple safety justifications can be provided, instead of only a single one. We find that justifications, during deployment, can enhance user-prescribed aspects of safety, with potential trade-offs in performance or other aspects.\footnote{This paper is an extended and substantially revised version of the conference paper \cite{kazantzidis2026safe}, which was presented at ICAART 2026 (\url{https://icaart.scitevents.org/?y=2026}). Figures \ref{fig:obst_car}, \ref{fig:impact_preferences}--\ref{fig:mpc_dreams}, and Tables \ref{tab:equivalence}, \ref{tab:times} and \ref{tab:mdnrnn} are reproduced from \cite{kazantzidis2026safe}, while Figures \ref{fig:method}, \ref{fig:obs_env_ex}--\ref{fig:tsne}, and Tables \ref{tab:vae_configuration}, \ref{tab:sparse_model} and \ref{tab:pref_mod} are adapted from \cite{kazantzidis2026safe}, as indicated in their captions.}
\end{enumerate}

\section{Related Work}
\label{sec:related_work}

Our research regarding safety has a similar motivation with the large body of literature in Safe RL which does not rely on human input \cite{garcia2015comprehensive,gu2024review,achiam2017constrained,ge2019safe,chow2018lyapunov,cheng2019end,luo2021learning,turchetta2020safe,wachi2021safe,yu2022towards,wachi2023safe,manishapo}. These methods formulate safety requirements as constraints that must hold during both training and deployment, using the Constrained Markov Decision Process (CMDP) framework \cite{altman1999constrained}. However, they assume, in contrast to our approach, the availability of an explicitly specified (well-defined) environmental reward that defines the optimisation objective. Our method also connects with techniques that integrate human input \cite{ng2000algorithms,knox2009interactively,brown2018risk,warnell2018deep,brown2019extrapolating,guo2022mtirl,li2025reinforcement}, and particularly human preferences, the most common type of feedback for aligning agent behaviour with human goals \cite{christiano2017deep,biyik2018batch,lee2021pebble,park2022surf,kim2023preference,karlaus2025tempo,liu2026safe}. However, most of these assume learning within perfect simulators (known dynamics function), without studying safety. Conversely, methods that the user actively oversees the agent helping it to avoid unsafe states while learning \cite{saunders2018trial,goecks2019efficiently,frye2019parenting,kazantzidis2022train} struggle with scaling and generalisation. Learning from human feedback has also received remarkable attention in large language models over the last couple of years, including safety aspects \cite{ouyang2022training,gu2023human,rafailov2023direct,shi2024human,lou2024safe,dai2024safe}.

Our work is related to \textit{world models}. As explained in \cite{ha2018recurrent}, a world model can be built from an offline dataset of reward-free trajectories, and consists of a \textit{vision} component that compresses a high-dimensional image input into a condensed latent-space representation (encoder), and a \textit{memory} component that predicts the next latent state based on historical information (predictor), essentially forming a dynamics model inside a virtual environment. It is a learned model of the real world (effectively a learned simulator), with the special characteristic of having the two aforementioned components. As such, we also refer to it here as \textit{dream world} or even \textit{dream} (depicted in Step 1 of Figure \ref{fig:method} with a cloud). With the dynamics model we can: simulate a lot of actions internally without needing costly real-world interactions; and also plan, i.e.\ `imagine' future states and evaluate strategies before acting.\footnote{We note that a world model is not a perfect simulator or a video generation system that models too many details and is therefore poor for planning; instead, it predicts in an abstract representation space and enables efficient planning \cite{lecun2022path}.} Furthermore, in \cite{ha2018recurrent} it is shown that an RL policy could be trained in the dream world and successfully transferred to the real. However, the RL reward function was defined as a simple time-dependent function. In contrast, we learn a fresh reward model from human feedback in the dream. More recent developments in world models have enhanced long-term prediction stability and scalability in complex environments \cite{hafner2019dream,hafner2019learning,hafner2020mastering,as2022constrained,hafner2025mastering,huang2024safedreamer}. However, instead of using an offline dataset of prior trajectories as in our case, they rely on potentially-unsafe online data collection from interactions of the agent in the real environment. Moreover, they often assume that the environment can provide handcrafted rewards, instead of having to learn a reward model from human input as in our setting. Similarly, approaches in offline RL, such as \cite{rafailov2021offline}, demonstrate that policies can be trained entirely from offline datasets using latent space models and transferred to the real environment, though they assume the handcrafted rewards are available in the dataset trajectories. This differs once again from our approach, where we learn a reward model from human input. Finally, we note that although world models constitute a powerful and promising tool for prediction and planning, their training typically demands vast amounts of data and substantial computational resources. For instance, the state‑of‑the‑art V‑JEPA 2 model \cite{assran2025v} was pretrained on over one million hours of video and required refinement on robot interaction data.

The method closest to ours is ReQueST \cite{reddy2020learning,rahtz2022safe}, which involves learning a dynamics model from an offline dataset of prior reward-free trajectories, and then learning a reward model from human feedback on trajectory segments generated through the dynamics model. However, in ReQueST, the iterative process of generating trajectory segments, providing human feedback on them, training a reward model, and then generating new segments based on the improved reward model, is continuously repeated. The trajectory segments in order to be informative are synthesised by optimising proxies that maximise the value of information. This is conducted by gradient-based optimisation, and is computationally expensive, and as such hard to scale in complex tasks that require long-length segments for planning. Thus, the user has to remain on a long-lasting loop. Also, MoP-RL \cite{liu2023efficient} adopts the idea from ReQueST using preferences. However, it maintains the same issue as it follows the same iterative process with the difference that it generates trajectory segments through model-based RL in the simulated environment using the Cross-Entropy Method (CEM) \cite{botev2013cross}. That adds another significant overhead. Instead, our approach expedites the trajectory segment generation by enabling the user to interact with the learned simulator and safely generate informative trajectories in an one-shot manner. Additionally, the above methods, although they learn safely, they do not target safety during deployment. In contrast, DROPJ does that by accompanying preferences with safety justifications.

\section{Preliminaries}
\label{sec:preliminaries}

\subsubsection{Learning a Reward Model from Preferences}
In the traditional approach of learning a reward model from human preferences \cite{christiano2017deep}, a segment $\sigma$ of length $k$ is defined as a sequence of interleaved states and actions $(s_t, a_t, \dots, s_{t+k}, a_{t+k})$. For any two segments, $\sigma^0$ ($1^{\mathit{st}}$ segment) and $\sigma^1$ ($2^{\mathit{nd}}$ segment), a human provides a preference $P\in \{ 1^{\mathit{st}}, \mathit{equ}, 2^{\mathit{nd}}\}$, where $P=1^{\mathit{st}}\iff \sigma^0 \succ \sigma^1$ (preference for $\sigma^0$), $P=2^{\mathit{nd}}\iff \sigma^1 \succ \sigma^0$ (preference for $\sigma^1$) and $P=\mathit{equ}\iff \sigma^0 \sim \sigma^1$ (indifference).  $P$ is mapped to the preference ground-truth label $\mu \in \{0, 0.5, 1\}$, where $1$ and $0$ denote preference for $\sigma^0$ and $\sigma^1$, respectively, and $0.5$ denotes indifference. The answer is stored in a dataset $\mathcal{T}$, as a triple $(\sigma^0, \sigma^1, \mu)$. Using the Bradley-Terry model \cite{bradley1952rank}, the predicted preference probabilities can be modelled as:
\[
p[P= 1^{\mathit{st}}] =
\frac{\exp \sum_t \hat{r}(s_t^0, a_t^0)}{\sum\limits_{i \in \{0,1\}} \exp \sum_t \hat{r}(s_t^i, a_t^i)} \ \ \text{and} \ \
p[P=2^{\mathit{nd}}]=\frac{\exp \sum_t \hat{r}(s_t^1, a_t^1)}
{\sum\limits_{i \in \{0, 1\}} \exp \sum_t \hat{r}(s_t^i, a_t^i)}.
\]
\noindent Then, the preference reward model $\hat{r}=\hat{r}(s,a)$ can be learned by minimising the cross-entropy loss between these predicted probabilities and the preference labels:

\begin{equation}
\mathcal{L}^{\hat{r}}  = -  \sum_{ (\sigma^0, \sigma^1, \mu) \in \mathcal{T}} 
 \mu \log 
p[P= 1^{\mathit{st}}]
+ 
(1 - \mu) 
\log 
p[P= 2^{\mathit{nd}}]
\label{eq:rew_loss}
\end{equation}

\noindent Equation \ref{eq:rew_loss} will be used in Step 3 of DROPJ (Figure \ref{fig:method}) to train a reward model, but this time with justifications being integrated in $\mu$, and segments extracted from the dream world.

\subsubsection{Justifications in Prior Work}
\label{par:just_prior}
Justifications over preferences $J$ provide effectively an explanation for the preferred choice \cite{kazantzidis2022train,kazantzidis2022learning}. This results in a more information-rich preference label $\mu$, allowing for a larger set of discrete values within the $[0,1]$ range. In the case of a single justification (instead of multiple ones) related to safety (in addition to a \textit{default} justification that always exists) and state-action pairs (instead of segments), the user, besides a preference $P$, provides a $J\in \{\mathit{warning}, \mathit{no\_warning}\}$, where $\mathit{warning}$ means at least one proposed action is unsafe, and $\mathit{no\_warning}$ means both actions are safe. If $w_s, w_{\mathit{def}}\in(0.5,1]$ are the weights for the safety and the default justification (e.g.\ preference because the action simply takes the agent closer to a goal state), respectively, then a mapping $P \times J \rightarrow \mu$, where $\mu \in \{1-w_s, 1-w_{\mathit{def}}, 0.5, w_{\mathit{def}}, w_{s}\}$ and  $w_{s}> w_{\mathit{def}}$, can promote safer behaviour than the typical $P \rightarrow \mu$ mapping. For instance, a sensible choice of weights at $(w_{s}=1, w_{\mathit{def}}=0.75)$ would lead to: $\mu=1$ denoting total preference for the $1^{\mathit{st}}$ action because the $2^{\mathit{nd}}$ action is unsafe (i.e.\ $P=1^{\mathit{st}},J=\mathit{warning}$); $\mu=0.75$ denoting simple preference for the $1^{\mathit{st}}$ action because of a default reason (i.e.\ $P=1^{\mathit{st}},J=\mathit{no\_warning}$); and $\mu=0$ and $\mu=0.25$, conversely. Step 3 of DROPJ uses justifications with three key extensions: (i) user feedback (preferences with justifications) is provided offline over dream trajectory segments, instead of online over state-action pairs; (ii) $\mu$ is used as a preference label to train a reward model (Equation \ref{eq:rew_loss}), instead of directly optimising a policy; and (iii) multiple safety justifications can be provided, instead of only a single one.

\section{(Obstacle) Car Racing and Problem Setting}
\label{sec:car_problem}

In this research, we investigate safety using the Car Racing (CR) environment illustrated in Figure \ref{fig:car} \cite{brockman2016openai} and a more challenging version we created, which we call Obstacle Car Racing (OCR) (Figure \ref{fig:obst_car}). Both state and action spaces are continuous with states being RGB images and actions controlling steering, acceleration (aka.\ gas) and braking. The track is randomised in each trial, and the reward structure we used gives +10 for stepping on a new patch (tile) and -1 for veering onto grass or the kerb. This reward structure is not used for training, but \emph{only} for evaluation. In OCR, we include static potholes (aka.\ chuckholes) where the car slows down when it drives over them, and other `scripted' cars it can collide with. The goal is to visit as many new patches as possible, with the fewest number of steps on the grass/kerb, and fewest accidents with chuckholes and scripted cars.

\begin{figure}
     \centering
     \begin{subfigure}[b]{0.49\textwidth}
         \centering
         \includegraphics[width=\textwidth]{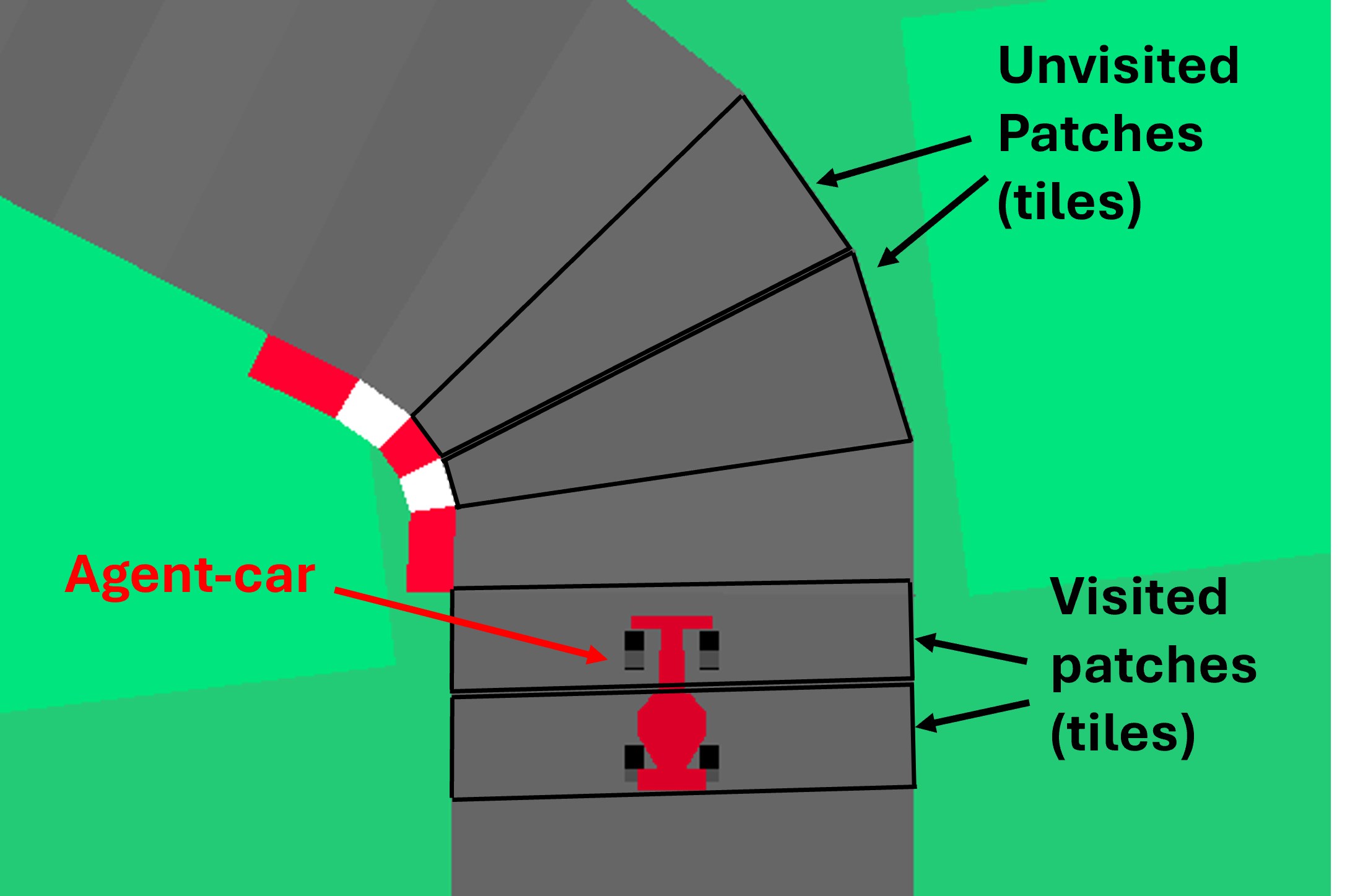}
         \caption{Car Racing environment}
         \label{fig:car}
     \end{subfigure}
    \hfill     
     \begin{subfigure}[b]{0.49\textwidth}
         \centering
         \includegraphics[width=\textwidth]{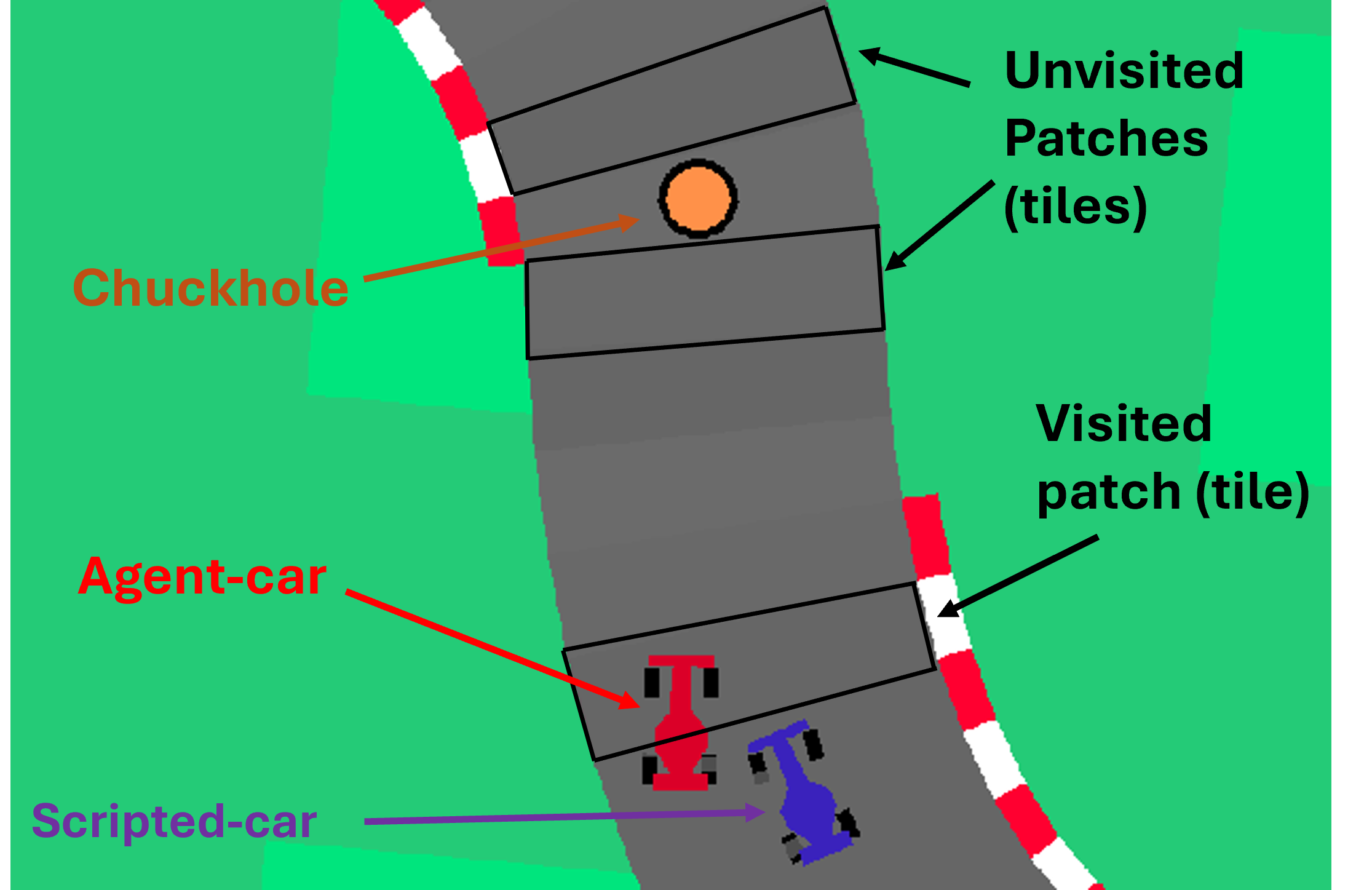}
         \caption{Obstacle Car Racing environment}
         \label{fig:obst_car}
     \end{subfigure}
     \hfill
        \caption{Environments used in this research \cite{kazantzidis2026safe}.}
        \label{fig:dropj_envs}
\end{figure}

Our \textbf{problem setting} is based on the following assumptions. The unsafe states (driving over grass or chuckholes, or hitting other cars) are unknown to the agent, and we have access to neither the state transition dynamics function of the environment (i.e.\ there is no way to sample experience for free), nor to a suitable (oracle) reward function. The state and action spaces are continuous, and the input is image-based. We assume recordings of prior real-world reward-free trajectories and access to a user with reasonable understanding of the environment who can answer queries \textit{offline} generated through a learned simulator. Querying the user is expensive, and hence we aim to bother them as little as possible. 

\section{DROPJ}
\label{sec:dropj}

Our aim in DROPJ is to build a reward model from human input and use it for planning during deployment. Table \ref{tab:notation_dropj} provides a quick reference for the symbols used in the following sections. DROPJ is split into four steps illustrated in Figure \ref{fig:method} and detailed in Algorithm \ref{alg:dropj}:
\begin{description}
    \item[Step 1] Learn a world model from real-world trajectories
    \item[Step 2] Create dream user trajectories in \textit{one-shot}
    \item[Step 3] Learn a reward model from user preferences and \textit{justifications}
    \item[Step 4] Deploy with MPC
\end{description}

\begin{table} [t]
\centering
\caption[Symbols used in DROPJ]{Symbols used in DROPJ.}
\label{tab:notation_dropj}  
\resizebox{\textwidth}{!}{ 
\begin{tabular}{lll}
\toprule
\textbf{Symbol} & \textbf{Description} & \textbf{Domain/Range} \\
\midrule
$s$ & Agent's state & $\mathcal{S} = \mathbb{R}^n$ \\
$a$ & Agent's action & $\mathcal{A}$ (continuous) \\
$\mathcal{R}$ & Offline dataset of prior real-world trajectories & - \\
$M$ & Number of prior real-world trajectories in $\mathcal{R}$ & $\mathbb{N}$ \\
$L$ & Length of a trajectory in $\mathcal{R}$ or $\mathcal{D}$ & $\mathbb{N}$ \\
$z$ & Latent state & $\mathcal{Z} = \mathbb{R}^d$ \\
$h$ & Memory state & $\mathcal{H}=\mathbb{R}^m$  \\
$\hat{s}$  & Predicted state & $\mathcal{S} = \mathbb{R}^n$  \\
$\hat{z}$ & Predicted latent state & $\mathcal{Z} = \mathbb{R}^d$ \\
$u$ & User & - \\
$a^{u}$ & User's action & $\mathcal{A}$ (continuous) \\
$\mathcal{D}$ & Dataset of dream user trajectories & - \\
$T$ & Number of dream user trajectories in $\mathcal{D}$ & $\mathbb{N}$ \\
$\hat{r}$ & Reward model & $\mathbb{R}$ \\
$(\sigma^{0},\sigma^{1})$ & Pair of dream trajectory segments & - \\ 
$K$ & Number of pairs of dream trajectory segments & $\mathbb{N}$ \\
$k$ & Length of dream trajectory segments & $\mathbb{N}$ \\
$P$ & Human preference & $\{1^{\mathit{st}}, \mathit{equ}, 2^{\mathit{nd}}\}$ \\
$\mathit{Safe}^0, \mathit{Safe}^1$ & Justifications for single safety justification case & $\{\mathit{Yes}, \mathit{No}\}$ \\
$w_s, w_{\mathit{def}}$ & Justification weights for single safety justification case & $(0.5, 1]$ \\
$N$ & Number of justifications for multiple justifications case & $\mathbb{N}$ \\
$J_1,\dots,J_N$ & Justifications for multiple justifications case & $\{0,1\}$ \\
$w_1, \dots, w_N$ & Justification weights for multiple justifications case & $(0.5, 1]$\\
$\mu$ & Preference label for single safety justification case & $\{1-w_s, 1-w_{\mathit{def}}, 0.5, w_{\mathit{def}}, w_{s}\}$ \\
& \phantom{Preference label} for multiple justifications case & $\{1 - w_1,  \dots, 1-w_N, 0.5, w_N, \dots,  w_1 \}$  \\
$\mathcal{T}$ & Dataset with triples $(\sigma^0,\sigma^1,\mu)$ & - \\
$S$ & Number of sensible random action sequences for MPC & $\mathbb{N}$ \\
$H$ & Planning horizon & $\mathbb{N}$ \\
$R$ & Number of steps executed until replanning & $\mathbb{N}$ \\

\bottomrule
\end{tabular}}

\end{table}

\begin{algorithm}
   \fontsize{8.1pt}{9.2pt}\selectfont
   \caption{DROPJ}
   
\begin{algorithmic}[1]

  \Require \
   
   \noindent \textbf{Training (Steps 1-3):} offline dataset $\mathcal{R}$ of $M$ real-world trajectories, each of $L$ transitions $(s, a, s')$, with $s, s'\in\mathcal{S}=\mathbb{R}^n, a\in\mathcal{A}$; user $u$; number of dream user trajectories $T$; number of queries to the user (pairs of dream trajectory segments created) $K$; length of segment $k$; justification weights $w_s, w_{\mathit{def}} \in(0.5, 1]$ for single safety justification, or $w_1, \dots, w_N \in(0.5, 1]$ for multiple justifications 
   
   \noindent \textbf{\textbf{Deployment (Step 4)}:} planning horizon $H$; number of steps until replanning $R$; number of sensible random action sequences $S$
   
   \Ensure reward model $\hat{r}$ (\textbf{training}); MPC policy $\pi_{\mathit{mpc}}$ (\textbf{deployment})

\vspace{1ex}
   
   \State \textbf{// Step 1: Learn a World Model}
   
   \State Train VAE encoder $f_{\mathit{enc}}\!:\! \mathcal{S}\! \to\! \mathcal{Z}$ and decoder $f_{\mathit{dec}}\!:\! \mathcal{Z} \!\to \!\mathcal{S}$ from $\mathcal{R}$, with $\mathcal{Z}=\mathbb{R}^d$ and $d \ll n$
   
   \State Train dynamics model from $\mathcal{R}$: an RNN $f_{\mathit{RNN}}\!:\! \mathcal{H} \times \mathcal{Z} \times \mathcal{A} \to \mathcal{H}$ and an MDN $f_{\mathit{MDN}}\!:\! \mathcal{H} \to \mathcal{Z}$

\vspace{1ex}
   
   \State \textbf{// Step 2: Collect Dream User Trajectories in One-shot}
   \For{\_ from $1$ to $T$}
      \For{\_ from $1$ to $L$}
      
       \State Reconstruct frame of last predicted latent state and display it to monitor $\hat{s}=f_{\mathit{dec}}(\hat{z})$

      \State Let user $u$ take an action $a^{u}$ 
      
       \State Run a step in simulation and generate next latent state $\hat{z}'=f_{\mathit{MDN}}(f_{\mathit{RNN}}(h, \hat{z}, a^{u}))$

      \State Store dream transition $\mathcal{D} \gets \mathcal{D} \cup ([h, \hat{z}], a^{u}, [h, \hat{z}]')$

      \EndFor
   \EndFor

\vspace{1ex}
    
   \State \textbf{// Step 3: Learn a Reward Model from Preferences and Justifications}
   \For{\_ from $1$ to $K$}
   
        \State Sample a pair of segments (uniformly) from the dream user trajectories $(\sigma^{0}, \sigma^{1}) \sim \mathcal{\mathcal{D}}$ 
        \State Reconstruct segments and show clips to user $(\hat{s}_t^i, \dots, \hat{s}_{t+k}^i)\!=\!f_{\mathit{dec}}((\hat{z}_t^i, \dots, \hat{z}_{t+k}^i)), \ \!\!i\!=\!0,\!1$
        \State Query user based on: Table \ref{tab:equivalence} to infer $\mu\!=\!\mu(\mathit{Safe}^0,\mathit{Safe}^1,P,w_s,w_{\mathit{def}})$ for single safety justification; Equation \ref{eq:gen_just} to infer $\mu\!=\!\mu(J_1,\dots,J_N,P,w_1,\dots,w_N)$ for multiple justifications\label{line:drop}
        \State Store the triple $\mathcal{T} \gets \mathcal{T} \cup (\sigma^0,\sigma^1,\mu)$
   \EndFor
    \State Train the reward model by minimising with gradient descent Equation \ref{eq:rew_loss}

\vspace{1ex}

   \State \textbf{// Step 4: Deployment with MPC}

   \State $s\gets$ initial state
   \State $\mathit{done}=\mathit{False}$
   \State $h \gets$ initialise with  0's
   \State \textbf{//} $\mathit{return}=0$ for evaluation
   \While{$\mathit{not \ done}$}
       \State $z=f_{\mathit{enc}}(s)$
        \State \textbf{//} Running sample-based MPC planning
        \If{$\mathit{plan\_step} \% R == 0$}
             \State Generate $S$ sensible random action  sequences: \State \hspace{1em} $\mathcal{A}_{\mathit{mpc}} = \{ \mathbf{a}_1, \mathbf{a}_2, \dots, \mathbf{a}_S \}$, where $\mathbf{a}_i = (a_{i0}, a_{i1}, \dots, a_{iH})$
             \For{each $\mathbf{a}_i \in \mathcal{A}_{\mathit{mpc}}$}
                \State $h_{i0}=h, \ z_{i0}=z$
                \State Use the dynamics model to simulate the trajectory:
                 \State \hspace{1em}$([h_{i0}, z_{i0}],a_{i0}, \ldots, [h_{iH}, z_{iH}],a_{iH})$
                 \State Compute the predicted return of the trajectory: 
                 \State \hspace{1em} $R_i = \sum_{t=0}^{H} \hat{r}([h_{it}, z_{it}], a_{it})$
            \EndFor
            \State Select the action sequence of the trajectory with the highest predicted return:
            \State \hspace{1em}$\mathbf{a}^* = \arg\max_{\mathbf{a}_i \in \mathcal{A}_{\mathit{mpc}}} R_i$                
            \State Keep the first $R$ actions of the chosen action sequence:
            \State \hspace{1em} $\pi_{\mathit{mpc}}(s) = (a^*_1, a^*_2, \ldots, a^*_R)$
            \State Reset the planning step counter:
            \State \hspace{1em} $plan\_step=0$
        \EndIf
        \State$a = \pi_{\mathit{mpc}}(s)[\mathit{plan\_step}]$ \textbf{//} Execute the next action from the planned sequence
        \State$\mathit{plan\_step} = \mathit{plan\_step} + 1$ 
        \State $s, \mathit{done} \gets \mathit{env\_step}(a)$ \textbf{//} return $\mathit{reward}$ for evaluation
        \State  \textbf{//} $\mathit{return}=\mathit{return} + \mathit{reward}$ for evaluation
        \State $h=f_{\mathit{RNN}}([h, z], a)$
   \EndWhile

\end{algorithmic}
\label{alg:dropj}
\end{algorithm}

\subsection{Learning a World Model}
\label{sec:learn_world_model}

More formally, in \textbf{Step 1} (Line 1 in Algorithm \ref{alg:dropj}), we assume an available offline dataset $\mathcal{R}$ of $M$ past real-world trajectories of $L$ state-action transitions $(s, a, s')$ each, with states $s$ and next states $s'$ in a high-dimensional, continuous space (such as images) $\mathcal{S} = \mathbb{R}^n$ and actions $a$ also in a continuous space $\mathcal{A}$.\footnote{The image-based trajectories extracted for Step 1 from the (O)CR original environment are treated as analogous to trajectories that could be collected in a real-world scenario, such as by teleoperating a physical robot with a top-down camera --- thus, we use the term \textit{real-world} trajectories, to distinguish them from \textit{dream-world} trajectories that will be extracted in Step 2 inside the learned world model.} The overall number of state-action transitions $M\times L$ should be large enough and $\mathcal{R}$ should have enough state and action coverage to build a sufficiently good world model.\footnote{A sufficiently good world model is one that captures the environment dynamics well enough to support planning during deployment. In Section \ref{sec:world_model_abl}, we describe how we determined that.} For this, apart from safe trajectories, it should also include past failures or some controlled unsafe trajectories (e.g.\ a vehicle gently stopping against foam barriers or a robot dropping plastic instead of porcelain cups).\footnote{For environments with catastrophic states, such as a car close to a cliff, unsafe states are considered the ones `close to the cliff'. Thus, they can still be observed via controlled procedures.} Initially, using $(s,s')$, the vision component of the world model is trained, which is a variational auto-encoder (VAE) \cite{kingma2013auto} with: an encoder $f_{\mathit{enc}}:\mathcal{S} \to \mathcal{Z}$, transforming $s$ to a latent state $z \in \mathcal{Z}$, where $\mathcal{Z}=\mathbb{R}^d$  is a low-dimensional latent space ($d \ll n$); and a decoder $f_{dec}:\mathcal{Z} \to \mathcal{S}$, reconstructing an image $s$ from a latent state $z$ (Line 2). Then, the encoded $(s, a, s')$ transitions from $\mathcal{R}$ train the memory component, which is a Recurrent Neural Network (RNN) \cite{hochreiter1997long} $f_{\mathit{RNN}}:\mathcal{H} \times \mathcal{Z} \times \mathcal{A} \to \mathcal{H}$, providing the memory state $h\in\mathcal{H}=\mathbb{R}^m$. Further, $h$ is fed into a Mixture Density Network (MDN) \cite{bishop1994mixture} $f_{\mathit{MDN}}:\mathcal{H} \to \mathcal{Z}$. The MDN models a probability distribution over the next latent state $p(z' \mid h)$ by parameterising a mixture of Gaussians, from which the next latent predicted state $\hat{z}'$ can be sampled in its output. If $\hat{z}'$ is fed back as input to the RNN, we essentially have a learned dynamics model (\textit{predictor}) in the latent space $\mathcal{Z}$ (Line 3).\footnote{We note that our focus in this work was the evaluation of contributions from Steps 2 and 3 of DROPJ, so we decided to use a simple architecture of the dynamics model.}

\subsection{Collecting Dream User Trajectories (the One-shot Technique)}
\label{sec:learning_oneshot}

In \textbf{Step 2} (Line 4), instead of creating trajectory segments for user feedback via an iterative computationally-expensive process, we use a more efficient idea. Since a prediction $\hat{z}$ can be fed, not only back to the RNN as the next latent state, but also back to the VAE decoder to be reconstructed as $\hat{s}=f_{\mathit{dec}}(\hat{z})$ and observed by a user $u$ in a monitor (Line 7), $u$ can take actions $a^{u}$ in the dream world (`play the game' in the learned simulator; Lines 8--9), and in one-shot create $T$ dream user trajectories of $L$ state-action transitions $([h,\hat{z}],a^{u},[h,\hat{z}]')$, stored in a dataset $\mathcal{D}$ (Line 10). The memory $h$ should also be stored, by being concatenated with $\hat{z}$ to form the dream state $[h,\hat{z}]$. During this step, $u$ can explore states that would otherwise be tough to discover, gathering a sufficiently diverse set of examples. They should also deliberately visit unsafe states. That is, since $u$ can use their understanding of the task, the dream user trajectories in $\mathcal{D}$ are gathered in one-shot and quickly (no need for a large $T$), and they are also more informative than the ones generated from a costly iterative process based on trajectory optimisation.

Considering this one-shot technique and its scalability in practice, for robot navigation, the same way a user teleoperates a real robot via the keyboard or joystick, they can also do it in the learned simulator. For instance, in the (O)CR environment, we simply used the arrow keys from the keyboard to play the game in the learned simulator, exactly as in the real environment. For more advanced scenarios, such as robotic manipulation, leap motion technologies can enable a user to guide the robot's arm within the learned simulator. For large-scale vehicles, full teleoperation driving setups widely used in autonomous vehicle testing could be employed. In general, any setup, used in the real world to gather real-world trajectories in Step 1, would then be used safely in the learned dream world to gather diverse dream-world trajectories in Step 2.

\subsection{Learning a Reward Model from User Preferences and Justifications}
\label{sec:justifications_dream}

In \textbf{Step 3} (Line 11), the goal is to learn a reward model $\hat{r}=\hat{r}([h,z],a)$ from human feedback. In theory, we can use any form of feedback, exploiting the benefits of the one-shot technique from Step 2. However, to achieve aligned and safe agent behaviour during deployment, we propose learning from preferences with \textit{safety justifications}. Initially, we create $K$ pairs of dream trajectory segments $(\sigma ^{0},\sigma ^{1})$. A dream segment of length $k$ is now $\sigma = ( [h_t, \hat{z}_t], a_t^u, \dots, [h_{t+k}, \hat{z}_{t+k}], a_{t+k}^u)$, sampled uniformly at random from $\mathcal{\mathcal{D}}$ (Line 13).\footnote{Other sampling techniques were possible \cite{lee2021b}, but, from initial investigation, they offered no significant advantage.} The video clips of a pair of dream segments are shown to the human after being passed through the VAE decoder: $(\hat{s}_t^i, \dots, \hat{s}_{t+k}^i)=f_{\mathit{dec}}((\hat{z}_t^i, \dots, \hat{z}_{t+k}^i)), \ i=0,1$ (Line 14). Now, a preference $P$ could have been elicited from the human as in the typical reward learning from human preferences \cite{christiano2017deep}. 

However, now we also elicit safety justifications in order to infer the preference label $\mu$ (Line 15). For a single safety justification, we use an offline query protocol for trajectory segments, equivalent to the online one described in Section \ref{sec:preliminaries} for state-action pairs. Specifically, to infer the ground-truth $\mu$ we query the user: (i) whether $\sigma ^{0}$ is safe, i.e.\ $\mathit{Safe}^0 \in \{\mathit{Yes}, \mathit{No} \}$; (ii) whether $\sigma ^{1}$ is safe, i.e.\ $\mathit{Safe}^1  \in \{\mathit{Yes}, \mathit{No} \}$; and only if both answers are $\mathit{Yes}$, we query (iii) which one they prefer, i.e.\ $P\in \{1^{\mathit{st}}, \mathit{equ}, 2^{\mathit{nd}}\}$. Table \ref{tab:equivalence} shows the equivalence between the mappings $P \times J \rightarrow \mu$ and $\mathit{Safe}^0 \times \mathit{Safe}^1 \times P \rightarrow \mu$. We have $\mu \in \{1 - w_s, 1 - w_{\mathit{def}}, 0.5, w_{\mathit{def}}, w_s\}$, where $w_s$ is the justification weight of the only safety justification and $w_{\mathit{def}}$ is the justification weight of a default justification related to any other reason. When not both $\mathit{Safe}^0$ and $\mathit{Safe}^1$ are $\mathit{Yes}$, we can automatically infer $\mu$. Although the two protocols seem similar in terms of human burden and the online one could have been used by referring to trajectory segments instead of state-action pairs, we chose the offline one, since it is more intuitive for queries answered offline. Figure \ref{fig:method} shows in Step 3 two examples with the offline protocol and a single safety justification. In the first of these examples, the $1^{\mathit{st}}$ (upper) clip is automatically preferred \textit{because} the car stays safe inside the road, in contrast to the $2^{\mathit{nd}}$ that veers onto the grass; in the second example, the $2^{\mathit{nd}}$ (bottom) clip is preferred \textit{because} the car covers a longer distance.

\begin{table}[t]
\caption{Equivalence between the online (warning ($w$) when at least one action is unsafe and no warning ($n$) when they are both safe) and offline (querying explicitly the safety of each segment) protocol for one safety justification \cite{kazantzidis2026safe}.} 
\centering
{
\setlength{\tabcolsep}{6mm}
\begin{tabular}{ll|lll|l}
\toprule
\multicolumn{2}{c|}{\textbf{Online}} & \multicolumn{3}{c|}{\textbf{Offline}} &  \\ \midrule
          $P$ & $ J$ & $ \mathit{Safe}^0$  & $ \mathit{Safe}^1$ & $P$ & $\mu$  \\ 
          \midrule
        $1^{\mathit{st}}$ & $w$ & $\mathit{Yes}$ & $\mathit{No}$ & -- & $w_s$ \\ 
        $2^{\mathit{nd}}$ & $w$ & $\mathit{No}$ & $\mathit{Yes}$ & -- & $1-w_s$ \\ 
        $\mathit{equ}$ & $w$ & $\mathit{No}$ & $\mathit{No}$ & --  & $0.5$ \\ 
        $1^{\mathit{st}}$ & $n$ & $\mathit{Yes}$ & $\mathit{Yes}$ & $1^{\mathit{st}}$ & $w_{\mathit{def}}$ \\ 
        $2^{\mathit{nd}}$ & $n$ & $\mathit{Yes}$ & $\mathit{Yes}$ & $2^{\mathit{nd}}$ & $1-w_{\mathit{def}}$ \\
        $\mathit{equ}$ & $n$ & $\mathit{Yes}$ & $\mathit{Yes}$ & $\mathit{equ}$ & $0.5$ \\
          \bottomrule
\end{tabular}
}
\label{tab:equivalence}
\end{table}

\subsubsection{A General Framework for Multiple Justifications} For environments with multiple safety factors, we can generalise the framework to $N$ justifications $J_1, \dots, J_N \in \{0,1\}$ (meaning whether a justification is active or not), with corresponding weights $w_1, \dots, w_N \in(0.5, 1]$. We can combine different justifications in one formula, prioritising them by importance (severity). For example, in the Obstacle Car Racing environment we can consider three safety justifications $J_{\mathit{grass}}, J_{\mathit{chuck}}, J_{\mathit{car}}$ related to avoiding contact with grass, chuckholes and other cars. Additionally, we also always include a default justification $J_{\mathit{def}}$ related to any other reason for preferring one segment over another, such as driving faster to cover more tiles or taking a challenging sharp turn. Then, if we believe that avoiding crashing cars is at least as important (severe) as driving over a chuckhole, avoiding hitting chuckholes is at least as important as driving on grass, and avoiding driving on the grass is at least as important as any other default objective, we could set the justification weights in the following way: $w_{\mathit{car}} \geq w_{\mathit{chuck}} \geq w_{\mathit{grass}} \geq w_{\mathit{def}}$. Although different formulas that combine the justification weights of active justifications could be devised, a straightforward and reasonable setup is to consider only the most severe active justification. That is, in an example that the agent-car is shown to end up hitting another car while being on top of grass during some steps, the reason (justification) for rejecting that clip over another one, would be explicitly the car crash. In that case the preference label $\mu$ should be determined explicitly from $w_{\mathit{car}}$. Using the indicator function $\mathbb{I}(\cdot)$ for the above example, this can be translated to the formula which gives the preference label:

{
\fontsize{9.6pt}{12pt}\selectfont
\begin{equation}
\begin{aligned}
\mu = & \, \mathbb{I}(J_{\mathit{car}} = 1) \\
& \hspace{1em} \cdot \left[ w_{\mathit{car}} \cdot \mathbb{I}(P = 1^{\mathit{st}}) + 0.5 \cdot \mathbb{I}(P = \mathit{equ}) + (1 - w_{\mathit{car}}) \cdot \mathbb{I}(P = 2^{\mathit{nd}}) \right] \\
& + \, \mathbb{I}(J_{\mathit{car}} = 0) \cdot \mathbb{I}(J_{\mathit{chuck}} = 1) \\
& \hspace{1em} \cdot \left[ w_{\mathit{chuck}} \cdot \mathbb{I}(P = 1^{\mathit{st}}) + 0.5 \cdot \mathbb{I}(P = \mathit{equ}) + (1 - w_{\mathit{chuck}}) \cdot \mathbb{I}(P = 2^{\mathit{nd}}) \right] \\
& + \, \mathbb{I}(J_{\mathit{car}} = 0) \cdot \mathbb{I}(J_{\mathit{chuck}} = 0) \cdot \mathbb{I}(J_{\mathit{grass}} = 1) \\
& \hspace{1em} \cdot \left[ w_{\mathit{grass}} \cdot \mathbb{I}(P = 1^{\mathit{st}}) + 0.5 \cdot \mathbb{I}(P = \mathit{equ}) + (1 - w_{\mathit{grass}}) \cdot \mathbb{I}(P = 2^{\mathit{nd}}) \right] \\
& + \, \mathbb{I}(J_{\mathit{car}} = 0) \cdot \mathbb{I}(J_{\mathit{chuck}} = 0) \cdot \mathbb{I}(J_{grass} = 0) \cdot \mathbb{I}(J_{\mathit{def}} = 1) \\
& \hspace{1em} \cdot \left[ w_{\mathit{def}} \cdot \mathbb{I}(P = 1^{\mathit{st}}) + 0.5 \cdot \mathbb{I}(P = \mathit{equ}) + (1 - w_{\mathit{def}}) \cdot \mathbb{I}(P = 2^{\mathit{nd}}) \right]
\end{aligned}
\label{eq:gen_just_example}
\end{equation}
}

\noindent With this formula, $\mu$ is determined from the most severe active justification. For example, if there is a car crash in one of the two clips (regardless of the presence of any other active justification), then $\mathbb{I}(J_{\mathit{car}} = 1)=1$, $\mathbb{I}(J_{\mathit{car}} = 0)=0$ and $\mu$ is determined only from the second row of Equation \ref{eq:gen_just_example}. We note that it is always $\mathbb{I}(J_{\mathit{def}} = 1) = 1$, as there should be at least a default reason for preferring a segment over another. This formula can be generalised to an arbitrary number $N$ of justifications. With $J_N$ being the default justification (i.e.\ $\mathbb{I}(J_{\mathit{N}} = 1) = 1$; not necessarily related to safety), $\mathbb{I}(\cdot)$ the indicator function, and with decreasing safety severity ($w_1 \geq w_2 \geq \dots \geq w_N$), the preference label is:

{
\fontsize{9.6pt}{12pt}\selectfont
\begin{equation}
\begin{aligned}
\mu = & \, \mathbb{I}(J_1 = 1) \\
& \hspace{1.1em} \cdot \left[ w_1 \cdot \mathbb{I}(P = 1^{\mathit{st}}) + 0.5 \cdot \mathbb{I}(P = \mathit{equ}) + (1 - w_1) \cdot \mathbb{I}(P = 2^{\mathit{nd}}) \right] \\
& + \, \mathbb{I}(J_1 = 0) \cdot \mathbb{I}(J_2 = 1) \\
& \hspace{1.1em} \cdot \left[ w_2 \cdot \mathbb{I}(P = 1^{\mathit{st}}) + 0.5 \cdot \mathbb{I}(P = \mathit{equ}) + (1 - w_2) \cdot \mathbb{I}(P = 2^{\mathit{nd}}) \right] \\
& + \, \mathbb{I}(J_1 = 0) \cdot \mathbb{I}(J_2 = 0) \cdot \mathbb{I}(J_3 = 1) \\
& \hspace{1.1em} \cdot \left[ w_3 \cdot \mathbb{I}(P = 1^{\mathit{st}}) + 0.5 \cdot \mathbb{I}(P = \mathit{equ}) + (1 - w_3) \cdot \mathbb{I}(P = 2^{\mathit{nd}}) \right] \\
& +  \dots \\
& + \, \mathbb{I}(J_1 = 0) \cdot \mathbb{I}(J_2 = 0) \cdot \dots \cdot \mathbb{I}(J_{N-1} = 0) \cdot \mathbb{I}(J_N = 1) \\
& \hspace{1.1em} \cdot \left[ w_N \cdot \mathbb{I}(P = 1^{\mathit{st}}) + 0.5 \cdot \mathbb{I}(P = \mathit{equ}) + (1 - w_N) \cdot \mathbb{I}(P = 2^{\mathit{nd}}) \right]
\end{aligned}
\end{equation}
}

\noindent or more concisely:

{
\fontsize{9.6pt}{12pt}\selectfont
\begin{equation}
\begin{split}
\mu = \sum_{k=1}^{N}  \Bigg[
    &\left( \prod_{i=1}^{k-1} \mathbb{I}(J_i = 0) \right) 
    \cdot  \mathbb{I}(J_k = 1) \cdot \\
    & \left( 
        w_k  \cdot  \mathbb{I}(P = 1^{\mathit{st}}) + 
        0.5 \cdot \mathbb{I}(P = \mathit{equ}) + 
        (1 - w_k) \cdot \mathbb{I}(P = 2^{\mathit{nd}}) 
    \right)
\Bigg],
\end{split}
\label{eq:gen_just}
\end{equation}
}

\noindent with $\mu \in \{1 - w_1, 1-w_2, \dots, 1-w_N, 0.5, w_N, \dots, w_2, w_1 \}$. Equivalently, and to minimise human burden, in our experiments we elicit only the most severe justification's \textit{name}. For instance, Figure \ref{fig:obs_env_ex} shows the GUI of the Obstacle Car Racing environment with multiple justifications, in an example where the correct answer $J = \mathit{Car}$, $P=2^\mathit{nd}$ (Right clip), makes $\mu=1-w_{\mathit{car}}$. 

\begin{figure}[ht]
    \centering
    \includegraphics[width=0.65\textwidth]{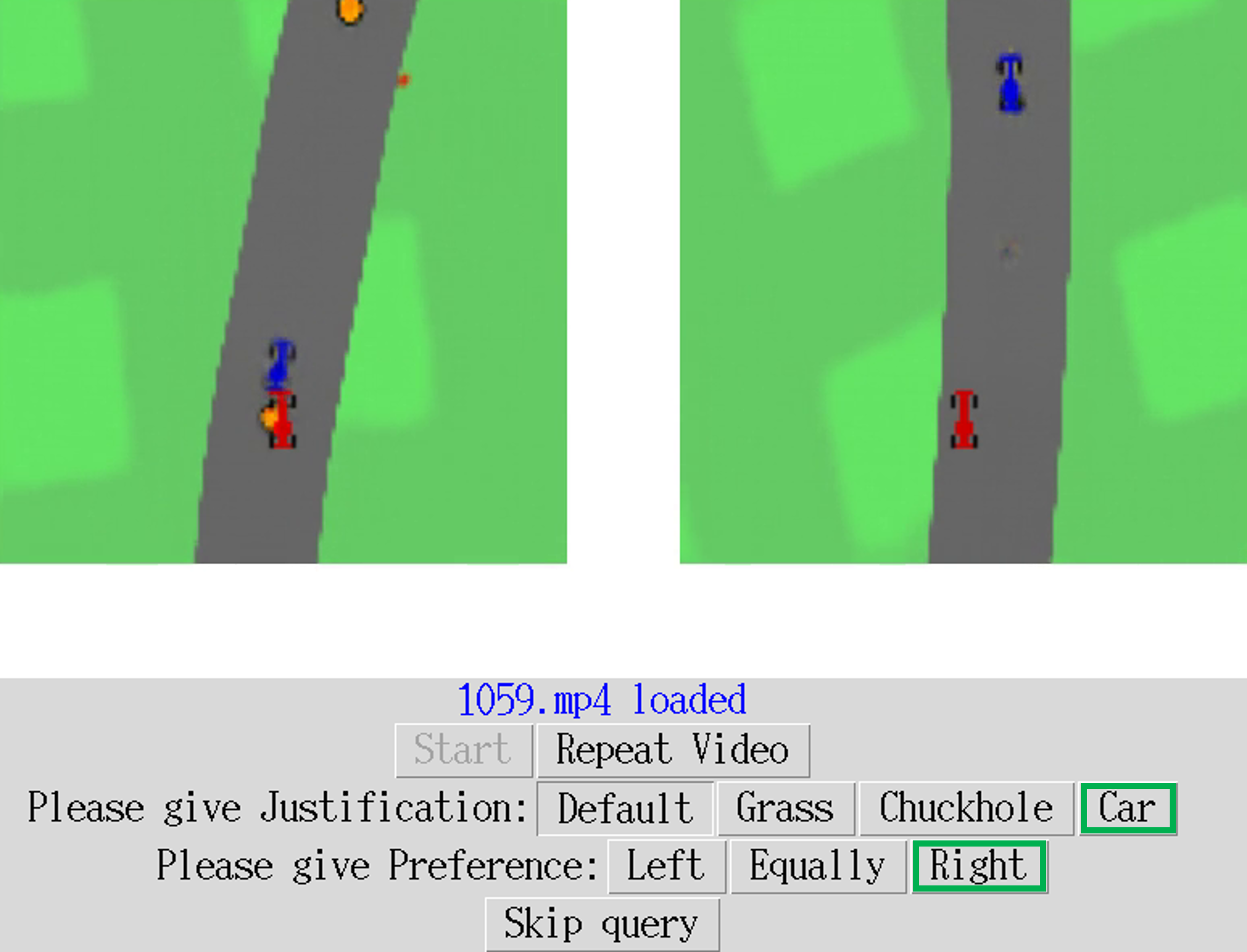}
    \caption{GUI of Obstacle Car Racing with multiple justifications \cite{kazantzidis2026safe}. $J = \mathit{Default}$ is always pre-selected for brevity. The left agent-car contacts both a chuckhole and another car, while the right agent-car stays safe on road. Thus, the correct answer is $J = \mathit{Car}$, $P=2^\mathit{nd}$ (Right).}
    \label{fig:obs_env_ex}
\end{figure}

Finally, for every complete answer from the human, a triple $(\sigma^0,\sigma^1,\mu)$ is stored in $\mathcal{T}$ (Line 16), and $\mathcal{T}$ is used to train the reward model $\hat{r}$, exactly as in Equation \ref{eq:rew_loss} (Line 17). Now a safety objective is encoded in the learned reward model, which prioritises it over other performance objectives (such as driving fast in order to cover more track tiles) or less severe safety objectives. Section \ref{sec:theor_supp} provides support of this claim.

\subsection{Deployment with MPC}
\label{sec:deployment}

Finally, in \textbf{Step 4} (Line 18), an MPC policy $\pi_{\mathit{mpc}}$ can be deployed directly in the real world by planning with the learned dynamics and reward models. Although an RL policy can be learned with $\hat{r}$ in the virtual environment and transferred to the real world, this requires additional training time, whereas MPC tends to work efficiently in practice. In common with prior work \cite{reddy2020learning, rahtz2022safe}, we use a sampling-based approach for MPC planning. Being in the real state $s$ (Line 43), the controller is given the current encoded state $z=f_{\mathit{enc}}(s)$ (Line 24) and the last memory state $h$ (Line 45). The controller generates $S$ sensible random action sequences $\mathcal{A}_{\mathit{mpc}} = \{\mathbf{a}_1, \mathbf{a}_2, \ldots, \mathbf{a}_S\}$, where $\mathbf{a}_i = (a_{i0}, a_{i1}, \ldots, a_{iH})$ represents a sequence of actions over the planning horizon $H$ (Lines 27--28). Executing them with the dynamics model creates the `imagined' trajectories $\{([h_{i0}, z_{i0}],a_{i0}, \ldots, [h_{iH}, z_{iH}],a_{iH}): h_{i0}=h, \ z_{i0}=z, \ i=1,\ldots,S\}$ (Lines 31--32). The predicted return of each trajectory $i$ is: $R_i = \sum_{t=0}^{H}  \hat{r}([h_{it}, z_{it}], a_{it})$ (Lines 33--34) and the action sequence of the highest-return trajectory is chosen: $\mathbf{a}^* = \arg\max_{\mathbf{a}_i \in \mathcal{A}_{mpc}} R_i$ (Lines 35--36). From $\mathbf{a}^* = (a^*_1, a^*_2, \ldots, a^*_H)$, only the first $R$ actions are executed in the real environment $\pi_{\mathit{mpc}}=(a^*_1, a^*_2, \ldots, a^*_R)$ (Lines 37--38, 41, 43), until MPC replans in the same way.

\section{The Impact of Justifications on Safety}
\label{sec:theor_supp}

In this section, we prove that with learning from safety justifications the resulting reward model is biased towards prioritising safe behaviour over other performance objectives. We show this for the single safety justification case, with the extension to multiple justifications being straightforward.

\begin{theorem}
    Reward models that are learned using safety justifications prioritise safe behaviour over other performance objectives.
\end{theorem}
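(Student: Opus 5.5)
We make the statement precise for the single safety justification case and then study what the cross-entropy loss of Equation~\ref{eq:rew_loss} forces on the reward model. Write $\Delta = \sum_t \hat r(s^0_t,a^0_t) - \sum_t \hat r(s^1_t,a^1_t)$ for the predicted return gap of a pair. Then $p[P=1^{\mathit{st}}] = \mathrm{sigmoid}(\Delta)$, and the contribution of a single triple $(\sigma^0,\sigma^1,\mu)$ to the loss is the binary cross-entropy $\ell_\mu(\Delta) = -\mu\log\mathrm{sigmoid}(\Delta) - (1-\mu)\log(1-\mathrm{sigmoid}(\Delta))$.

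The first step is a per-pair computation. The function $\ell_\mu$ is strictly convex in $\Delta$, with derivative $\mathrm{sigmoid}(\Delta)-\mu$. For $\mu\in(0,1)$ its unique minimiser is therefore $\Delta^\ast(\mu)=\log\frac{\mu}{1-\mu}$, the logit of $\mu$. When $\mu=1$ (i.e.\ $w_s=1$) there is no finite minimiser, and the loss drives $\Delta\to+\infty$; we treat this as a limiting case. Since the logit is strictly increasing, $w_s>w_{\mathit{def}}$ gives $\Delta^\ast(w_s)>\Delta^\ast(w_{\mathit{def}})>0$, and symmetrically for $1-w_s$ and $1-w_{\mathit{def}}$. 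By Table~\ref{tab:equivalence}, a pair labelled $w_s$ is one where $\sigma^0$ is safe and $\sigma^1$ is unsafe, whereas a pair labelled $w_{\mathit{def}}$ has both segments safe and differs only in performance. So at the optimum, the return margin the model must place between a safe and an unsafe segment strictly exceeds the margin it places between two safe segments that differ in performance. Under the plain $P\to\mu$ mapping both margins would be $\Delta^\ast(1)$ (or a common value), and no such separation would arise.

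The second step moves from a single pair to the whole dataset, and this is the main obstacle: one reward function must fit all pairs at once, so the per-pair optima need not be jointly attainable. We would first assume the model class is expressive enough to realise segment returns $G(\sigma)=\sum_t\hat r$ for arbitrary pairwise gaps, for example because each segment appears in few pairs, or by working directly with a tabular or segment-return parameterisation. Under this assumption the minimiser of Equation~\ref{eq:rew_loss} attains every $\Delta^\ast(\mu)$ simultaneously. Without that assumption, we would fall back to a population/consistency argument. Take pairs drawn from a fixed distribution and group them by segment type: (safe, unsafe) versus (safe, safe, better performance). The expected loss then decomposes, and the first-order conditions force the expected predicted probability on each group to equal that group's label. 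This gives $\mathbb{E}[\mathrm{sigmoid}(\Delta)\mid\text{safety pair}]=w_s>w_{\mathit{def}}=\mathbb{E}[\mathrm{sigmoid}(\Delta)\mid\text{default pair}]$.

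Finally we would interpret the margins as prioritisation. Compare a safe segment $\sigma_a$ against an unsafe segment $\sigma_b$ that is better in performance. The safety label contributes a return gap of $\Delta^\ast(w_s)$ in favour of $\sigma_a$, while the performance advantage of $\sigma_b$ is only ever trained to produce gaps of size $\Delta^\ast(w_{\mathit{def}})$. Hence $G(\sigma_a)>G(\sigma_b)$ whenever the safety margin dominates the performance margin, and this holds in particular for $w_s=1$ in the limit. The MPC of Section~\ref{sec:deployment} maximises $G$, so it selects safe sequences over faster unsafe ones. For multiple justifications, the same logit argument applied to Equation~\ref{eq:gen_just}, using $w_1\ge\dots\ge w_N$, yields ordered margins and hence a lexicographic-style priority.
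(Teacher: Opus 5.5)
Your route differs from the paper's. Both arguments rest on $\partial\ell_\mu/\partial\Delta=\mathrm{sigmoid}(\Delta)-\mu$. The paper only evaluates this at uncertain predictions ($p\approx 1/2$), where the residual $|p-\mu|$ is larger for safety-labelled pairs, and concludes that the loss acts like an importance-weighted objective. It makes no claim about the fitted model, so it never has to face joint attainability. You use the zero of the derivative instead, $\Delta^\ast(\mu)=\log\frac{\mu}{1-\mu}$. That step is correct and gives a sharper, quantitative per-pair statement. The gap is in the bridge from these per-pair targets to an actual prioritisation.

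\textbf{The population fallback.} Stationarity of the expected loss is a single vector condition, $\mathbb{E}[(\mathrm{sigmoid}(\Delta_\theta)-\mu)\nabla_\theta\Delta_\theta]=0$. It gives per-group calibration only if some parameter direction shifts $\Delta$ by a constant on that group and leaves every other pair unchanged, like an intercept on a group indicator in logistic regression. A Bradley--Terry model whose $\Delta$ is a difference of summed per-step rewards has no such direction. Lowering the reward of unsafe states moves $\Delta$ in proportion to the difference in unsafe-step counts. It also moves the (unsafe, unsafe) pairs labelled $0.5$, which your grouping omits. So $\mathbb{E}[\mathrm{sigmoid}(\Delta)\mid\text{safety pair}]=w_s$ does not follow.

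\textbf{The final step.} Your claim that performance advantages are \emph{only ever trained to produce gaps of size $\Delta^\ast(w_{\mathit{def}})$} is false, for two reasons.
\begin{itemize}
\item Every default pair gets that target regardless of how large its performance difference is. The fitted performance part of $G$ can therefore spread much further across segments, and default gaps add along chains $\sigma_1\prec\sigma_2\prec\sigma_3$.
\item Unsafe segments never appear in default pairs at all, so their performance advantage is pure extrapolation.
\end{itemize}
Hence \emph{whenever the safety margin dominates the performance margin} is the conclusion restated.

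\textbf{The $w_s=1$ limit.} This escape is not automatic either. In any parameterisation where unsafe steps share a penalty, the $0.5$-labelled (unsafe, unsafe) pairs with different unsafe-step counts incur loss growing linearly in that penalty. The joint optimum then keeps the safety margin finite.

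\textbf{The realisability assumption.} The assumption that rescues your second step, capacity to hit every target gap independently, is precisely the regime in which the fitted margins say nothing about the unseen imagined rollouts that MPC ranks.

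To close the argument, you need an explicit structural assumption, e.g.\ $G$ additive in a safety term and a bounded performance term. Otherwise, weaken the conclusion to the comparative, bias-type statement that the paper actually draws.
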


\begin{proof}
Let $\mu \in \{1 - w_s, 1 - w_{\mathit{def}}, 0.5, w_{\mathit{def}}, w_s\} \subset [0,1]$ be the preference label with: 

\begin{itemize}
    \item $w_s$: the weight assigned to preferences with a justification about safety (e.g.\ driving on the grass).
    \item $w_{\mathit{def}}$: the weight assigned to preferences with a default justification (e.g.\ driving faster).
    \item $w_s > w_{\mathit{def}} > 0.5$.
    \item $\mu = 0.5$ when both segments are unsafe, or are safe but the user expresses indifference.
\end{itemize}
Let the dataset $\mathcal{T}$ with triples of the queried segments and the preference labels $(\sigma^0, \sigma^1, \mu)$ be partitioned into:
\begin{itemize}
    \item $\mathcal{T}_{\mathit{safety}}$: samples of queries with at least one unsafe segment (justified by a safety factor).
    \item $\mathcal{T}_{\mathit{def}}$: samples of all other queries with both segments safe (justified by a performance factor unrelated to safety).
\end{itemize}

\noindent Then from Equation \ref{eq:rew_loss}, the cross-entropy loss minimised by the reward model becomes:

\begin{equation}
\begin{split}
\mathcal{L}^{\hat{r}}  =
&  -  \sum_{ (\sigma^0, \sigma^1, \mu_{s}) \in \mathcal{T}_{\mathit{safety}}} 
 \mu_{s} \log 
p[P= 1^{\mathit{st}}]
+ 
(1 - \mu_{s}) 
\log 
p[P= 2^{\mathit{nd}}] \\
& -  \sum_{ (\sigma^0, \sigma^1, \mu_{\mathit{def}}) \in \mathcal{T}_{\mathit{def}}} 
 \mu_{\mathit{def}} \log 
p[P= 1^{\mathit{st}}]
+ 
(1 - \mu_{\mathit{def}}) 
\log 
p[P= 2^{\mathit{nd}}]
\end{split}
\label{eq:rew_loss_2}
\end{equation}

\noindent where $\mu_{s}\in\{1 - w_s, 0.5, w_s\}$ and $\mu_{\mathit{def}}\in\{1 - w_{\mathit{def}}, 0.5, w_{\mathit{def}}\}$. Since $w_s > w_{\mathit{def}} > 0.5$, safety-justified preferences assign labels further from $0.5$ than non-safety ones. Around uncertain predictions, this introduces a larger gradient magnitude for the binary cross-entropy loss, 
$\left|p[P=1^{\mathit{st}}]-\mu\right|$, for samples with a safety justification. Thus, the loss puts more pressure on the reward model to fit such preferences, effectively behaving like an importance-weighted training objective. This means that the learned reward model will prioritise safety over other performance aspects, which leads to the safer behaviour observed in the experiments. \qed
\end{proof}

\section{Evaluation}
\label{sec:evaluation}

\subsection{Experimental Setup}
\label{sec:experimental_setup}

We evaluate DROPJ using the OpenAI Gym Car Racing environment \cite{brockman2016openai} and a new version, Obstacle Car Racing, that extends this with chuckholes and cars, details of which are provided in the Appendix. Except for a baseline, which is an agent trained with traditional RL, we ran real-user experiments to obtain more reliable results. For example, in Step 1 (Figure \ref{fig:method}) if the dataset of real-world trajectories $\mathcal{R}$ consisted only of random trajectories, that would not correspond to realistic past trajectories that we would normally have on a real-world task. Most importantly, in Step 3, simulating the human feedback is not possible, because the oracle-handcrafted reward function of the real environment (which is often used in the literature to simulate near-accurate simulated human feedback) does not correspond to the dream space, from which the queried segments are derived. The world model in Obstacle Car Racing trained for 42 hours in a High-Performance Computing cluster, with more details in the Appendix. Since our method could be evaluated only with real users for trusted results, the above environments were convenient to interact from a typical computer via the keyboard's arrow keys (`play the game'), and the GUIs for eliciting human feedback (Figures \ref{fig:sparse}--\ref{fig:gui_multi}). With this choice of environments we could aptly show the trade-offs among different performance objectives (driving fast) and safety requirements (avoiding grass, chuckholes and cars).

To assess the effectiveness of a method, we use the following metrics: (i) \textit{performance}, evaluated by the return during deployment; (ii) \textit{safety}, using crash rates for the grass (proportion of steps on the grass or kerb over the episode length $L{=}1000$), chuckholes (chuckholes stepped / chuckholes passed), and cars (cars crashed / cars passed); (iii) \textit{human burden}, measured by the number of queries the user answers, and more precisely, the overall time they spend on training; and (iv) \textit{computational cost}, i.e.\ the time the computer takes to run time-intensive computations, while the human has to `wait in the chair'.

All models are evaluated over 10 trials. We run the following algorithms:

\begin{enumerate}

\item \textbf{DRQV2} \cite{yarats2021image, yarats2021mastering}\textbf{;} a model-free RL method, efficient for image-based input, which incorporates augmentation strategies such as random shifts in observation space to regularise the learning process. We allow it to use the oracle reward function and use it as an upper bound.

\item \textbf{ReQueST} \cite{reddy2020learning}\textbf{;} after building the world model, it iterates over providing human feedback, training the reward model, and generating new trajectories via optimisation. The feedback type is \textit{sparse} reward labels (i.e.\ `good' for visiting a new tile, `unsafe' for stepping on grass or kerb and `neutral' for staying on already-visited tiles) to train a classifier-based reward model.

\item \textbf{DROS} (\textbf{D}ream-world \textbf{R}eward learning from \textbf{O}ne-shot \textbf{S}parse labels)\textbf{;} the user, in contrast to ReQueST, generates the dream trajectories (dataset $\mathcal{D}$ in Step 2) with the \textbf{O}ne-shot technique, and, in common with ReQueST, provides \textbf{S}parse labels.

\item \textbf{DROP} (\textbf{D}ream-world \textbf{R}eward learning from \textbf{O}ne-shot \textbf{P}references)\textbf{;} the user uses the \textbf{O}ne-shot technique (as in DROS), but provides \textbf{P}references (instead of sparse labels). When the choice of \textbf{e}quality (indifference) is included in the preference answer, we call the method \textbf{DROPe}.

\item \textbf{DROPJ} (\textbf{D}ream-world \textbf{R}eward learning from \textbf{O}ne-shot \textbf{P}references and \textbf{J}ustifications)\textbf{;} it is as \textbf{DROP}, with the pivotal difference of accompanying preferences with \textbf{J}ustifications.

\end{enumerate}

\noindent The precise protocols with examples based on which the user provided feedback by sparse labels or preferences (with justifications) are explained in the Appendix. Moreover, to create a sufficiently-good and realistic dataset $\mathcal{R}$ of real-world trajectories (assumed to already be in the problem setting), we played the game many times in the real environment ($M=600$ for CR and $M=800$ for OCR --- our strategy for stopping there is presented in Section \ref{sec:world_model_abl}), mostly safely exploring and sometimes taking unsafe actions to represent past failures or controlled unsafe examples. Finally, we used similar architectures and hyperparameters with \cite{ha2018recurrent} for the world model, with \cite{reddy2020learning} for the reward model from sparse labels and MPC, and with \cite{lee2021pebble} for the reward model from preferences. More details are provided in the Appendix. Our code and the full clips for the Figures \ref{fig:obs_env_ex}, \ref{fig:gui_drop_drope_dropj}, \ref{fig:gui_multi} and \ref{fig:mpc_dreams} are available at \url{https://github.com/ilkaza/DROPJ}.

Our evaluation is guided by four questions:
\begin{description}
    \item[\textbf{(Q1)}] \textit{\!\!\!What is the effect of learning with one-shot human-generated trajectories, instead of generating them iteratively via optimisation?} (Section \ref{sec:impact_one_shot})
    \item[\textbf{(Q2)}] \textit{\!\!\!How advantageous is using preferences over sparse labels?} (Section \ref{sec:impact_preferences})
    \item[\textbf{(Q3)}] \textit{\!\!\!What is the effect of safety justifications added to preferences?} \!(Section \ref{sec:impact_justifications})
    \item[\textbf{(Q4)}] \textit{\!\!\!What happens when multiple justifications are involved?} (Section \ref{sec:impact_multi_justifications})
\end{description}

\textbf{(Q1)}-\textbf{(Q3)} can be answered in the CR environment, while \textbf{(Q4)} in the more challenging OCR. We also include an ablation on the world model quality, along with distribution shift considerations (Section \ref{sec:world_model_abl}) and an examination of the robustness of justifications in diverse feedback via synthetic erroneous input, and differing opinions from more labellers (Section \ref{sec:robustness_justif}). We start by seeing the limitation of traditional RL in safety-critical environments (Section \ref{sec:safety_training}).

\subsection{Safety during Training}
\label{sec:safety_training}

We initially clarify that methods 2--5 are completely safe during training, i.e.\ have zero unsafe states. The reason is because all training takes place inside the dream world (learned simulator), with no interactions of the agent with the real world. Instead, DRQV2, which trains with RL in the real environment, suffers a large amount of roughly $50K$ steps in grass in CR, before reaching the upper-bound mean return of around $2800$ during deployment. That shows that traditional RL applied directly in safety-critical real-world domains should expect a large number of safety violations during training. DRQV2's mean return is visible in most of the following figures, such as Figure \ref{fig:impact_oneshot} (left) with the magenta colour, which is the highest possible for the environment. This is expected, since DRQV2 is a state-of-the-art model-free RL algorithm for image-based input and, unlike the other methods, we allow it to make use of the oracle handcrafted reward structure of the environment. Moreover, its grass rate (visible e.g.\ in Figure \ref{fig:impact_oneshot} (right)) is also not zero, since its RL policy was trained to maximise the return (not minimising the grass violations).

\subsection{Impact of the One-shot Technique}
\label{sec:impact_one_shot}

Regarding \textbf{(Q1)}, i.e.\ the impact of learning with one-shot human-generated trajectory segments, rather than generating them iteratively via gradient descent optimisation (as in ReQueST), we compare ReQueST with DROS in order to isolate the effect of the one-shot mechanism from that of using preferences instead of sparse labels (which will be examined in \textbf{(Q2)}). In Figure \ref{fig:impact_oneshot}, we observe that the mean return is higher for DROS across all numbers of sparse label queries (left), while the grass rate is mixed (right). We are not particularly interested in the ups and downs in the grass rate of ReQueST and DROS, since they do not have any direct mechanism to tackle safety during deployment. However, we are interested in the fact that DROS's performance (return) is consistently better than ReQueST's. As we hypothesised, this is because as the human uses their understanding of the task and explores freely in the dream world, more relevant and of a wider range query segments, respectively, can be created. 

\begin{figure}
     \centering
     \begin{subfigure}[b]{0.49\textwidth}
         \centering
         \includegraphics[width=\textwidth]{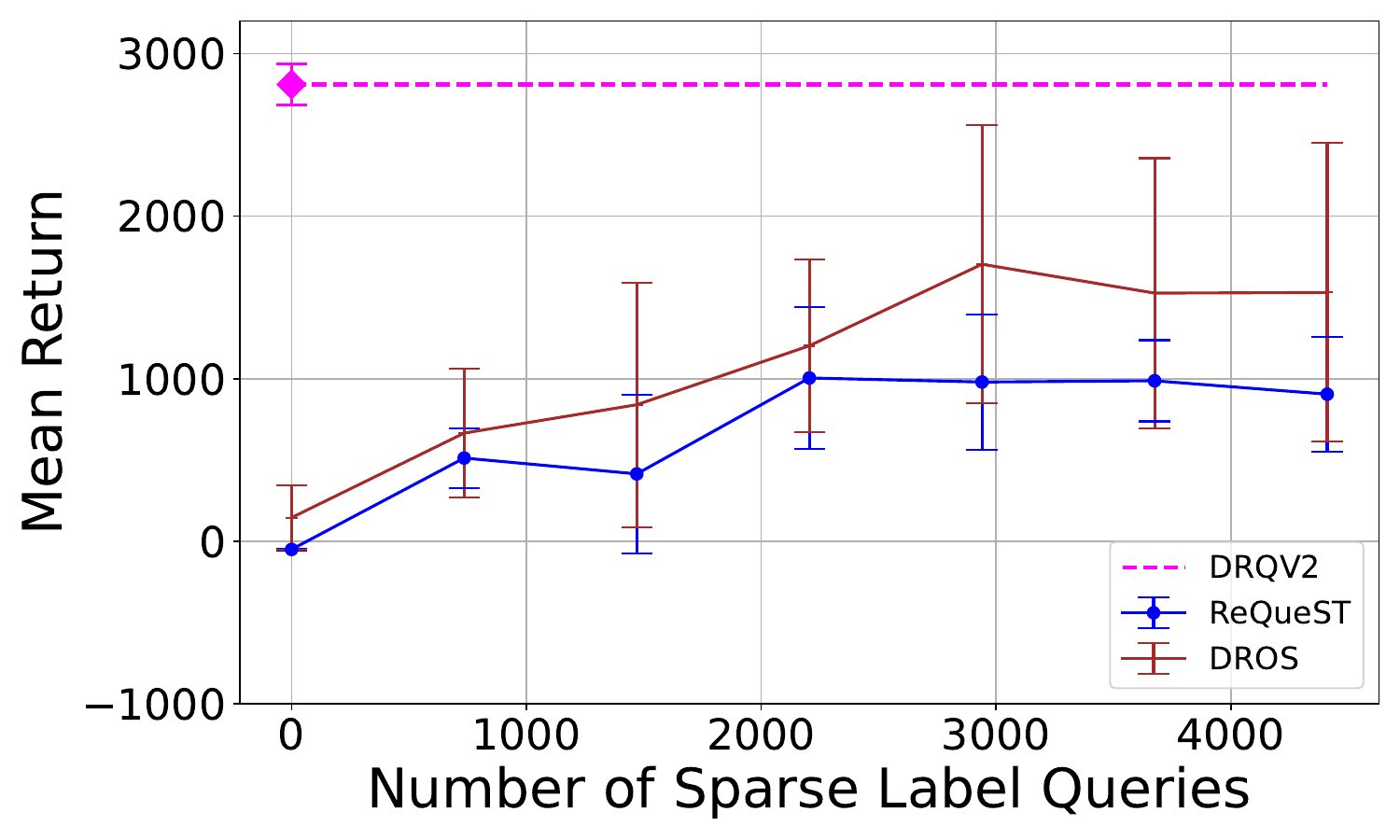}
     \end{subfigure}
    \hfill     
     \begin{subfigure}[b]{0.49\textwidth}
         \centering
         \includegraphics[width=\textwidth]{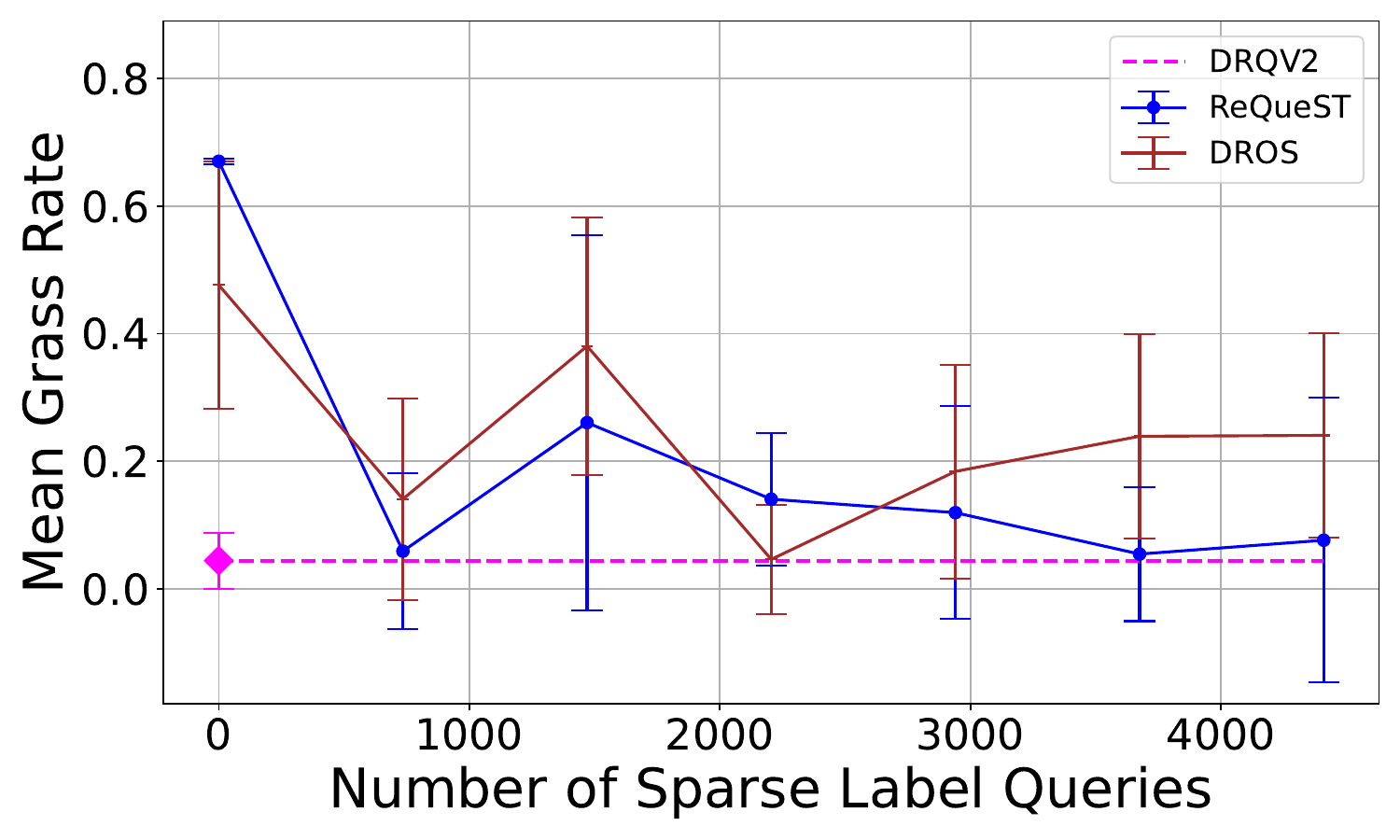}
     \end{subfigure}
        \caption{\textbf{Q1:}  Impact of the one-shot technique \cite{kazantzidis2026safe}.}
        \label{fig:impact_oneshot}
\end{figure}

We confirmed this not only by visual inspection of trajectory segments, but also in more systematic ways. We compared the diversity of DROS and ReQueST segments, using the Determinantal Point Process (Equation 7 from \cite{dai2021diversity}). This method slides a fixed-size window over the sequence of states in a trajectory segment, computes the determinant of the Gram matrix (symmetric matrix of pairwise inner products) formed by the $l_2$-normalised state vectors in each window, and sums these determinants to get the overall diversity score of a trajectory segment. For each set of segments for the two algorithms, the diversity averaged across the segments was:
\[D_{\mathit{DROS}} = 25.64, \quad D_{\mathit{ReQueST}} = 14.38,\] presenting a notable difference. The same was shown by the t-SNE \cite{maaten2008visualizing} of the state representations (Figure \ref{fig:tsne}), where DROS spreads broadly across the entire space, while ReQueST forms tighter clusters. Although there are a few regions covered by ReQueST and not DROS, these likely correspond to rare task situations that the agent does not need to prioritise. In all cases, it is trivial to combine the two approaches if the complexity of the environment permits the high computational cost of the ReQueST approach --- for example, one could first train a robust reward model with the one-shot technique, and then run one or a few iterations of trajectory optimisation and human feedback, to capture any regions that were potentially not covered.

\begin{figure}
        \centering
        \includegraphics[width=0.6\textwidth]{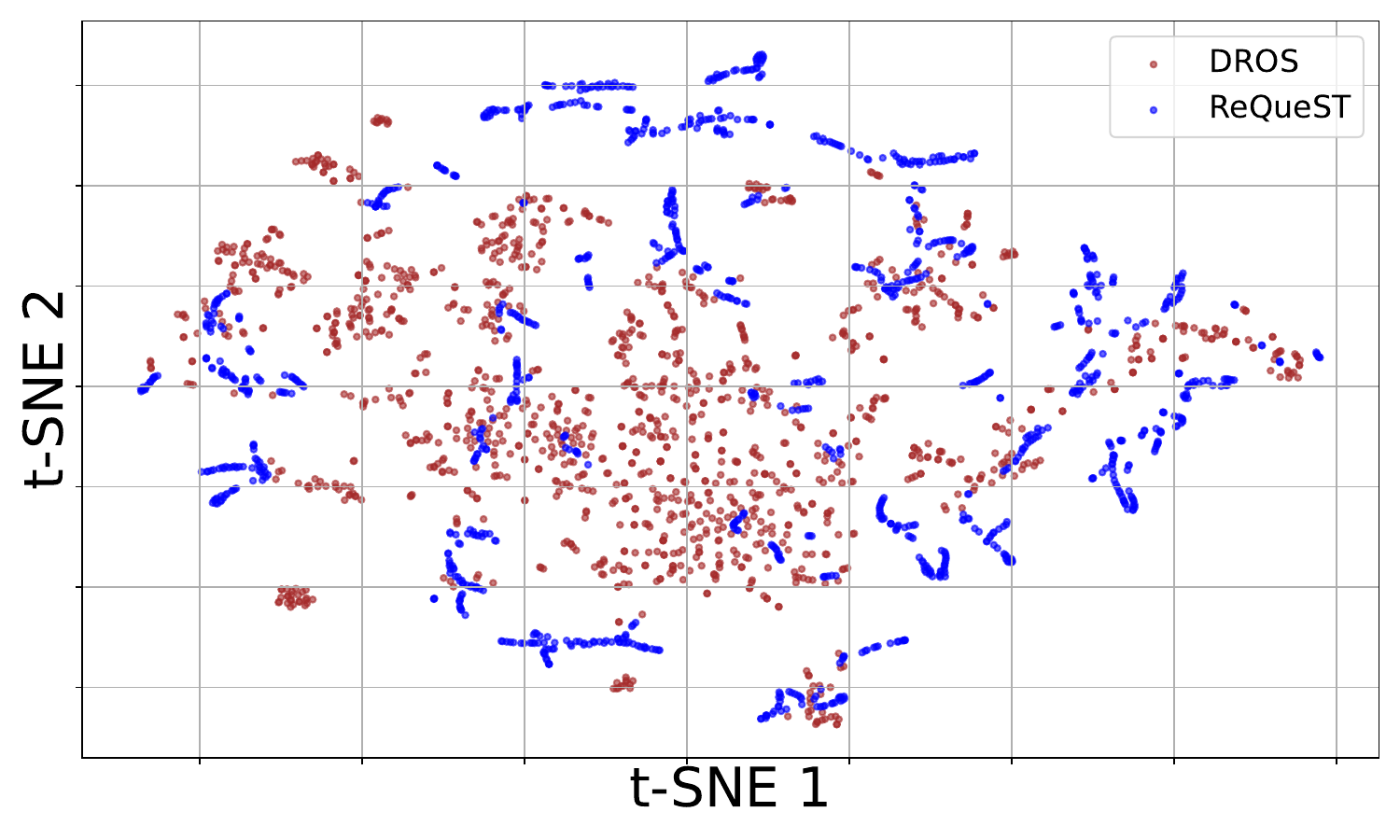}
        \caption[\textbf{Q1:}  Inspecting the diversity of ReQueST versus DROS trajectory segments with the t-SNE of state representations]{\textbf{Q1:}  Inspecting the diversity of ReQueST versus DROS trajectory segments with the t-SNE of state representations \cite{kazantzidis2026safe}.}
        \label{fig:tsne}
\end{figure}

Moreover, for the computational cost in our experiments in CR, we measured that the time ReQueST required to generate \textbf{one trajectory segment of} $\mathbf{50}$ \textbf{steps} was $\mathbf{3}$ \textbf{min and} $\mathbf{13.58}$ \textbf{sec}. In Figure \ref{fig:impact_oneshot}, we see that the ReQueST model with the highest return and lowest crash rate needed $3675$ sparse labels, which is equivalent to $49$ labels in $75$ trajectory segments. Thus, to achieve that, the human had to `wait in the chair' for $75 \times (3~\text{min and } 13.58~\text{sec}) = \mathbf{241~\text{\textbf{min and }} 58.19~\textbf{sec}}$, which is a considerable amount of time\footnote{For the ReQueST model of $2205$ sparse labels (equivalent to $45$ trajectory segments) which has reached the highest return, the computational time is again substantial at $45 \times (3~\text{min and } 13.58~\text{sec}) = \mathbf{145~\text{\textbf{min and }} 11.1~\textbf{sec}}$.}. Even on faster machines, more complex environments may require more gradient steps for trajectory optimisation, compounding the issue, and making it hard to scale efficiently to environments with higher-dimensional action spaces. Instead, with the one-shot technique, which is independent of the action-space dimension, the user needed to draw via the learned simulator (Step 2) only $\mathbf{N=10}$ \textbf{episodes} of the game, lasting around $30$ sec each, i.e.\ spending only $\mathbf{5}$ \textbf{min} overall, to achieve an even better performance.

\subsection{Comparing Preferences with Sparse Labels}
\label{sec:impact_preferences}
To answer \textbf{(Q2)}, i.e.\ how favourable is using human preferences over sparse labels, we should compare DROS (which uses the one-shot technique and sparse labels) with a method that uses the one-shot technique and preferences (Figure \ref{fig:impact_preferences}). As sparse-label queries require less time than preference queries, the x-axis should depict actual feedback time, instead of query counts. Table \ref{tab:times} shows the measured average time per query for each method. Multiplying this rate with the number of queries gives the actual feedback time on the x-axis. We see in Figure \ref{fig:impact_preferences} (left) that after 20 min all preference methods achieve a considerably better performance than DROS for the same amount of human burden. Equivalently, they achieve the same performance with significantly less feedback time. Thus, similarly to other contexts (e.g.\ with perfect simulators), preferences remain efficient when queries are generated from a learned simulator.

\begin{table} [t]
\caption{Average feedback time per query, number of queries and total time until the best model for each method \cite{kazantzidis2026safe}.}
\label{tab:times}
\centering
\begin{tabular}{@{}l@{\hspace{6mm}}l@{\hspace{6mm}}l@{\hspace{6mm}}l@{}}
\toprule
\textbf{Method} & \textbf{Time per Query (s)} & \textbf{Queries} ($K$) & \textbf{Total time (min)} \\
\midrule
ReQueST  & $1.66 \pm 1.38$ & $3675$ & $\simeq 102$ \\
DROS     & $1.66 \pm 1.38$ & $2205$ & $\simeq 61$ \\
DROP     & $7.09 \pm 6.47$ & $500$  & $\simeq 59$ \\
DROPe    & $7.36 \pm 5.45$ & $500$  & $\simeq 62$ \\
DROPJ    & $7.88 \pm 5.83$ & $500$  & $\simeq 66$ \\
\bottomrule
\end{tabular}
\end{table}

\begin{figure}
     \centering
     \begin{subfigure}[b]{0.49\textwidth}
         \centering
         \includegraphics[width=\textwidth]{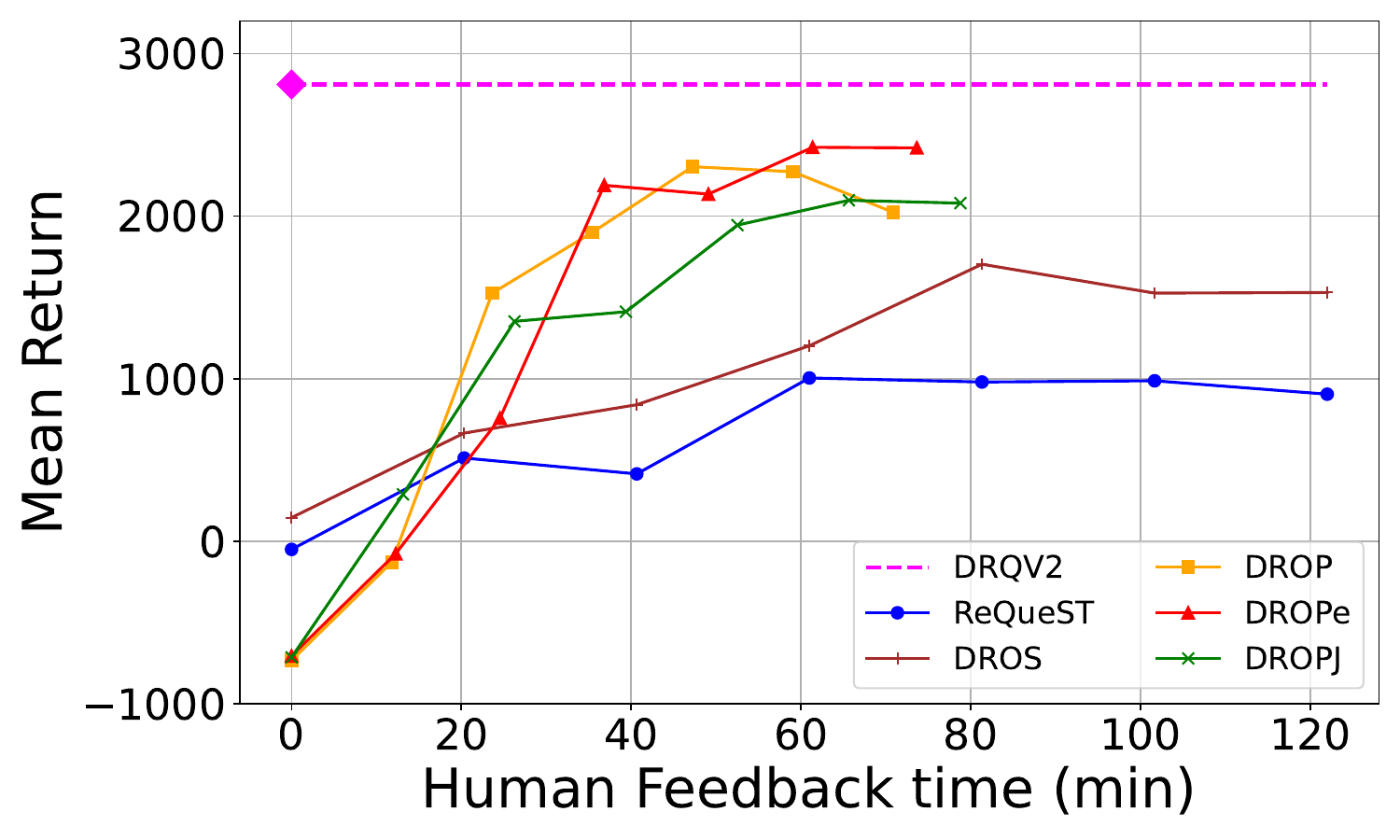}
     \end{subfigure}
    \hfill     
     \begin{subfigure}[b]{0.49\textwidth}
         \centering
         \includegraphics[width=\textwidth]{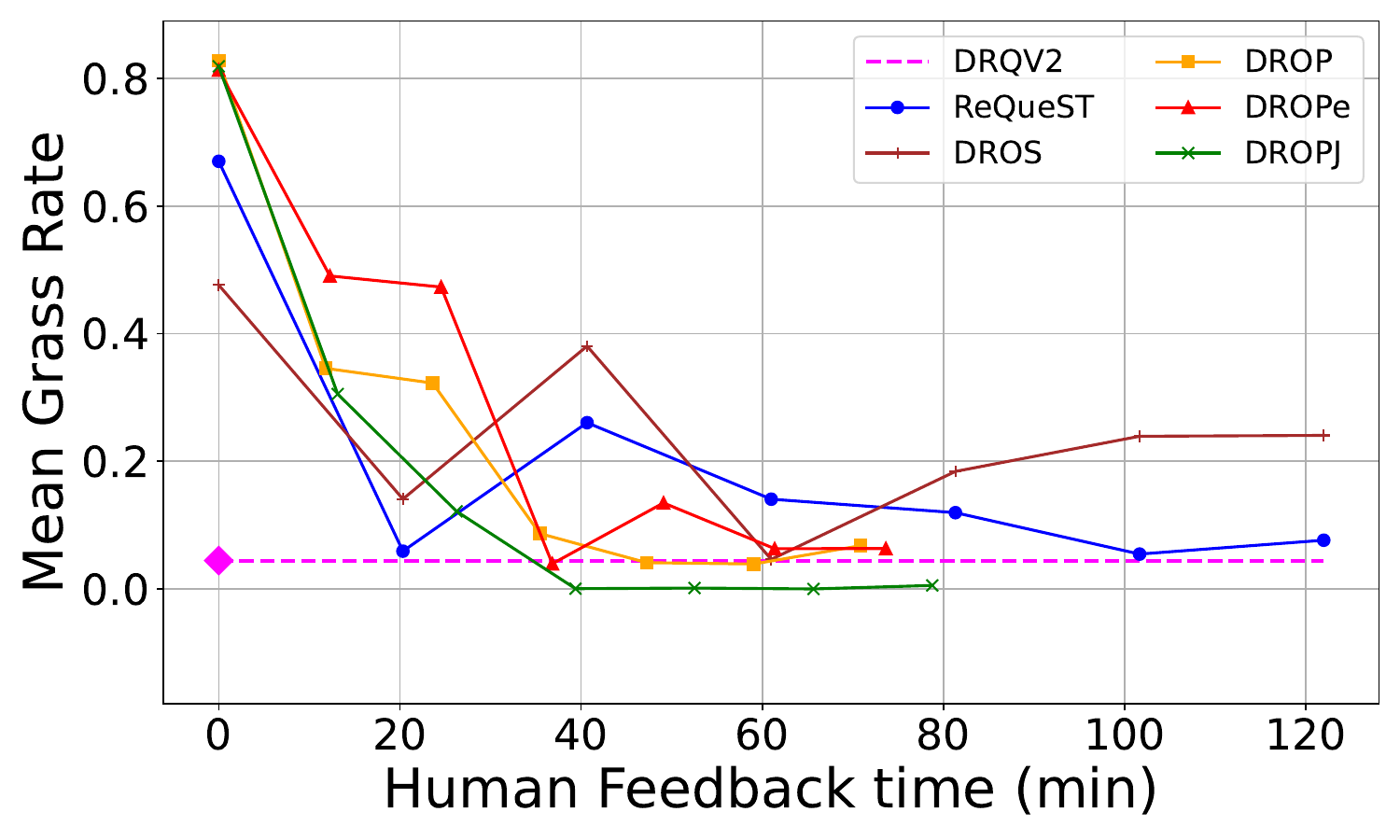}
     \end{subfigure}
        \caption{\textbf{Q2:} Impact of using preferences instead of sparse labels \cite{kazantzidis2026safe}.}
        \label{fig:impact_preferences}
\end{figure}

Regarding the grass rate, we see in Figure \ref{fig:impact_preferences} (right) that the only method which after 40 min stands out is DROPJ. Hence, plain preferences alone do not appear to have an obvious effect on deployment safety compared to other feedback types. This is reasonable, since although they are efficient, they do not have an explicit mechanism for encoding safety requirements. It is the justifications in DROPJ that make the difference in safety, which leads us to \textbf{(Q3)} and \textbf{(Q4)}.

\subsection{Impact of Justifications (the Single Safety Justification Case)}
\label{sec:impact_justifications}

We now investigate \textbf{(Q3)}, i.e.\ the impact of justifications when they accompany preferences. We initially examine this in CR, where the only safety justification refers to grass violations.

We note first, after observing the time per query of DROP, DROPe and DROPJ in Table \ref{tab:times}, that the extra time incurred by justifications was minimally more. Table \ref{tab:times} also shows the required number of queries to yield the best model for each method, taking into account both the return and the grass rate from Figures \ref{fig:impact_oneshot}--\ref{fig:impact_justifications}. If we multiply the time per query with the required number of queries, we can extract in the last column the total time required until the best model. Since for the preference methods (DROP, DROPe, and DROPJ) the number of queries was the same ($K=500$), we can see in total time the very small difference when justifications are added.\footnote{This aligns well with the insights from \cite{kazantzidis2022learning} about justifications, i.e.\ that when a user is used to providing justifications, the latter do not require a considerably higher cognitive effort and overall feedback time.}

\begin{figure}
     \centering
     \begin{subfigure}[b]{0.49\textwidth}
         \centering
         \includegraphics[width=\textwidth]{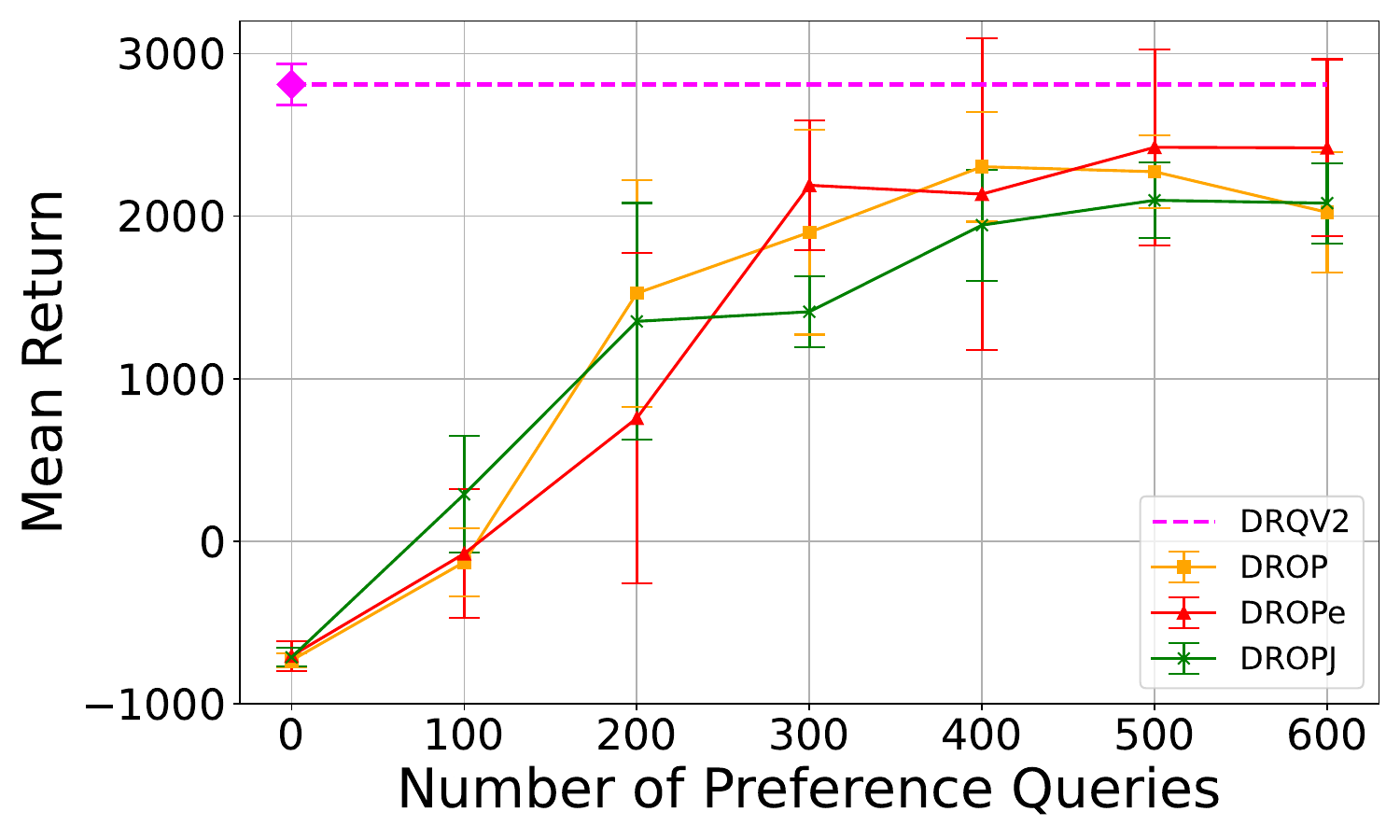}
     \end{subfigure}
    \hfill     
     \begin{subfigure}[b]{0.49\textwidth}
         \centering
         \includegraphics[width=\textwidth]{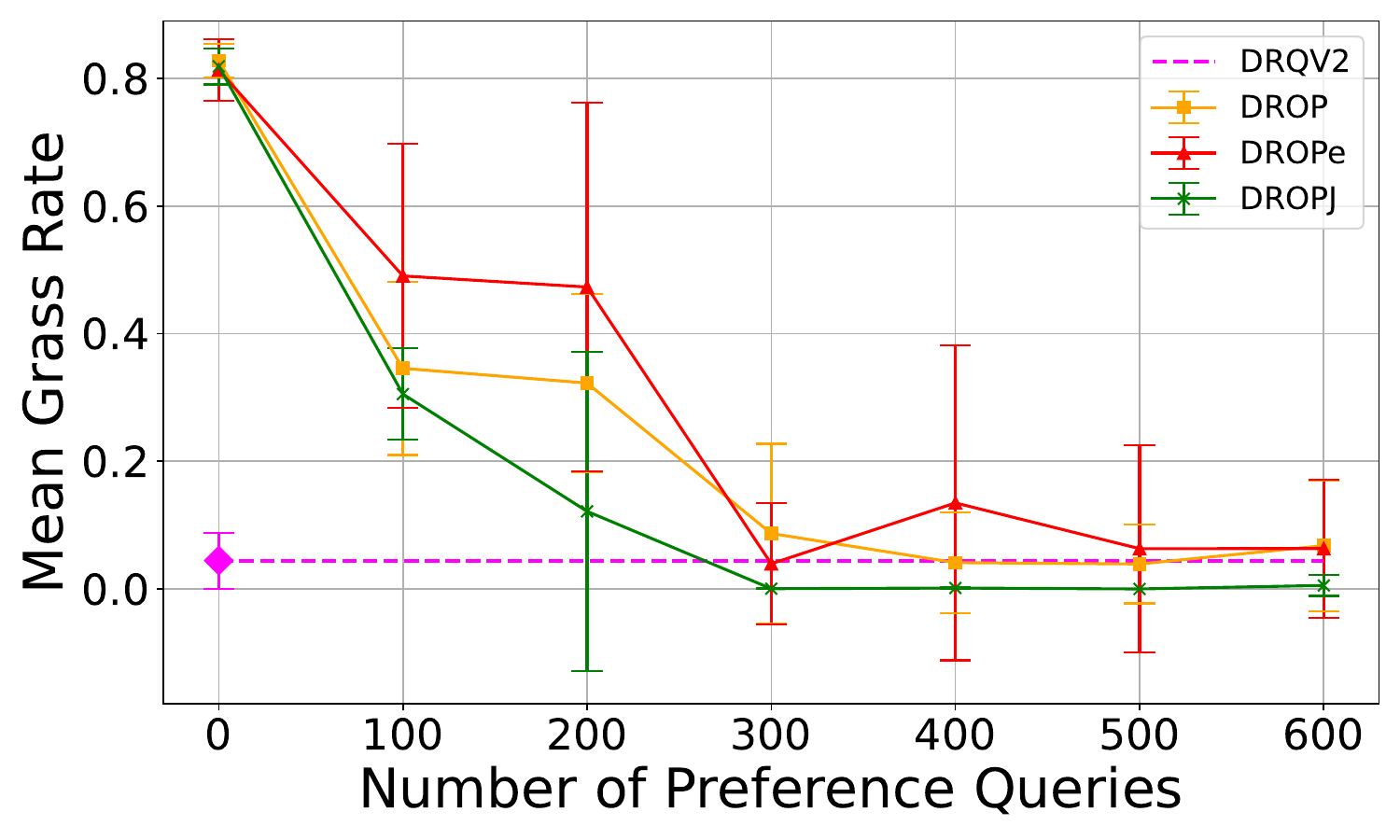}
     \end{subfigure}
        \caption{\textbf{Q3:} Impact of including a safety justification in preferences \cite{kazantzidis2026safe}.}
        \label{fig:impact_justifications}
\end{figure}

Given that, Figure \ref{fig:impact_justifications} shows the effect of the safety justification related to grass, relative to the count of preference queries. We see that although there is a drop-off in the return with the justification, the grass rate for DROPJ after $300$ queries is minimised and significantly less than DROP and DROPe. During the experiments, we observed that the car with DROPJ would carefully slow down on turns and ensure it stayed inside the road. In contrast, in other methods the car would often continue, which could lead to briefly stepping onto the grass or spinning out. Thus, while DROP and DROPe may be more suitable for scenarios like a real car race, DROPJ would be the preferable choice for safety-critical settings, such as city centres. This trade-off is sensible, given that in the reward model of DROPJ avoiding unsafe states was encoded as a priority from the justification queries (\textbf{Step 3}), even at the expense of other metrics.

We also note that although we used the sensible choice of $(w_{\mathit{def}}=0.75, w_{s}=1)$ (maximum weight to safety and reasonable weight to any other default reason/objective) for the models of Figure \ref{fig:impact_justifications}, we experimented with a number of other combinations to verify that safety gains stem strictly from justifications. Keeping as reference the $K=500$ queries, the heatmap in Figure \ref{fig:heatmap} shows the return (left) and the grass rate (right) of several more models trained with justification weights $(w_{\mathit{def}}, w_{s})$. We observe at a glance that good performance and safety requires $w_s \geq w_{\mathit{def}}$. Also, we notice that the combination $(w_{\mathit{def}}=1, w_s=1)$  is effectively equivalent to DROPe. Importantly, for any $w_{\mathit{def}}$, the higher the $w_s$, the better the safety (lower grass rate), providing best results for $w_s=1$. Safety also improves in the last column when $w_s > w_{\mathit{def}}$. The $(w_{\mathit{def}}=0.6, w_{s}=1)$ model gave a few more steps in grass, likely because the general performance started degrading, due to the significant decrease of $w_{\mathit{def}}$ to $0.6$. From further experimentation, we noticed that when a justification weight associated with a specific objective was significantly decreased, this could affect another objective of the problem if they were dependent with each other; this was more evident in the multiple justification case, which we analyse in Section \ref{sec:impact_multi_justifications}.

\begin{figure}
     \centering
     \begin{subfigure}[b]{0.49\textwidth}
         \centering
         \includegraphics[width=\textwidth]{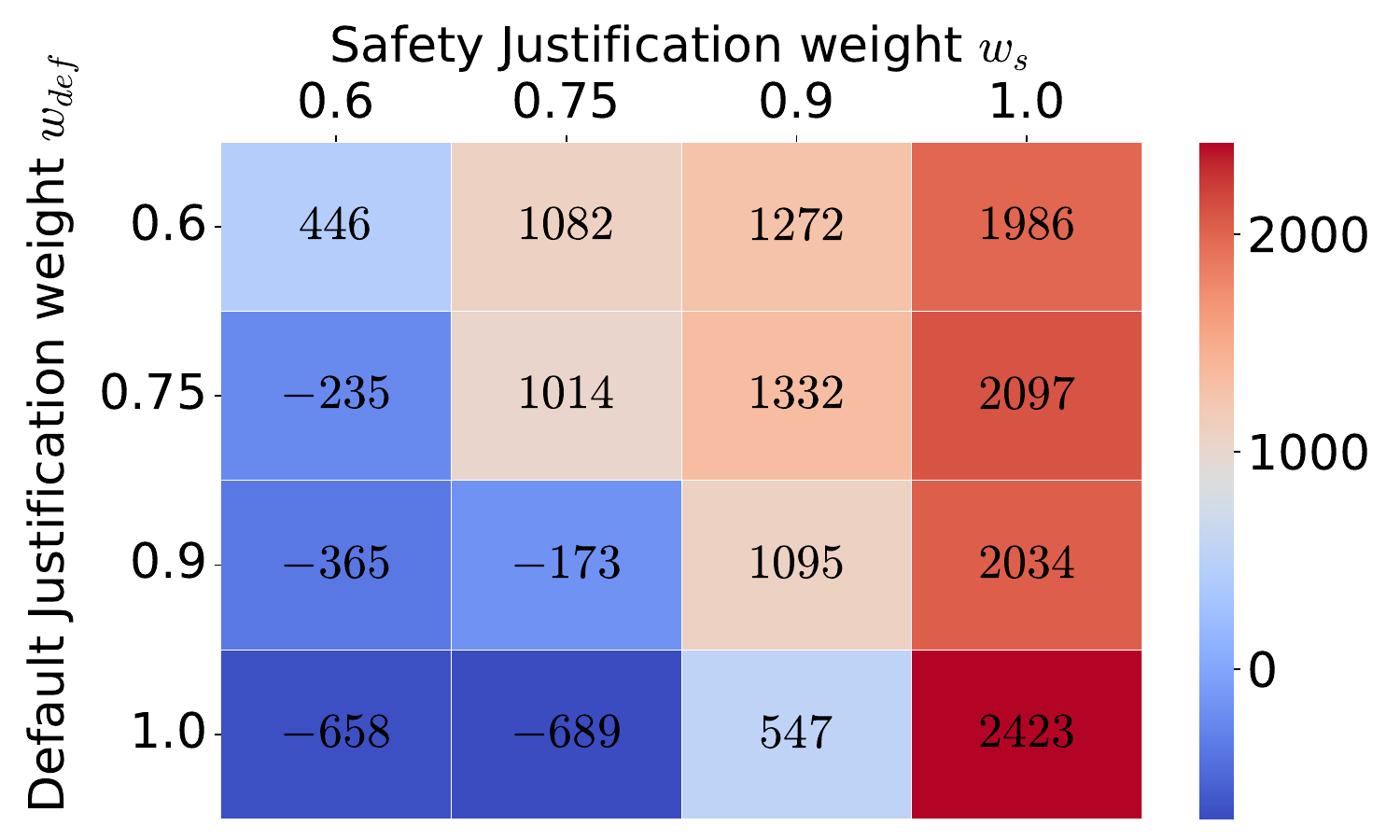}
     \end{subfigure}
    \hfill     
     \begin{subfigure}[b]{0.49\textwidth}
         \centering
         \includegraphics[width=\textwidth]{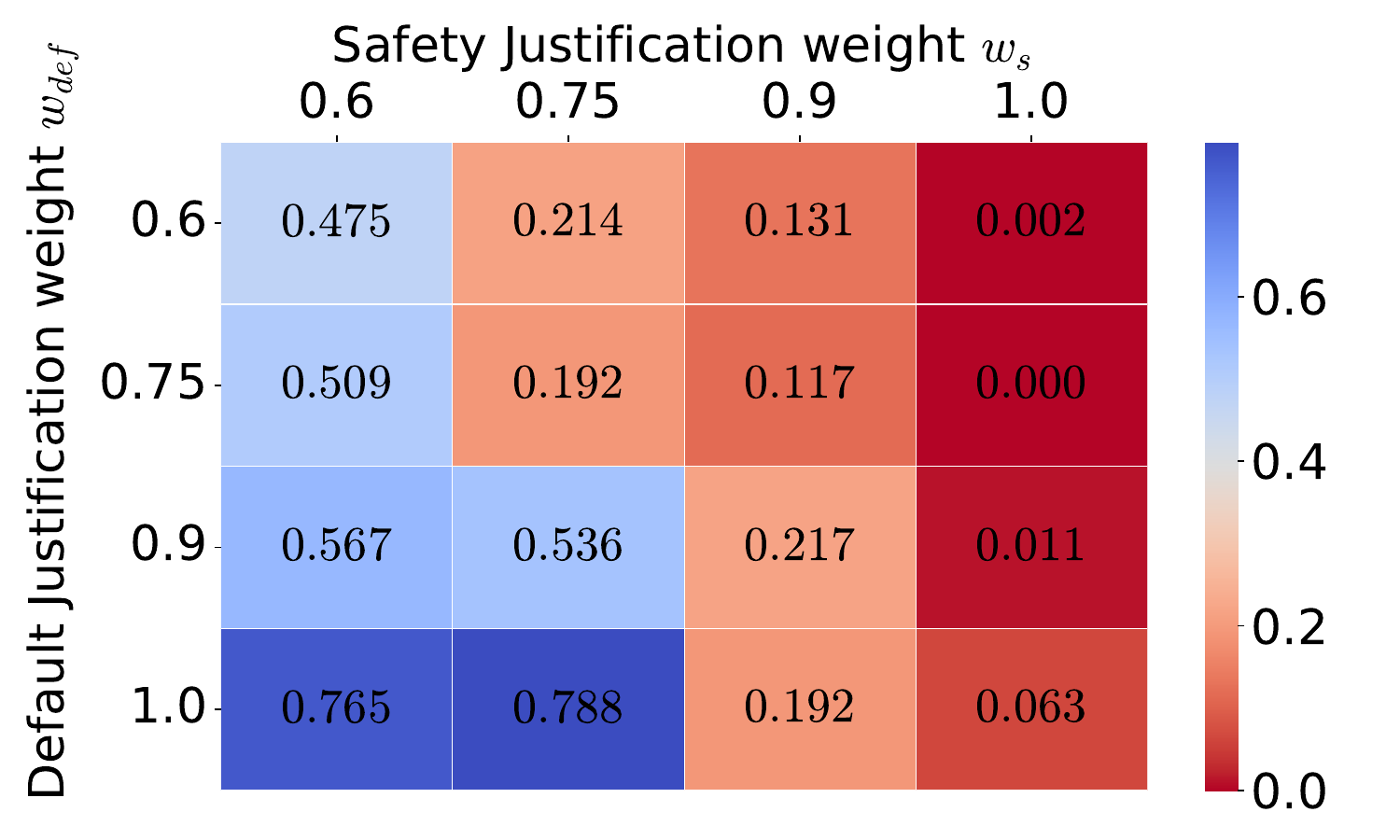}
     \end{subfigure}
        \caption{Mean return (left) and mean grass rate (right) for various $(w_{\mathit{def}},w_s)$ \cite{kazantzidis2026safe}.}
        \label{fig:heatmap}
\end{figure}

\subsection{Significance Tests}
\label{sec:signif tests}

Here, we present the significance tests, related to \textbf{(Q1)}-\textbf{(Q3)} (Sections \ref{sec:impact_one_shot}--\ref{sec:impact_justifications}), for the return and grass rate of the best trained models of each method. Figure \ref{fig:best_models} shows their mean return (left) and mean grass rate (right) distributions, and Table \ref{tab:significance} the $p$-values from the significance tests associated with them. By observing the box-plots and histograms, we saw that the assumptions of homoscedasticity and normality are not clearly satisfied, so we conducted non-parametric tests. First, we use the Mann-Whitney U (Wilcoxon rank-sum) test \cite{mann1947test} between ReQueST and DROS to assess the significance of the one-shot technique. For the return, an one-sided test (expecting lower values for ReQueST) gives $p=0.0605$, indicating weak significance ($p < 0.1$). For the grass rate, no difference is observed. Thus, we conclude that while the one-shot technique improves performance, there is scope for further improvement through more effective feedback types. Comparing for instance DROS with DROPJ, we find a significant difference in both the return and grass rate. Moreover, although DRQV2 has a significant difference in the return compared to DROPe (as expected, given DRQV2 trains with the oracle reward), the difference is not highly significant ($p=0.0376$). Notably, DROPe's return nearly matches the upper bound in some episodes, despite learning in the dream (Figure \ref{fig:best_models}), revealing the power of preferences. Regarding DROP, DROPe and DROPJ, we perform the non-parametric Kruskal–Wallis test \cite{kruskal1952use}, identifying significant differences ($p=0.008$ for the return, and $p=0.0029$ for the grass rate). Post-hoc pairwise Dunn's tests with the Holm-Bonferroni adjustment \cite{dunn1964multiple, holm1979simple} show significant differences in the return between DROPe and DROPJ ($p=0.0058$), and in the grass rate between DROPJ and both DROP ($p=0.0036$) and DROPe ($p=0.0215$), confirming, respectively, the trade-off between performance and safety discussed in \textbf{(Q3)}.

\begin{figure}
     \centering
     \begin{subfigure}[b]{0.49\textwidth}
         \centering
         \includegraphics[width=\textwidth]{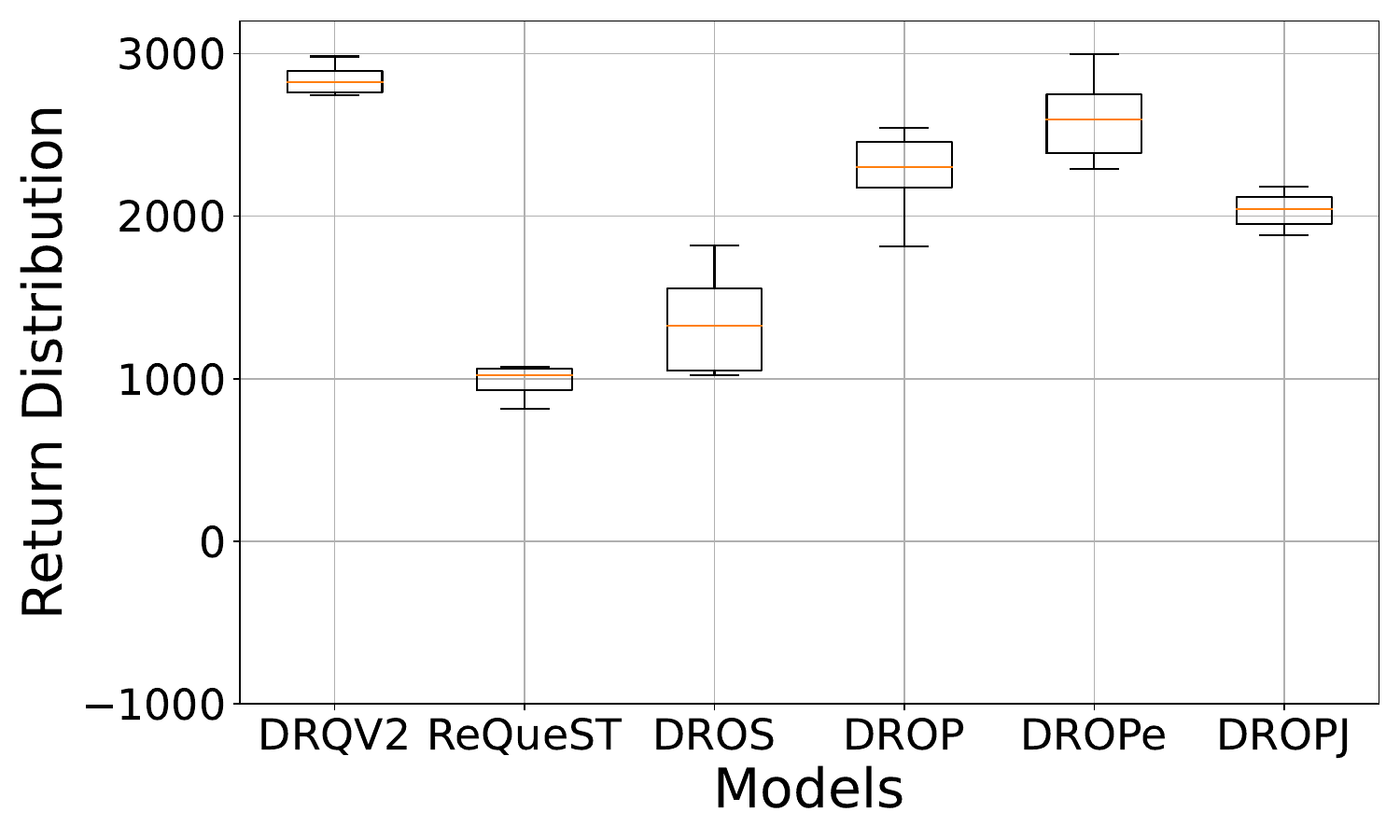}
     \end{subfigure}
    \hfill     
     \begin{subfigure}[b]{0.49\textwidth}
         \centering
         \includegraphics[width=\textwidth]{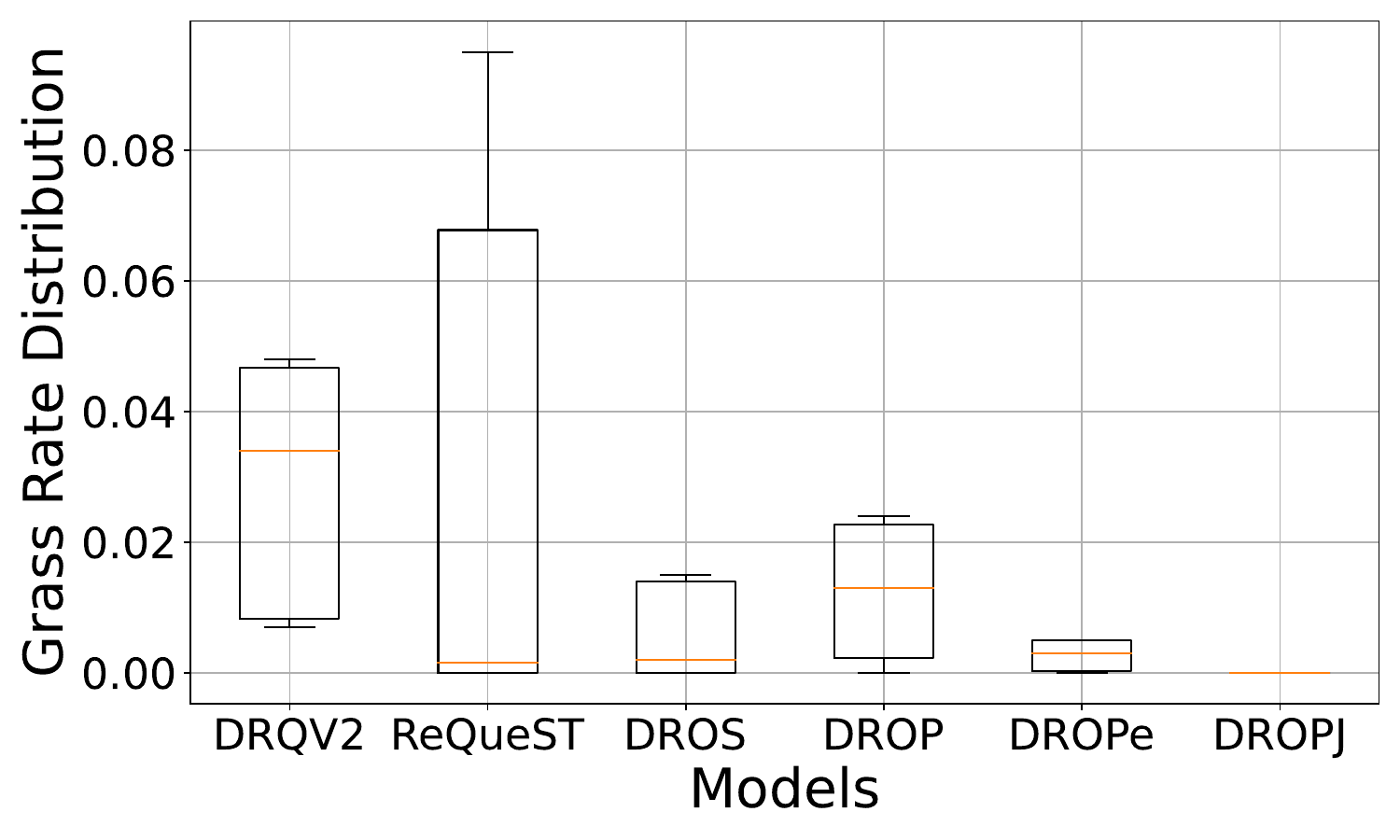}
     \end{subfigure}
        \caption{Return and grass rate distributions of the best models in Car Racing \cite{kazantzidis2026safe}.}
        \label{fig:best_models}
\end{figure}

\begin{table}
\caption{$p$-values for the return and grass rate comparisons from the Mann-Whitney U tests (upper half) and the Dunn's tests (bottom half).}
\label{tab:significance}
\centering
{
\setlength{\tabcolsep}{6mm}
\begin{tabular}{lll}
\toprule
\textbf{Comparison}    & \textbf{Return} & \textbf{Grass rate} \\ \midrule
ReQueST/DROS    & $0.0605$     & $1.0000$      \\
DROS/DROPJ      & $0.0002$     & $0.0149$     \\
DRQV2/DROPe     & $0.0376$     & $0.0170$     \\
\midrule
DROP/DROPe      & $0.1775$     & $0.4905$     \\
DROP/DROPJ      & $0.1624$     & \boldsymbol{$0.0036$}     \\
DROPe/DROPJ     & \boldsymbol{$0.0058$}     & \boldsymbol{$0.0215$}     \\ 
\bottomrule
\end{tabular}
}
\end{table}

\subsection{World Model Ablation and Distribution Shifts}
\label{sec:world_model_abl}

In this section we provide an ablation on the world model quality, along with distribution shift observations during deployment, and relate these to justifications. Figure \ref{fig:loss_curves} shows for CR, the validation loss curves of the two world model components, i.e.\ the encoder (VAE) and the dynamics model (MDN-RNN), trained with a different number $M$ of real-world trajectories $\mathcal{R}$. After $M=600$, there is no significant improvement in either of them. Hence, based on this and the inspection of VAE reconstructions and MDN-RNN predictions, we decided not to extract more trajectories.\footnote{For OCR, we used the same strategy and extracted another 200 trajectories with chuckholes and cars included, in order to handle the environment's higher complexity.}

\begin{figure}
     \centering
     \begin{subfigure}[b]{0.49\textwidth}
         \centering
         \includegraphics[width=\textwidth]{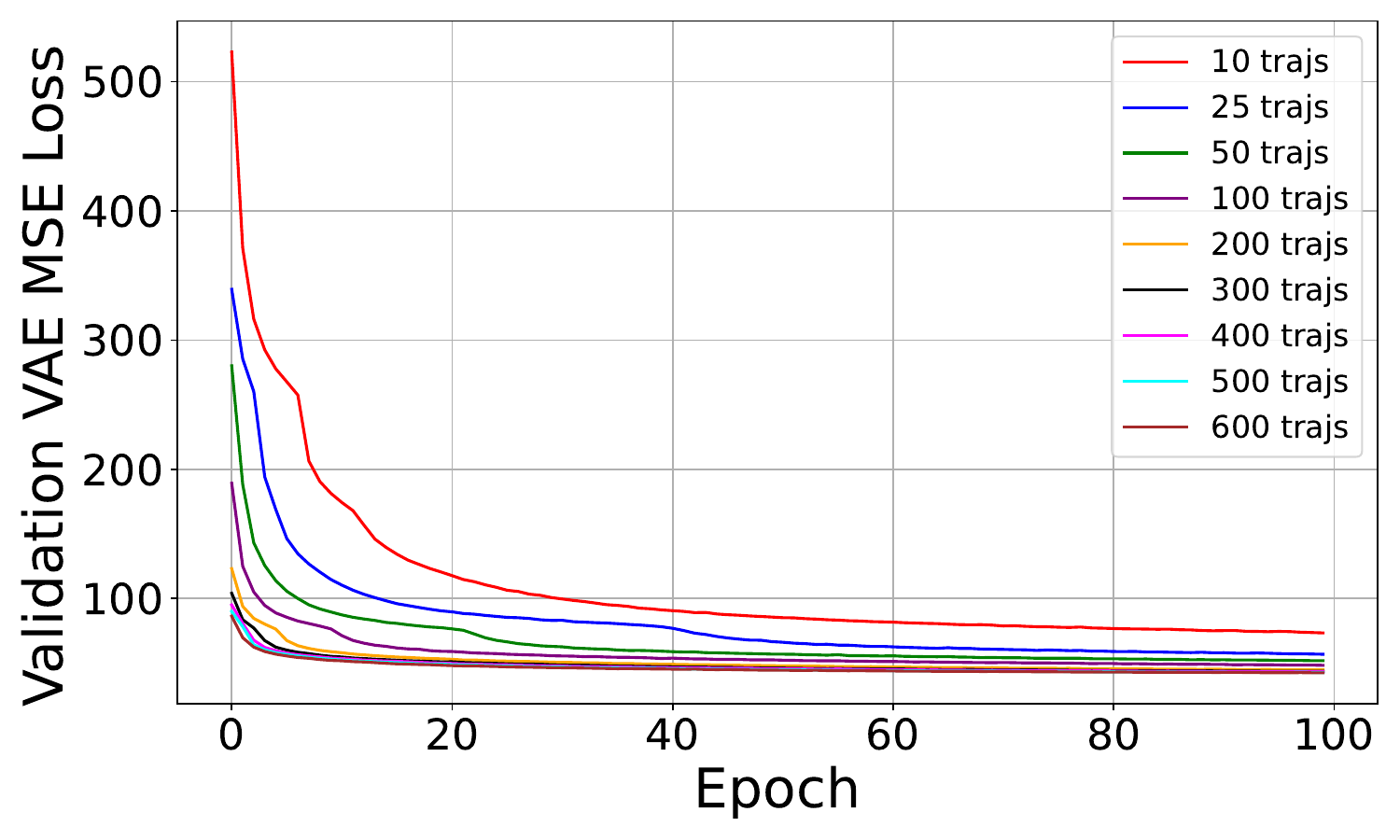}
     \end{subfigure}
    \hfill     
     \begin{subfigure}[b]{0.49\textwidth}
         \centering
         \includegraphics[width=\textwidth]{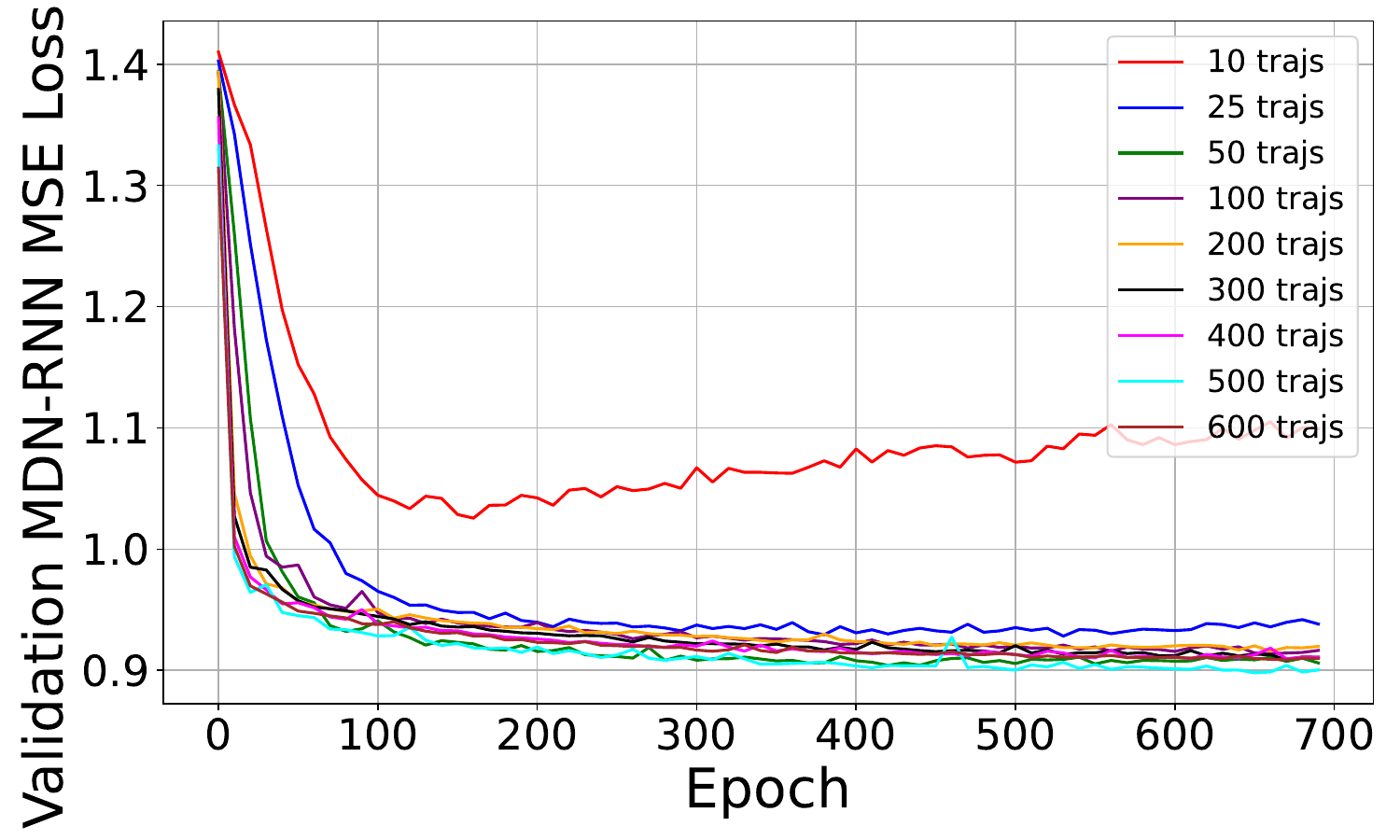}
     \end{subfigure}
    \caption{Encoder (VAE) and dynamics model (MDN-RNN) Validation Mean Squared Error (MSE) curves for different numbers of real-world trajectories \cite{kazantzidis2026safe}.}
    \label{fig:loss_curves}
\end{figure}

However, it would be interesting to see the evaluation of some degraded world models. We chose the world models with a large drop of trajectories at $M=25$ and $M=10$, and repeated, for DROPe and DROPJ, \textbf{Steps 2} (drawing the same number of dream user trajectories) and \textbf{3} (providing the same number of feedback answers). Figure \ref{fig:wm_abl} shows for $M=25$, that, although the return and grass rate degraded, they stayed surprisingly resilient. Even though the world model could barely recreate turns, it learnt well the objective taught from the human feedback of staying on the road and avoiding the grass. Thus, it could perform well even in turns. Additionally, the benefits of justifications (DROPJ) were evident even at $M=25$, where the degradation of return and grass rate was very limited, in contrast to plain preferences (DROPe). Finally, as expected, for only $M=10$, both metrics significantly degraded and the car could only trivially drive, since the world model could not plan successfully.\footnote{We experimented similarly for OCR, but since the objectives were more complicated, we found that more trajectories were normally needed.}

\begin{figure}
     \centering
     \begin{subfigure}[b]{0.49\textwidth}
         \centering
         \includegraphics[width=\textwidth]{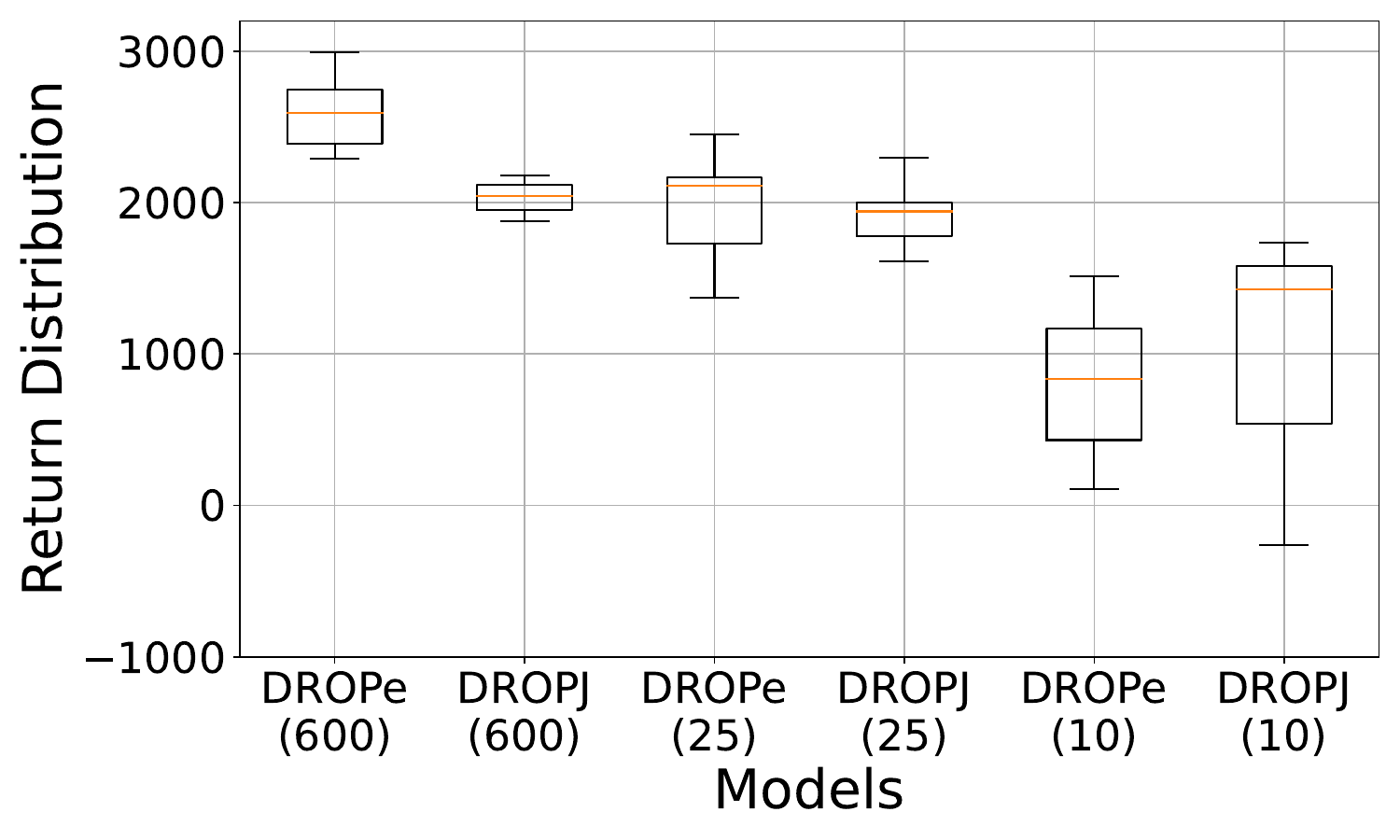}
     \end{subfigure}
    \hfill     
     \begin{subfigure}[b]{0.49\textwidth}
         \centering
         \includegraphics[width=\textwidth]{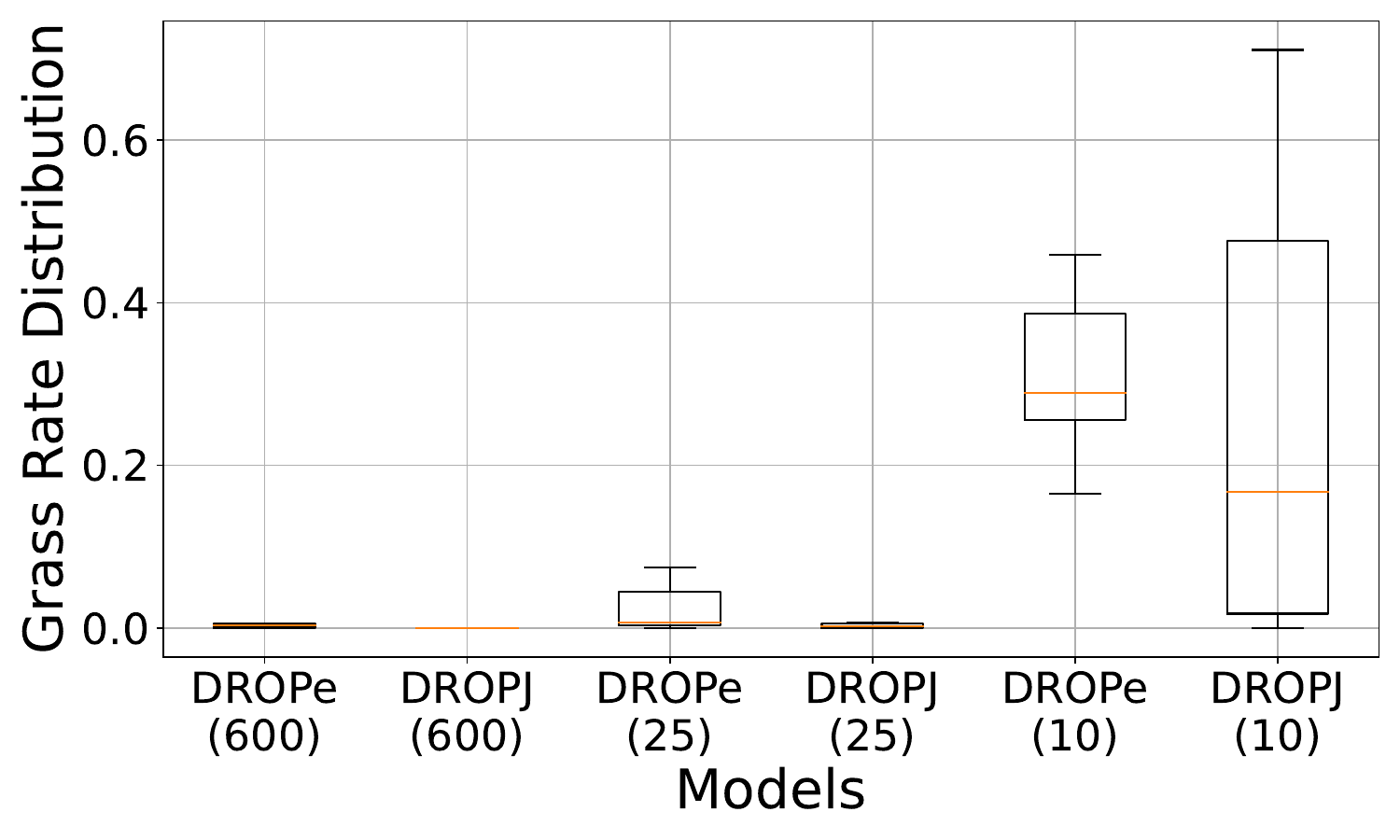}
     \end{subfigure}
    \caption{World model ablation for different numbers of real-world trajectories \cite{kazantzidis2026safe}.}
    \label{fig:wm_abl}
\end{figure}

Reward models from human preferences tend to generalise well by learning human objectives (e.g.\ stay on the road and avoid obstacles). However, if the test distribution is significantly far from the training one (i.e.\ if the deployment environment has significant shifts --- includes elements that were absent from the real-world trajectories training the world model), then the world model cannot always yield accurate MPC plans. In our experiments, the configuration of the track was randomised both in real-world trajectories and in testing episodes, so some findings from other distribution shifts during deployment are the following:

\begin{itemize}
\item  The car, when reversed, could still drive the track clockwise successfully. 
\item  Increasing the road width, there was not a significant degradation in metrics.
\item  Tightening the road width, DROPe could not stay on road without MPC finetuning, but DROPJ (safety-oriented) tried to stay on the road at all costs (even when barely fitting). The same happened most times with very sharp turns, even if the car had to halt.
\item  When the size of the car increased (automatically modifying its internal dynamics), the performance slightly dropped, but again DROPJ would show a consistent advantage over DROPe in trying to stay inside the road.
\item  Interestingly, when only the colour of the grass or road changed, the evaluation remained competitive, but when both changed, there was a significant degradation. This is most likely because the reward model learns to associate grey colour with road where the agent has to stay and green with grass that the agent should avoid. When both cues change simultaneously, the reward model cannot reliably guide any candidate plan.
\end{itemize}

The main conclusion from experimentation is that although performance may drop with significant distribution shifts, safety gains from justifications remain evident. In terms of our framework, as a solution to small distribution shifts, we would typically require only a handful of more real-world examples\footnote{This is also shown to be possible in the large-scale V-JEPA 2 world model \cite{assran2025v}, even for similar tasks.} and human preferences to update the world and reward models (via the same DROPJ pipeline of Figure \ref{fig:method}) and generalise better. Another strategy could be from the outset to augment the initial real-world trajectories with artificially varied conditions we expect during deployment (e.g.\ textures, lightings, colours).

\subsection{Robustness of Justifications in Diverse Feedback}
\label{sec:robustness_justif}

Here, we examine the effect of human error and differing opinions in feedback answers, focusing on justifications. Initially, for the best-trained DROP, DROPe, and DROPJ models (all at $K=500$ queries) in CR, we randomly change the answer of \textit{non-critical queries}, i.e.\ preference pairs where the car remains on the road in both clips. The same queries were modified for all methods, with a substantial 60\% of non-critical queries corrupted. Figure \ref{fig:impact_non} shows that DROPJ is more resistant than DROP and DROPe to non-critical errors. Specifically, while DROP and DROPe degrade significantly, DROPJ exhibits a smaller decrease in return, retains a tighter distribution, and maintains a grass rate near the minimum. This was expected, as errors in non-critical queries for DROPJ only impact the preference label $\mu$ within the range $[1-w_{\mathit{def}}, w_{\mathit{def}}]$ (e.g.\ $[0.25, 0.75]$), while $\mu=1-w_s$ (e.g.\ $0$) and $w_s$ (e.g.\ $1$) is reserved for safety (Table \ref{tab:equivalence}). In contrast, DROP and DROPe extend errors across the entire $\mu$ range of $[0,1]$. However, as shown in Figure \ref{fig:impact_cr}, when \textit{critical queries} (i.e.\ queries where at least in one clip the car goes through an unsafe state) are mislabelled, all methods, including DROPJ, collapse. While DROPJ shows some resistance on the grass rate to an error percentage up to 10\%, it finally breaks after 15\%.

\begin{figure}
    \centering
    \begin{subfigure}[t]{0.49\textwidth}
        \centering
        \includegraphics[width=\textwidth]{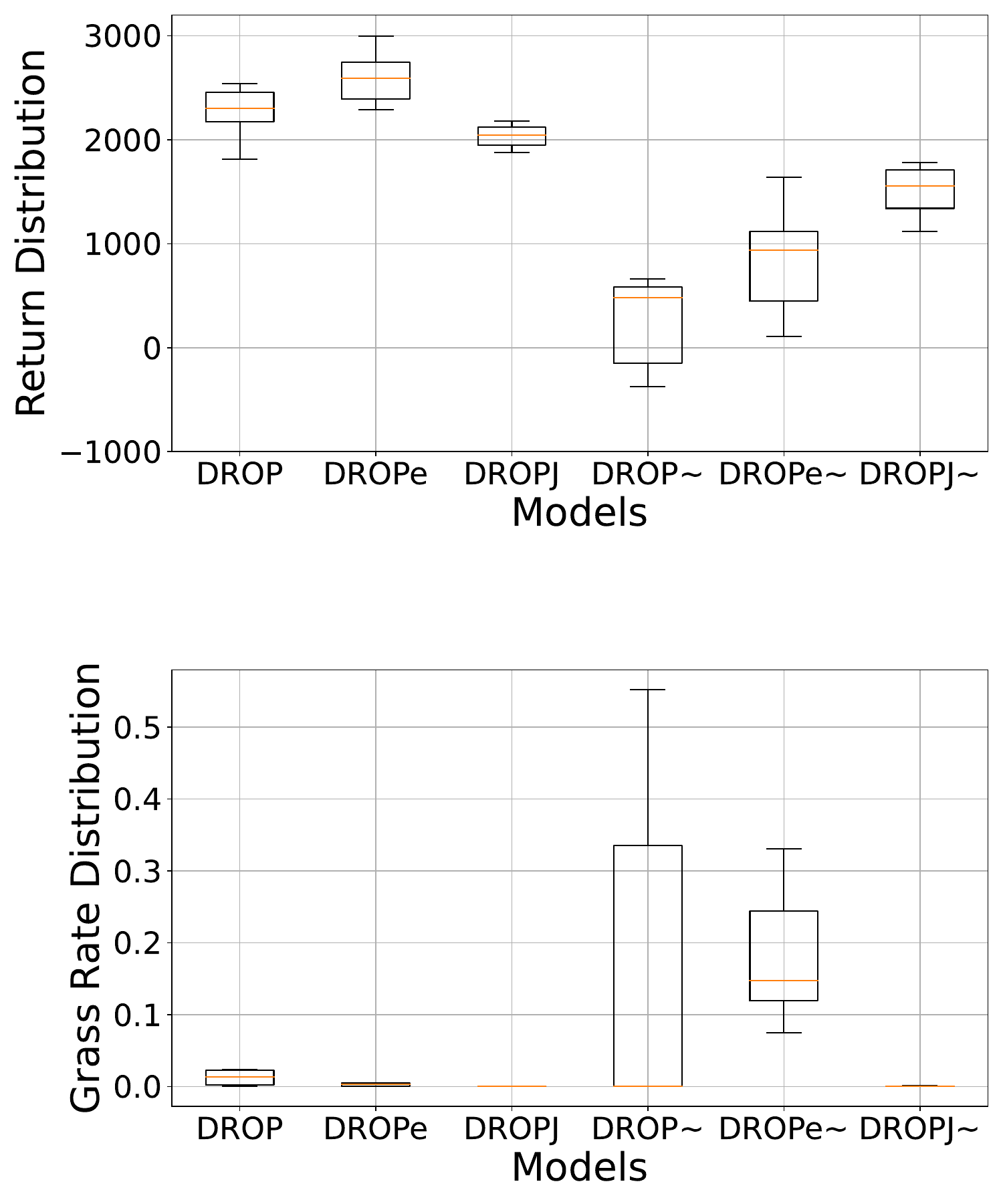}
        \caption{Effect when $60\%$ of answers in non-critical queries is erroneous.}
        \label{fig:impact_non}
    \end{subfigure}
    \hfill
    \begin{subfigure}[t]{0.49\textwidth}
        \centering
        \includegraphics[width=\textwidth]{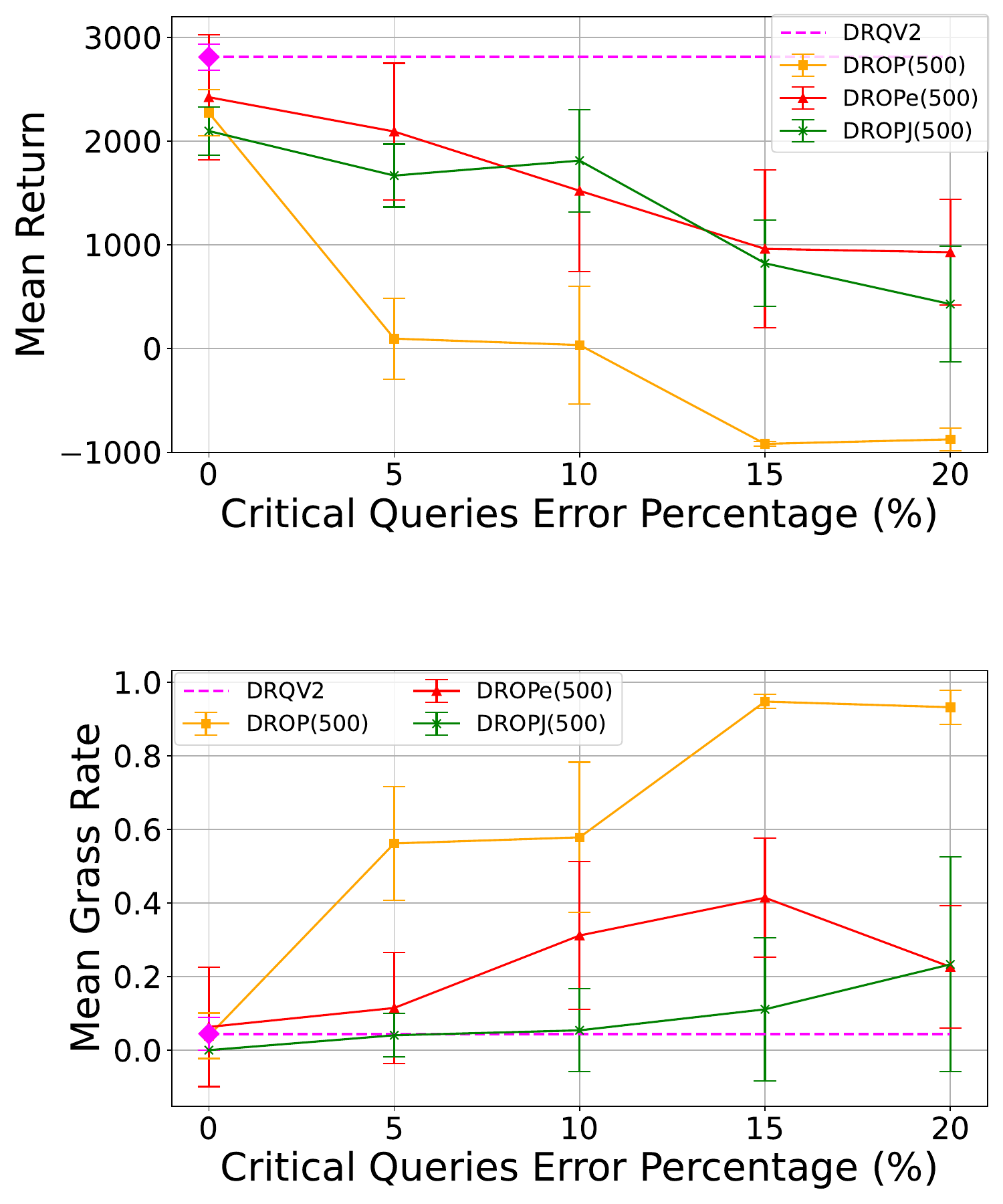}
        \caption{Effect when a percentage of answers in critical queries is erroneous.}
        \label{fig:impact_cr}
    \end{subfigure}
    \caption{Effect on best-trained models via synthetic erroneous feedback \cite{kazantzidis2026safe}.}
    \label{fig:erroneous_queries}
\end{figure}

Moreover, another labeller annotated a subset of queries with DROP and DROPe to test the consistency in feedback time, and all the queries with DROPJ to extend the insights from the previous paragraph on the effect of diverse feedback in relation to justifications. The new labeller had the same good understanding of the feedback protocol with the original labeller, but there was freedom at debatable queries. The average feedback time per query of the new labeller was: $t_{\mathit{DROP}}=6.39 \pm 5.49$, $t_{\mathit{DROPe}}=7.48 \pm 7.24$ and $t_{\mathit{DROPJ}}=8.09 \pm 5.15$. Comparing to the last three rows of Table \ref{tab:times}, these measurements are all close and consistent with the original labeller's trials. Regarding the consistency between annotators' answers on DROPJ, the most disagreements were on clips with hallucinations or ambiguous/debatable cases, such as: in one clip the car goes to grass and in the other it comes inside the road but not fully; a car slightly veers towards the grass but quickly corrects; a car takes a turn narrowly and gets close to the kerb instead of taking it more widely; driving close to the grass; and awkward manoeuvres or zigzag moves. The analytic breakdown of \textbf{all 600 queries} along with their reasons can be summarised as follows:

\begin{itemize}
\item \textbf{172 skipped from both} (due to various reasons detailed in the Appendix, including hallucinations and ambiguous cases)
\item \textbf{325 critical queries}, of which:
    \begin{itemize} 
    \item  287 agreements
    \item 32 one answered and the other skipped, due to:
        \begin{itemize}
        \item hallucinations considered significant from only one
        \item ambiguous/debatable cases
        \end{itemize}
    \item  6 disagreements (could have been also skipped), due to:
        \begin{itemize}
        \item oversights
        \item ambiguous/debatable cases
        \end{itemize}
    \end{itemize}
\item \textbf{103 non-critical queries}, of which:
    \begin{itemize}
    \item 75 agreements
    \item 6 that one answered and the other skipped, due to:
        \begin{itemize}
        \item hallucinations
        \end{itemize}
    \item 22 disagreements, due to
        \begin{itemize}
        \item ambiguous/debatable cases
        \end{itemize}
    \end{itemize}     
\end{itemize}

\noindent Thus, ignoring skipped answers, the \textit{critical disagreement rate} was only $100\times\frac{6}{287+6} \approx  2\%$ and the \textit{non-critical disagreement rate} was $100\times\frac{22}{75+22} \approx  23\%$. The averaged results of the new annotator gave for DROPJ a return of $2086.0 \pm 255.2$ and a grass rate of $0$. Hence, compared to the DROPJ trained from the original labeller (Figure \ref{fig:best_models}), the grass rate was again minimised (although we noticed that during deployment the car would approach the road-grass boundary slightly more often, likely because the second annotator treated edge cases at the boundary as mostly safe, whereas the first annotator either marked them unsafe or skipped them), and the performance remained very close. The Mann-Whitney U test for the return showed no significant difference with $p = 0.677$. So, extending the observations from the synthetic diverse feedback experiments, we have more indications that for a small contradiction in critical, and even a bigger one in non-critical queries in mainly debatable cases, DROPJ still remains robust.

Finally, for DROPJ, we implemented a simple solution for automatic detection of possible wrong answers:

\begin{enumerate}

\item  Use an initial error-free set of answers to train an ensemble of $M$ reward models 

\item During normal annotation, derive for a query the preference prediction likelihood for each member $m$ of the ensemble $\hat{\mu}_{m}=p[P= 1^{\mathit{st}}|\hat{r}_m], m=1,...,M$

\item Estimate the mean $\hat{\mu}_{\mathit{mean}}$ and the uncertainty $\hat{\mu}_{\mathit{std}}$ of the ensemble

\item If $|\mu - \hat{\mu}_{\mathit{mean}}| \leq \tau_{\mathit{mean}} \ \ \text{and} \ \ \hat{\mu}_{\mathit{std}} \leq \tau_{\mathit{std}}$, where $\mu$ is the user's label (Table \ref{tab:equivalence} or Equation \ref{eq:gen_just}), and $\tau_{\mathit{mean}}$ and $\tau_{\mathit{std}}$ fixed thresholds, accept the answer; otherwise, prompt for re-checking.

\end{enumerate}

We preliminarily tested this method, not from the outset of our experiments during the original user annotation, but afterwards by constructing a reward ensemble from a number of correct feedback answers and corrupting intentionally some other answers in order to see whether they could be detected. We found that $\tau_{\mathit{mean}}$ and $\tau_{\mathit{std}}$ values, that detected debatable queries and did not repeatedly prompt for re-checking, fell in the range of 0.01 to 0.1. Generally this range would depend on the precision of the predictor, the domain, and the consistency demanded. However, we saw that clear-cut errors, such as oversights, could be easily identified with high thresholds. Moreover, incrementally adding verified (correct) annotations to the training set and retraining the reward ensemble could further improve its predictive accuracy and error-detection capability.

\subsection{Impact of Multiple Safety Justifications}
\label{sec:impact_multi_justifications}

Finally, we examine the impact when multiple safety justifications are used in OCR (Figure \ref{fig:obst_car}). We note that the time per query with multiple justifications increased slightly compared to the single justification case of DROPJ ($7.88 \pm 5.83$ in Table \ref{tab:times}), to $8.66 \pm 7.58$ with chuckholes included, and $9.07 \pm 7.85$ with chuckholes and cars. These small increases were expected due to the higher complexity of the environment requiring slightly more cognitive effort.

\begin{figure}
    \centering
    \begin{subfigure}[t]{0.49\textwidth}
        \centering
        \includegraphics[width=\textwidth]{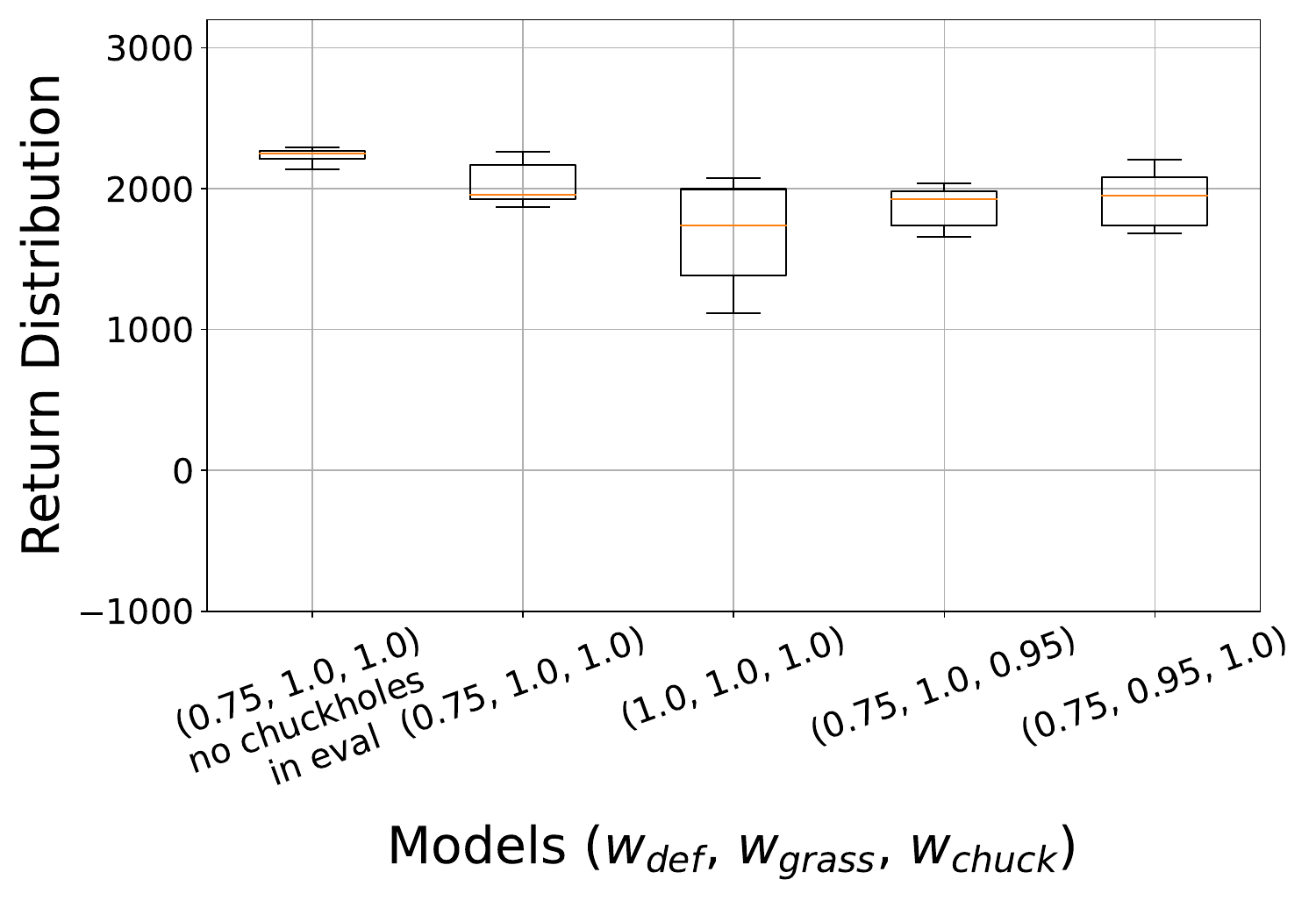}
    \end{subfigure}
    \hfill
    \begin{subfigure}[t]{0.49\textwidth}
        \centering
        \includegraphics[width=\textwidth]{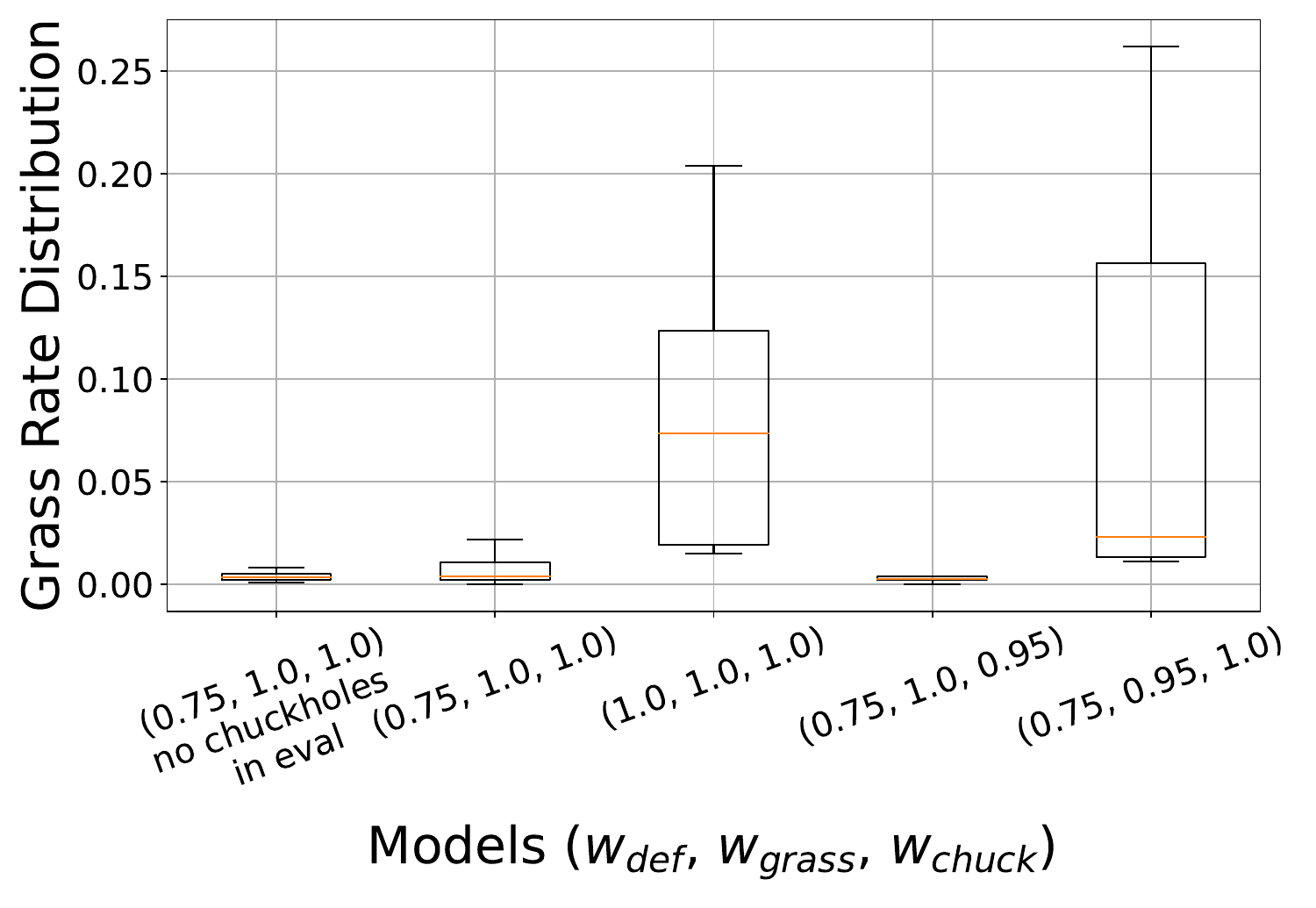}
    \end{subfigure}
    \begin{subfigure}[t]{0.49\textwidth}
        \centering
        \includegraphics[width=\textwidth]{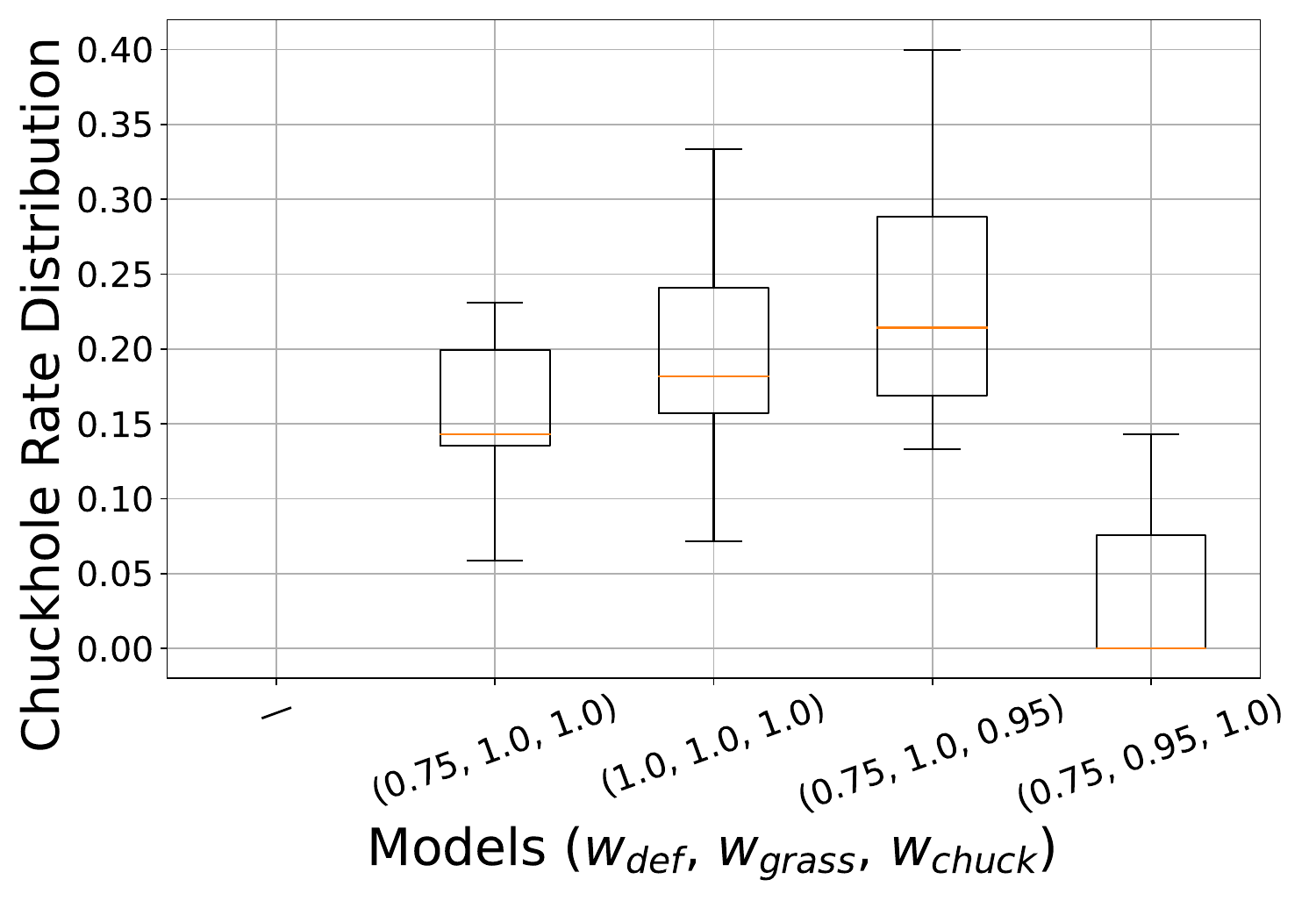}
    \end{subfigure}
    \caption{\textbf{Q4:} Impact of including multiple justifications in preferences (OCR with only chuckholes) \cite{kazantzidis2026safe}.}
    \label{fig:impact_multi}
\end{figure}

Initially, when only chuckholes are added on the road, Figure \ref{fig:impact_multi} shows the metrics distributions of models with justification weights $(w_{\mathit{def}}, w_{\mathit{grass}}, w_{\mathit{chuck}})$. As in \textbf{(Q3)} with a single safety justification, here as well, the combination $(0.75, 1, 1)$, i.e.\ making the safety justification weights $w_{\mathit{grass}}$, $w_{\mathit{chuck}}$ higher than the default weight $w_{\mathit{def}}$, yields better safety (grass and chuckhole rates) compared to the $(1, 1, 1)$ combination. Now, the return of $(0.75, 1, 1)$ is also improved over that of $(1, 1, 1)$, with one reason being that the latter gets stuck more often in chuckholes, and consequently loses time to cover more tiles. Also, the same $(0.75, 1, 1)$ model, although trained with chuckholes included, can still be used efficiently when no chuckholes appear during deployment, with, however, a few grass violations (possibly due to world model imperfections, such as correct reconstruction and prediction of the kerb). Additionally, keeping $(0.75, 1, 1)$ as reference, the grass rate can be minimised with a combination such as $(0.75, 1, 0.95)$, i.e.\ making $w_{\mathit{grass}}$ higher than $w_{\mathit{chuck}}$. However, with such a combination a trade-off is expected in the chuckhole rate, which increases. We noticed during the trials, that $(0.75, 1, 0.95)$ almost never leaves the road, but it more often steps on chuckholes. Conversely, $(0.75, 0.95, 1)$ does the opposite. It nearly minimises the chuckhole rate, but at the cost of more steps on the grass. Interestingly, with this combination the car tends to keep a safe distance from the chuckholes, even if that means it has to step more on the grass. Finally, as we mentioned in Section \ref{sec:impact_justifications} and Figure \ref{fig:heatmap}, when a justification weight decreases imprudently, although it will likely favour an objective it is not associated with, it can also end up deteriorating objectives that its associated objective influences. For example, decreasing  $w_{\mathit{grass}}$ or $w_{\mathit{chuck}}$ significantly (e.g. to 0.6), it would have an immediate effect in the return; either because the car would barely consider the grass and it would get `lost in the forest', or because it would do the same with chuckholes and frequently get stuck on them.

When both chuckholes and cars are included, Figure \ref{fig:impact_multi_car} shows the metrics distributions of models with justification weights $(w_{\mathit{def}}, w_{\mathit{grass}}, w_{\mathit{chuck}}, w_{\mathit{car}})$. In this case, considering the additional complexity of the environment, both $(1, 1, 1, 1)$ and $(0.75, 1, 1, 1)$ achieve competitive returns (slightly lower than $2000$), maintaining balanced and diverse crash rates --- neither combination dominates. For instance, $(0.75, 1, 1, 1)$ exhibits a lower car rate, but higher grass and chuckhole rates. We kept as reference $w_{\mathit{def}}=1$, since the default justification seemed to contribute, not only in performance, but also indirectly in safety this time. The reason is that preference answers disfavoured clips showing the car approaching obstacles from a distance, even if not colliding with them within the clip. As expected, the same $(1, 1, 1, 1)$ model again performs well when evaluated with no obstacles, with only a few grass violations, most likely due to world model imperfections (e.g.\ in kerb). As before, adjusting the justification weights can improve the desired safety aspects. For instance, when compared to the reference $(1, 1, 1, 1)$ model, the $(1, 1, 0.98, 0.98)$ is more careful with the safety aspect of avoiding grass steps (grass rate almost zeros) than chuckholes and cars (their rates increase). The opposite happens for $(1, 0.98, 1, 1)$, where only very few chuckholes and no cars are contacted. As before, the agent-car tends to keep a safe distance from chuckholes and cars, even if it has to drive more often in the grass. The previous observations show that we can control the original balance of safety metrics as desired by adjusting accordingly the justification weights, giving greater or lesser emphasis to specific safety requirements.

\begin{figure}
    \centering
    
    \begin{subfigure}[b]{0.49\textwidth}
        \centering
        \includegraphics[width=\textwidth]{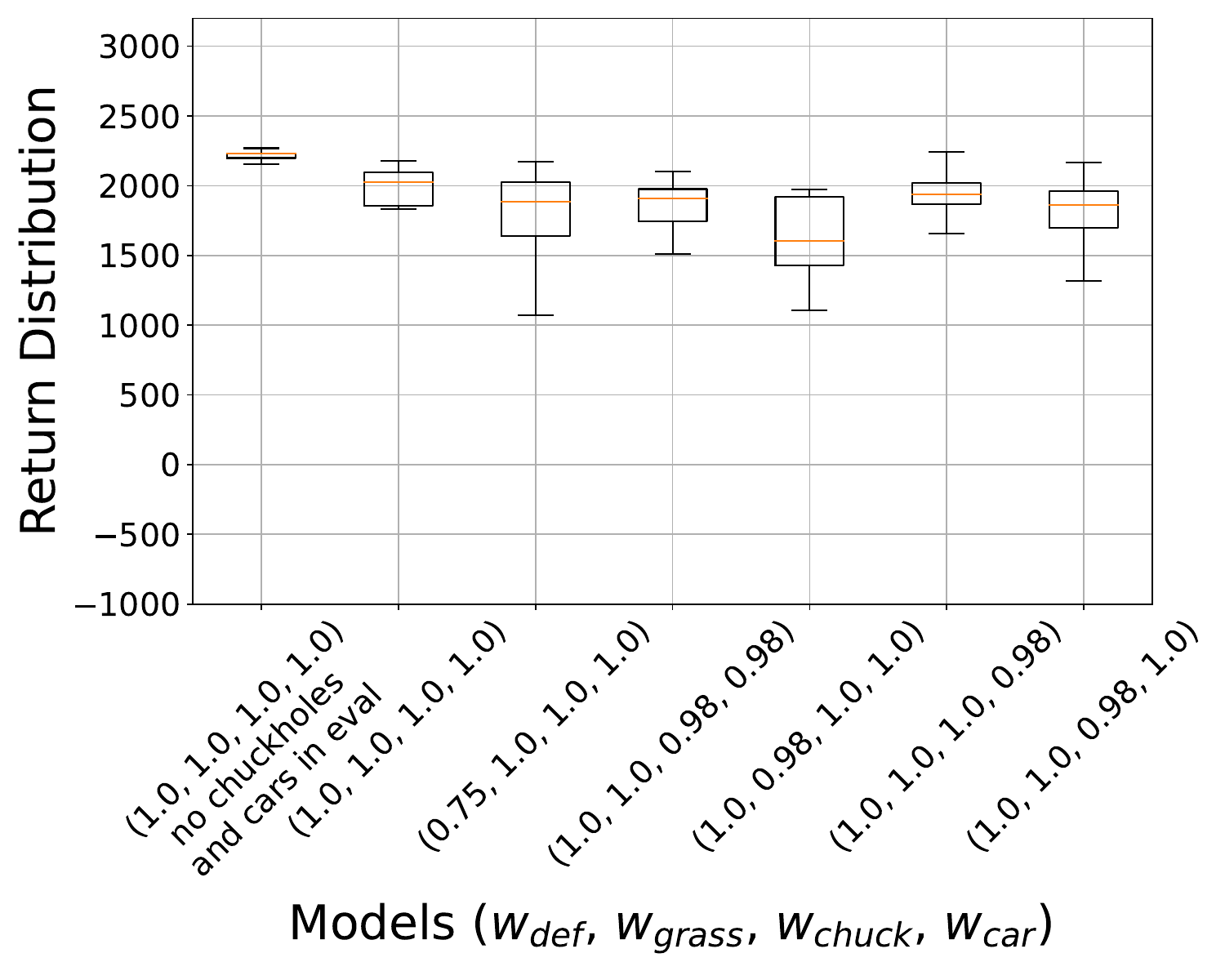}
    \end{subfigure}
    \hfill
    \begin{subfigure}[b]{0.49\textwidth}
        \centering
        \includegraphics[width=\textwidth]{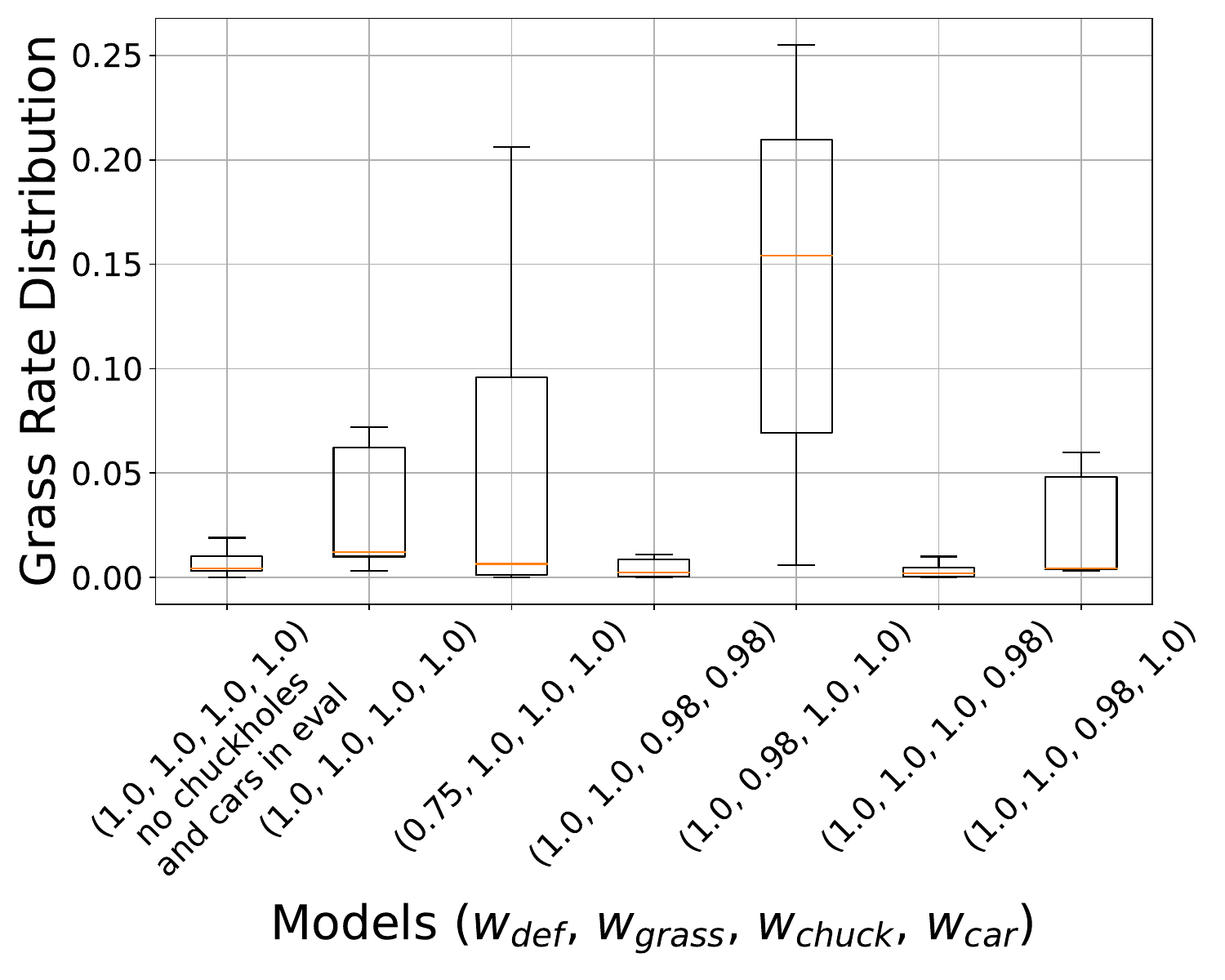}
    \end{subfigure}
    
    \vspace{0.6cm}
    
    \begin{subfigure}[b]{0.49\textwidth}
        \centering
        \includegraphics[width=\textwidth]{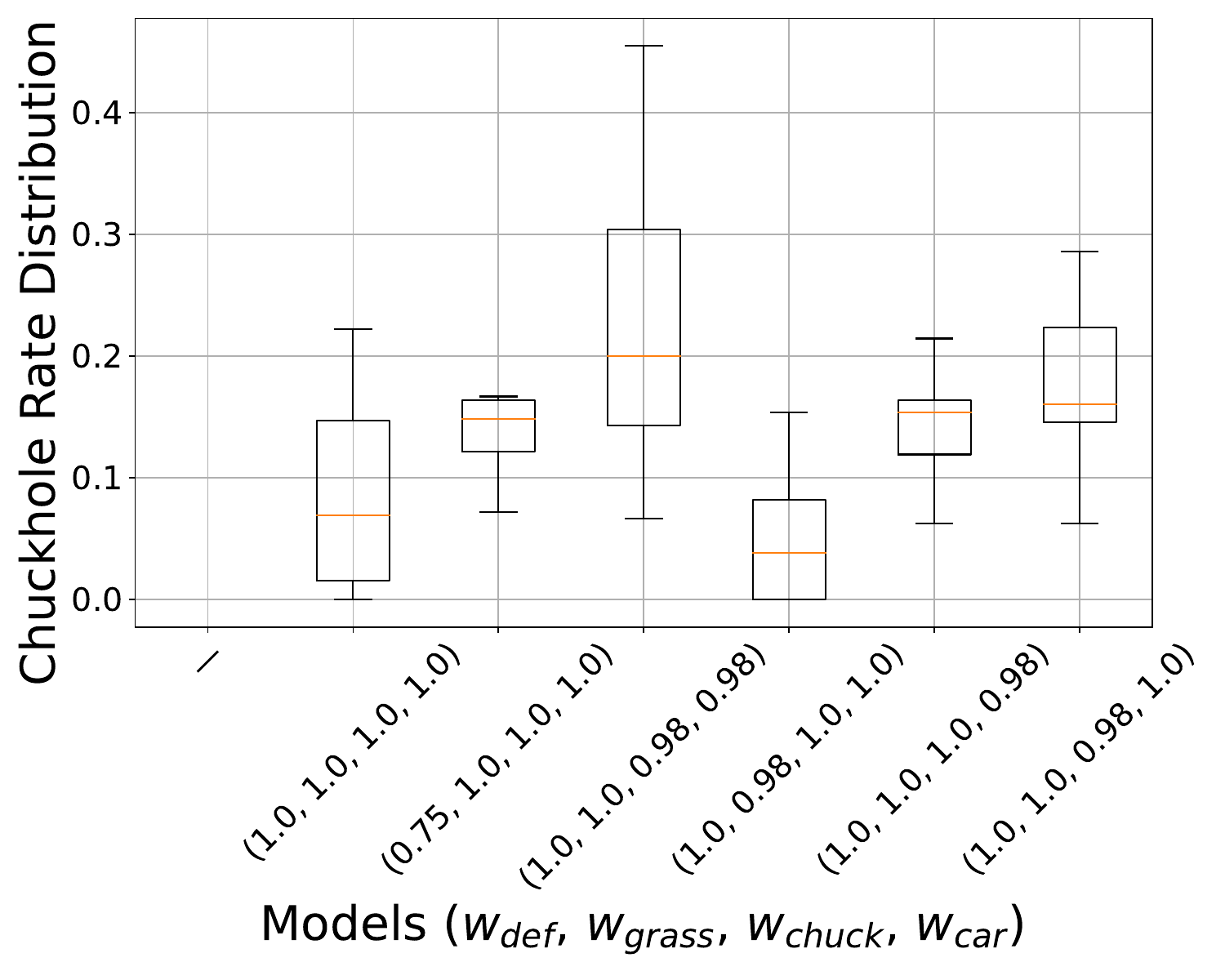}
    \end{subfigure}
    \hfill
    \begin{subfigure}[b]{0.49\textwidth}
        \centering
        \includegraphics[width=\textwidth]{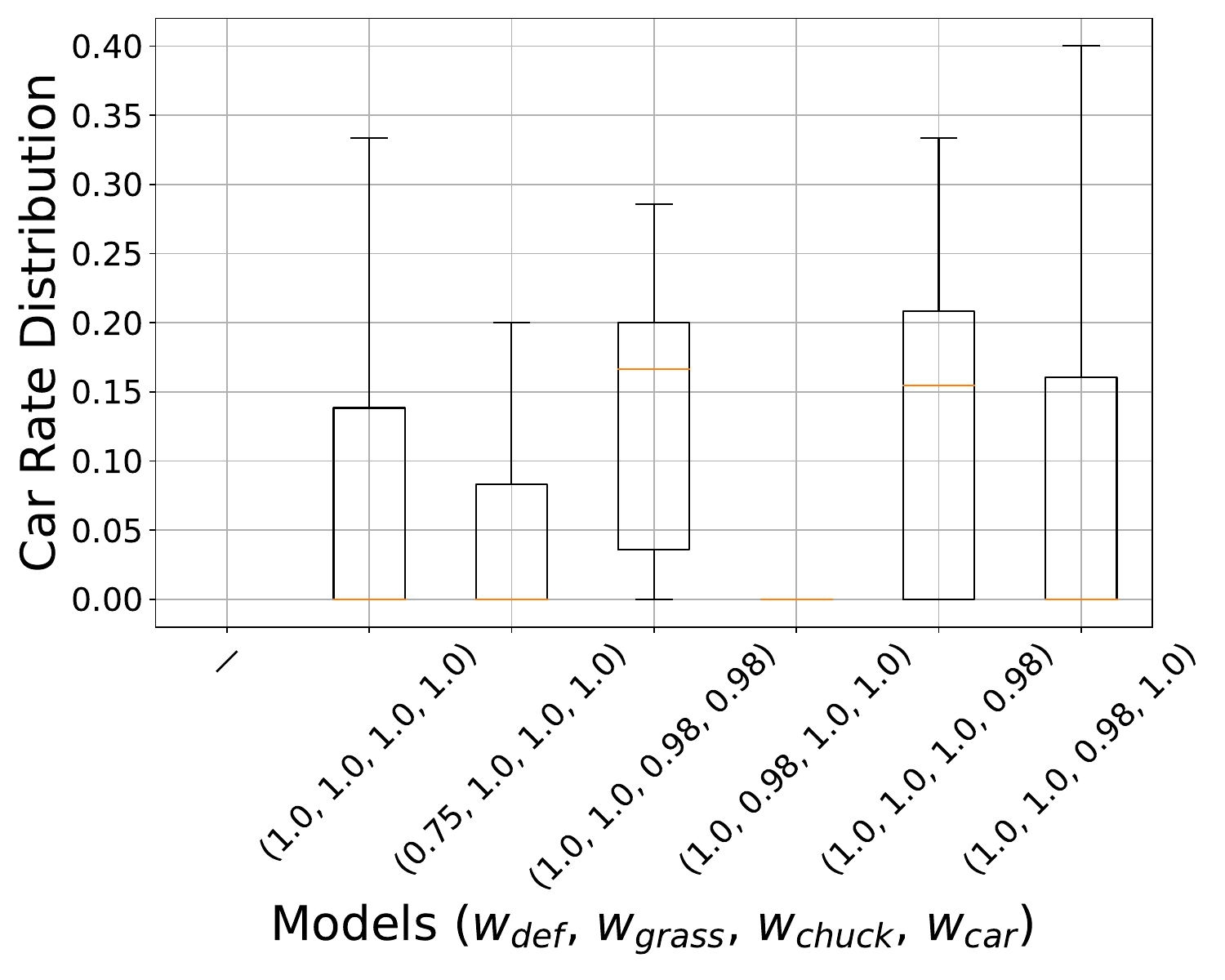}
    \end{subfigure}
    
    \caption{\textbf{Q4:} Impact of including multiple justifications in preferences (OCR with both chuckholes and cars) \cite{kazantzidis2026safe}.}
    \label{fig:impact_multi_car}
\end{figure}

However, as mentioned before, justification weights should be used with consideration if all safety aspects are critical or one affects another. For instance, with $(1, 1, 1, 0.98)$, as expected, the grass rate (given higher priority) is minimised, and the less-considered car rate increases. However, unexpectedly, the chuckhole rate also slightly increases. From inspection, this happens not because the agent-car considers chuckholes less, but because of the \textit{carry-over effect}, where a collision with a scripted-car often pushes the agent-car towards a nearby chuckhole. This is more evident in $(1, 1, 1, 0.75)$, i.e.\ when decreasing the car weight considerably, where the more frequent collisions with scripted-cars drag the agent-car off its route. The same effect, to a smaller degree, is observed in $(1, 1, 0.98, 1)$ (and more on $(1, 1, 0.75, 1)$)\footnote{Figure \ref{fig:obst_car} was extracted at $(1, 1, 0.75, 1)$, where the agent-car, safely passes the blue car, but while returning centrally, does not plan for the chuckhole ahead, due to the considerable decrease of $w_{\mathit{chuck}}$.}, where although the grass rate slightly decreases and the chuckhole rate increases, the latter causes a slight increase in the car rate as well (e.g.\ scripted-cars bump into the slowed-down agent-car in a chuckhole more often). 

Thus, according to our overall experimentation in OCR, if the safety aspects are independent, justifications can provide a straightforward way to prioritise one or another safety aspect. However, if there are dependencies among safety aspects, justifications can still consider one safety aspect more than another, but the justification weights should be set with care to avoid the carry-over effect. We aspire justification weights to evolve into standardised values for common tasks, while still allowing fine-tuning by human operators before training to account for environment-specific characteristics. Alternatively, in future work one could possibly employ evolution strategies such as \cite{hansen2001completely} to formulate an optimisation problem with soft constraints, where justification weights are learnable.

\section{Conclusion}
\label{sec:conclusion}

We presented DROPJ, a method that builds a world model from prior real-world trajectories and uses human input as the learning source. DROPJ maximises safety both during training and deployment, given unavailable environment dynamics and rewards. Using real-user experiments, we found that generating simulated trajectories from a user through a learned world model significantly improves the computational cost during training, as well as the performance during deployment. We showed that using preferences on dream queries, compared to other human feedback types, significantly enhances performance and can lead to a near-optimal policy in the environments tested. Most importantly, we saw how by using safety justifications to accompany preferences, we can improve safety or prioritise desired safety aspects associated with them. The only concern of DROPJ is its reliance on past real-world examples in order to build a robust world model, and (as generally in human-centred methods) on qualified/capable users to create more examples via the learned simulator or provide accurate feedback. Sometimes having these and in abundance, may be challenging beyond large industry labs. Some natural next steps in this research would be applying DROPJ in real hardware, such as a lab-based robot navigation task, or employing justifications with a world model in a different domain that uses learning from human preferences.

\begin{credits}
\subsubsection{\ackname} This work was supported by UKRI [EP/S024298/1] and EPSRC [EP/Y003187/1]. The authors also acknowledge the use of the IRIDIS High-Performance Computing Facility, and associated support services at the University of Southampton.
\end{credits}

\pdfbookmark[1]{Appendix}{appendix}
\section*{Appendix}
\label{app:dropj-app}

\subsubsection{Obstacle Car Racing (OCR)} The OCR environment (Figure \ref{fig:obst_car}) extends Car Racing (CR) \cite{brockman2016openai} with chuckholes and cars added along the track. While these could be spawned entirely randomly, we kept a minimum distance between two cars or two chuckholes in order to afford the typically very high demands in data and time to train a sufficiently-good world model able to plan \cite{assran2025v}, and avoid highly congested scenarios where safety metrics (crash rates) would be difficult to interpret. When at least one car wheel contacts a chuckhole its velocity is abruptly reduced to reflect a slowdown effect (more like stepping on glue), but the car continues moving due to momentum. The agent-car follows the learned MPC policy to drive, but the non-agent cars use a scripted controller to stay on the road and maintain a relatively low speed. Specifically, they identify the closest track tile (patch) and steer proportionally towards it, reducing the angular difference between its heading and the tile direction. Also, they compute the gas based on the difference between the current and the target speed, and slow down on sharp turns by estimating angular changes between consecutive tiles.

To create the offline dataset $\mathcal{R}$, we generated by playing the game from the keyboard $M=600$ real-world trajectories (episodes) for CR and OCR only with chuckholes, and $M=800$ for OCR with chuckholes and cars. The episode length was $L=1000$. For CR the returned states were $96\times96$ pixels, but included the score for the 12 pixels at the bottom, which were cropped, as well as 6 pixels from the left and right side (not useful as they normally show just the grass), giving the state space $\mathcal{S}=\mathbb{R}^{84\times84\times3}$ of the RGB images. For OCR, as it was more complex, we made the state space $\mathcal{S}=\mathbb{R}^{128\times128\times3}$ from the start. In Step 2, we generated by playing the game in the learned simulator $N=10$ dream user trajectories for CR, and $N=60$ for OCR to acquire slightly more diversity given the higher complexity. The generation of one trajectory needed only 30s.

Training used early stopping based on convergence criteria and dynamically adjusted the epochs and learning rate to avoid overfitting. All methods used the same world model based on \cite{ha2018recurrent} and \cite{reddy2020learning}, reimplemented from TensorFlow to PyTorch with a few improvements in the architecture and hyperparameters after experimentation (Table \ref{tab:vae_configuration} for the VAE and Table \ref{tab:mdnrnn} for the dynamics model). The full world model was trained at a High-Performance Computing cluster using a single NVIDIA Tesla V100 GPU in overall (encoder+dynamics model) around 10 hours for CR, 33 hours for OCR only with chuckholes, and 42 hours for OCR including chuckholes and cars. The VAE needed around 80\% of the overall time. For DRQV2 we used the same implementation and hyperparameters as in \cite{yarats2021mastering}.

\begin{table}[t]
\caption{VAE model architecture and hyperparameters \cite{kazantzidis2026safe}.}
\centering
{
\setlength{\tabcolsep}{5mm}
\begin{tabular}{lll}
\toprule
\multicolumn{1}{l}{\textbf{Arch./Hyperparameters}} & \multicolumn{1}{l}{\textbf{CR}} & \multicolumn{1}{l}{\textbf{OCR}} \\ \midrule

Input size & $84\times84\times3$ & $128\times128\times3$ \\ 
\midrule

\multirow{6}{*}{\makecell[l]{Encoder \\CNN Layers}}
    & \multicolumn{2}{c}{(filters, kernel, stride, padding)} \\
    & (24, 4$\times$4, 2, 0)          & (64, 4$\times$4, 2, 1, 0) \\
    & (48, 4$\times$4, 2, 0)          & (128, 4$\times$4, 2, 1) \\
    & (96, 4$\times$4, 1, 0)          & (256, 4$\times$4, 2, 1) \\
    & (192, 4$\times$4, 1, 0)         & (512, 4$\times$4, 2, 1) \\
    & (384, 4$\times$4, 1, 0)         & (1024, 4$\times$4, 2, 1) \\ \midrule

\multirow{2}{*}{\makecell[l]{Linear Layers for\\$\mu$ and $\sigma$}}

    & 1536 to $d=32$                                      & 16384 to $d=64$ \\
    & $d=32$ to 1536 
    & $d=64$ to 16384 
    \\ \midrule

\multirow{6}{*}{\makecell[l]{Decoder Transpose\\CNN Layers}}
& \multicolumn{2}{c}{(filters, kernel, stride, padding)} \\
    & (192, 4$\times$4, 1, 0)          & (512, 4$\times$4, 2, 1) \\
    & (96, 4$\times$4, 1, 0)           & (128, 4$\times$4, 2, 1) \\
    & (48, 5$\times$5, 2, 0)           & (64, 4$\times$4, 2, 1) \\
    & (24, 5$\times$5, 2, 0)           & (32, 4$\times$4, 2, 1) \\
    & (3, 4$\times$4, 2, 0)            & (3, 4$\times$4, 2, 1) \\ 

\midrule

KL Divergence Threshold 
    & 0.5                                                 & 2 \\
Optimiser           
    & Adam                        & Adam \\
Mini-batch size       
    & 32                                                  & 32 \\
Learning Rate        
    & 10\textsuperscript{-4} $\rightarrow$ 10\textsuperscript{-5} (dyn)      & 10\textsuperscript{-4} \\ \bottomrule

\end{tabular}
}
\label{tab:vae_configuration}
\end{table}

\begin{table}[h!]
\caption{Dynamics model (MDN-RNN) hyperparameters \cite{kazantzidis2026safe}.}
\centering
{
\setlength{\tabcolsep}{6mm}
\begin{tabular}{ll}
\toprule
\textbf{Hyperparameters}            & \textbf{Values} \\ \midrule

Number of LSTM layers  & 1 \\
Number of LSTM units $m$ & 256 (1024 for OCR) \\
Gaussian Mixtures                            & 5  (7 for OCR)             \\
Temperature $\tau$ &    1.0 \\ 
Optimiser                            & Adam              \\
Mini-batch size                           & 32              \\
Maximum Gradient Value                   & 1.0 \\
Learning rate                        & 10\textsuperscript{-3} $\rightarrow$ 10\textsuperscript{-4} (dyn)             \\
\bottomrule
\end{tabular}
}
\label{tab:mdnrnn}
\end{table}

\pdfbookmark[2]
  {Reward Model from Sparse Reward Labels}
  {app-sparse-reward}
\subsection*{Reward Model from Sparse Reward Labels}
\label{app:rew_sparse}
For both ReQueST and DROS, the user annotated with sparse reward labels segments of length $k=50$. We used the same classifier-based reward model from \cite{reddy2020learning} reimplemented in PyTorch, with the hyperparameters shown in Table \ref{tab:sparse_model}. The feedback was given from a real user, instead of a simulated one using a ground-truth reward model trained from a lot of expert and random real-world demonstrations. That allowed more reliable comparisons. Figure \ref{fig:sparse} shows the GUI that the user used to provide sparse labels. The action shown represents $[\mathit{steer}, \mathit{gas}, \mathit{brake}]$ with the ranges $\mathit{steer}\in[-1.0, 1.0]$, $\mathit{gas} \in [0.0, 1.0]$ and $\mathit{brake}\in[0.0, 1.0]$. In the Figure's example, the car had the wheel turned completely right ($1.0$) on the last action, and the car is already inside the grass. So the correct answer is \textit{Grass/Kerbs} (unsafe). Patches (tiles) are not clearly visible, which is reasonable given the simulator (world model) is not perfect. However, based on the gas of an action, and clues such as closer proximity to a turn (meaning the car is moving forward) the label \textit{New Tile} (good) was chosen. When the scene does not change (e.g.\ when the gas is not pressed at all or the brake is pressed consistently) and the car is inside the road, the answer should be \textit{Road} (neutral), meaning that the car is on a previously-visited tile. The user also has the option \textit{Skip query} to skip a label when a query is not clear, e.g.\ when the frame contains hallucinations (artefacts) \cite{zhao2025rejecting} due to the dynamics model imperfections.

\begin{table}[t]
\caption{Reward model from sparse reward labels \cite{kazantzidis2026safe}.}
\centering
{
\setlength{\tabcolsep}{5mm}
\begin{tabular}{ll}
\toprule
\textbf{Hyperparameters}            & \textbf{Values} \\ \midrule
Reward constants & $R_{\mathit{good}} = 10, \ R_{\mathit{unsafe}} = -1, \ R_{\mathit{neutral}} = 0$ \\
Number of FNN layers & 5 \\
Number of FNN units & 256 \\
Number of ensemble networks & 4 \\
Mini-batch size                      & 32              \\
Optimiser                            & Adam              \\
Learning rate                        & 10\textsuperscript{-3}             
       
\\ \bottomrule
\end{tabular}
}
\label{tab:sparse_model}
\end{table}

\begin{figure*}
    \centering
    \includegraphics[width=0.6\textwidth]{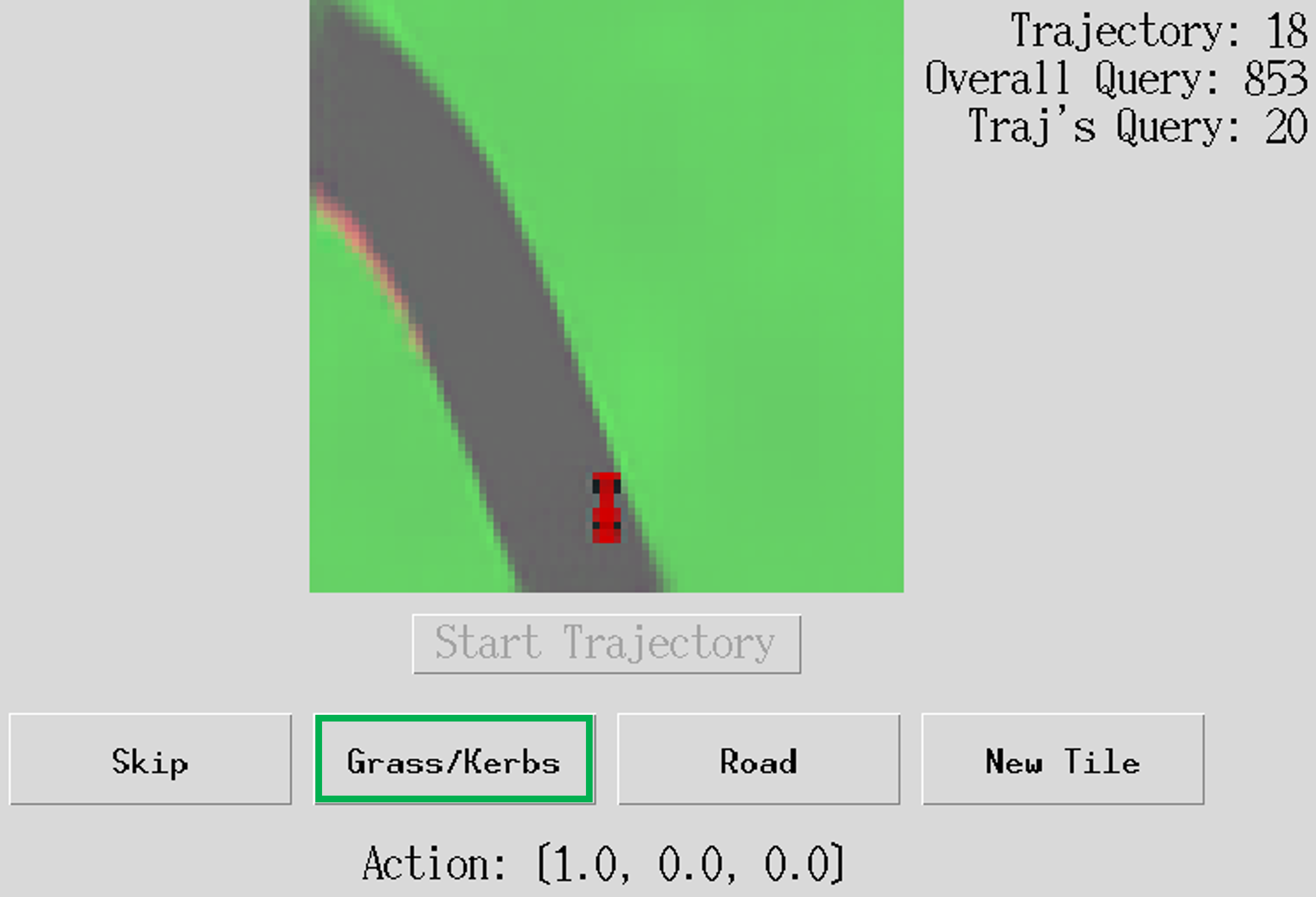}
    \caption{GUI of the sparse reward labels feedback in ReQueST and DROS \cite{kazantzidis2026safe}.}
    \label{fig:sparse}
\end{figure*}

\pdfbookmark[2]
  {Reward Model from Preferences (and Justifications)}
  {app-preferences}
\subsection*{Reward Model from Preferences (and Justifications)}
\label{app:rew_pref}

From the dream user trajectories in $\mathcal{D}$, we sampled uniformly at random pairs of trajectory segments of length $k=20$. For the preference reward model we adapted the implementation from \cite{lee2021pebble}. The addition was the justifications with weights $w_s$, $w_{\mathit{def}}$ in CR and $w_{\mathit{car}}$, $w_{\mathit{chuck}}$, $w_{\mathit{grass}}$, $w_{\mathit{def}}$ in OCR, which shape the preference label $\mu$ differently. For CR the number of preference queries was $K=500$ until the best model (Table \ref{tab:times}), which was increased in OCR to $K=1000$ with added chuckholes, and $K=1500$ with added chuckholes and cars, to compensate for the higher complexity and the more queries that had to be skipped due to more world model hallucinations. Other hyperparameters are given in Table \ref{tab:pref_mod}.

\begin{table}[t]
\caption{Reward model from preferences and justifications \cite{kazantzidis2026safe}.}
\centering
{
\setlength{\tabcolsep}{1.5mm}
\begin{tabular}{ll}
\toprule
\textbf{Hyperparameters}            & \textbf{Values} \\ \midrule
Justification weights & $w_s = 1, \ w_{\mathit{def}} = 0.75$ and Figure \ref{fig:heatmap} (CR) \\
 & $w_{\mathit{car}}$, $w_{\mathit{chuck}}$, $w_{\mathit{grass}}$, $w_{\mathit{def}}$: Figures \ref{fig:impact_multi} and \ref{fig:impact_multi_car} (OCR)\\
Number of FNN layers & 3 \\
Number of FNN units & 256 \\
Number of ensemble networks & 3 \\
Mini-batch size                      & 128              \\
Optimiser                            & Adam              \\
Learning rate                        & 3$\times10$\textsuperscript{-4}             
       
\\ \bottomrule
\end{tabular}
}
\label{tab:pref_mod}
\end{table}

Figure \ref{fig:gui_drop_drope_dropj} shows the GUIs of DROP, DROPe and DROPJ, all on the same example of Figure \ref{fig:clip} for the single safety justification case. On the left clip the car has barely moved, and on the right the car has taken a nice turn. With DROP (Figure \ref{fig:gui_drop}), the correct answer is \textit{Right} ($\mu=0$). Similarly, a positive preference would have been given to a clip that: the car was staying on the road, while in the other clip the agent was going to, staying or coming from the grass; the car was covering a bigger distance than the car of the other clip; the car was taking a difficult (sharp) turn, while the other car did not show an interesting behaviour. With DROPe (Figure \ref{fig:gui_drope}), the answer to the example is again \textit{Right} ($\mu=0$). However, in contrast to DROP, the \textit{Equally} button is now activated for the following cases: in both clips the car is going from the road to the grass; in both clips the car is staying on the grass; in both clips the car is coming from the grass to the road; in one clip the car is staying on the grass and in the other the car is going from the road to the grass (both unsafe); in both clips the cars stay on the road and have a similar behaviour, i.e.\ covering roughly the same distance and none of them taking a difficult turn. Finally, DROPJ (Figures \ref{fig:gui_dropj1}--\ref{fig:gui_dropj3}) first makes two queries about the safety on each clip. The answer in the example is \textit{Yes} for both clips, since in both the car stays safe inside the road. If one answer was \textit{Yes} and the other \textit{No}, then there would not be a third question about the preference ($\mu$ would automatically become 1 or 0). Similarly, there would not be a third question if the answer started with \textit{No}/\textit{No}, since if they are both unsafe, the preference is essentially `equally bad' ($\mu=0.5$). Going to the third question when both are safe, the answer of DROPJ is always the same with the one of DROPe and for the same reasons; e.g.\ in the example it is \textit{Right} (Figure \ref{fig:gui_dropj3}). However, now the label becomes $\mu=0.25$, `communicating' to the reward model that this example did not involve unsafe behaviour. The user can also repeat the clips if needed with \textit{Repeat Video}. Finally, the option \textit{Skip query} is available to all algorithms when the answer is unclear. Such cases are when: there are hallucinations from imperfections in the dream; one clip shows the car coming from the grass to the road, while the other going from the road to the grass (both are unsafe in reality, so neither do we want to express a preference for the first, nor mark them as equal, since the latter would be unintuitive and could mislead the reward model); one clip shows the car coming from the grass to the road, while the other one staying on the grass (for the same reason as before). For DROP, \textit{Skip query} is also pressed for all the cases that DROPe or DROPJ would choose the \textit{Equally} answer. 

\begin{figure}
\centering

\setcounter{subfigure}{0}
\begin{subfigure}[t]{0.49\textwidth}
    \centering
    \includegraphics[width=\textwidth]{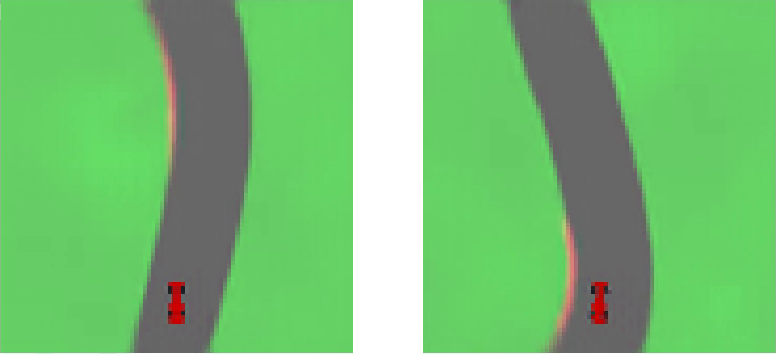}
    \caption{Final frame of the two compared clips. On the left, the car has barely moved; and on the right, it has safely taken a nice turn.}
    \label{fig:clip}
\end{subfigure}
\setcounter{subfigure}{3}
\begin{subfigure}[t]{0.49\textwidth}
    \centering
    \includegraphics[width=\textwidth]{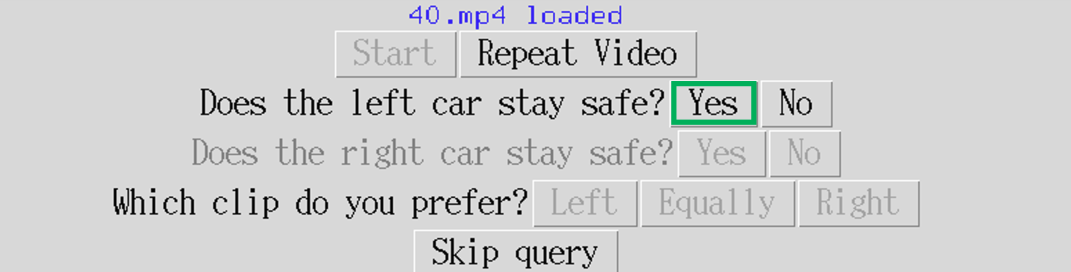}
    \caption{DROPJ --- first query about safety (left clip). Ans: \textit{Yes}.}
    \label{fig:gui_dropj1}
\end{subfigure}

\vspace{0.25cm}

\setcounter{subfigure}{1}
\begin{subfigure}[t]{0.49\textwidth}
    \centering
    \includegraphics[width=\textwidth]{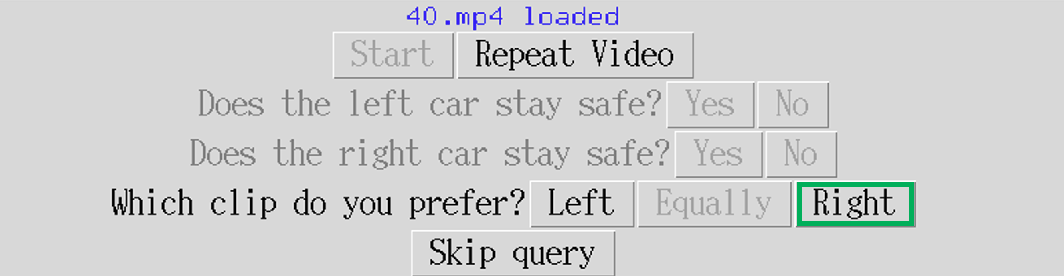}
    \caption{DROP (\textit{Equally} button is deactivated). Ans: \textit{Right} $\rightarrow$ $\mu=0$.}
    \label{fig:gui_drop}
\end{subfigure}
\setcounter{subfigure}{4}
\begin{subfigure}[t]{0.49\textwidth}
    \centering
    \includegraphics[width=\textwidth]{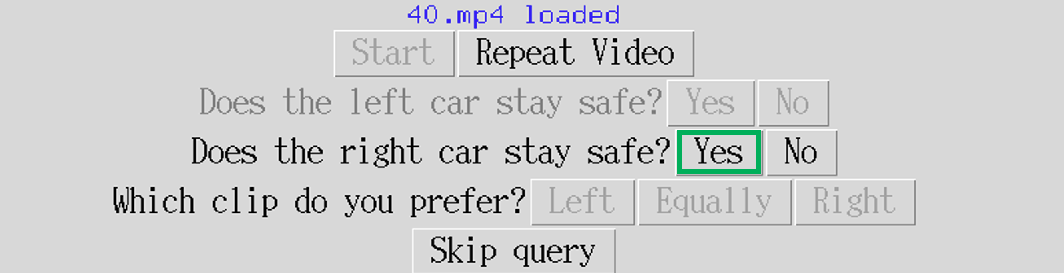}
    \caption{DROPJ --- second query about safety (right clip). Ans: \textit{Yes}.}
    \label{fig:gui_dropj2}
\end{subfigure}

\vspace{0.25cm}

\setcounter{subfigure}{2}
\begin{subfigure}[t]{0.49\textwidth}
    \centering
    \includegraphics[width=\textwidth]{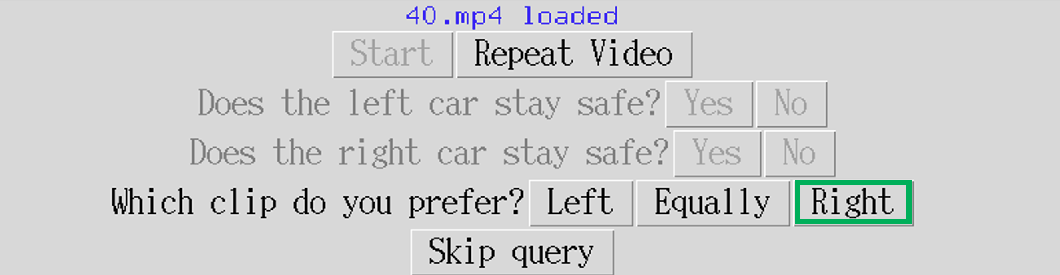}
    \caption{DROPe (\textit{Equally} button is included). Ans: \textit{Right} $\rightarrow$ $\mu=0$.}
    \label{fig:gui_drope}
\end{subfigure}
\setcounter{subfigure}{5}
\begin{subfigure}[t]{0.49\textwidth}
    \centering
    \includegraphics[width=\textwidth]{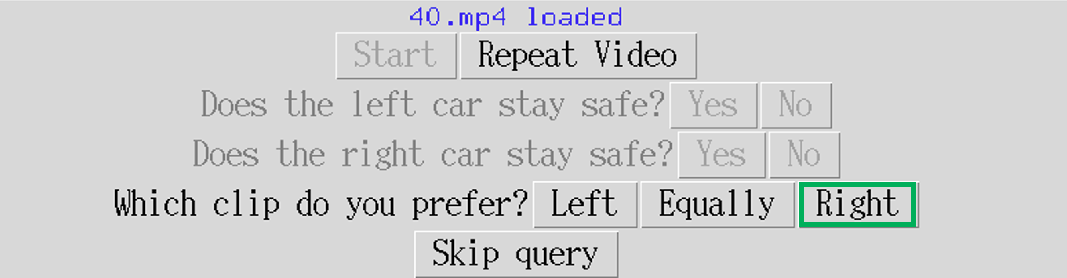}
    \caption{DROPJ --- third query for preference (since both \textit{Yes}). Ans: \textit{Right} $\rightarrow$ $\mu=0.25$.}
    \label{fig:gui_dropj3}
\end{subfigure}

\caption{GUIs in Car Racing of DROP (b), DROPe (c) and DROPJ (d--f) on the same preference query (a) for the single safety justification case \cite{kazantzidis2026safe}.}
\label{fig:gui_drop_drope_dropj}

\end{figure}

Finally, Figure \ref{fig:gui_multi} shows the GUI of DROPJ when multiple justifications are used in OCR. Here, a logic combining different activated justifications (e.g.\ a car shown to be both on top of a chuckhole and hitting another car) to shape $\mu$ could perhaps give interesting results. However, we continued with the reasonable logic (Equation \ref{eq:gen_just}) of shaping $\mu$ based on the most severe active justification. In this case, for efficiency in terms of user burden, we specify immediately the most severe justification's name and then give the preference. In the example, we answer \textit{Chuckhole} due to the right clip, and \textit{Left} since the agent-car stayed safe in that clip, making $\mu=w_{\mathit{chuck}}$. The opposite way of specifying first the preference and then the justification would have also been possible.

\begin{figure}
    \centering
    \includegraphics[width=0.6\textwidth]{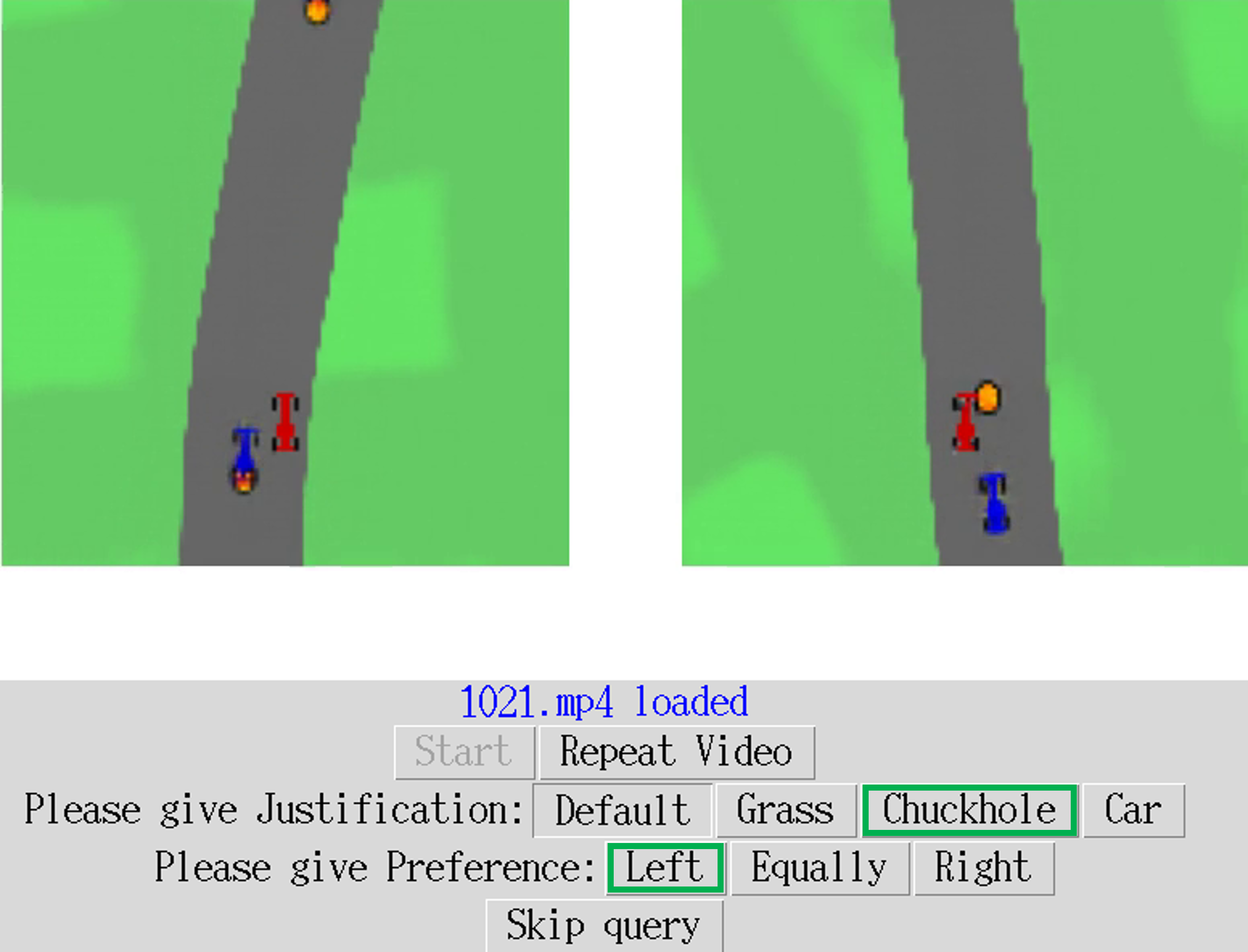}
    \caption{GUI in Obstacle Car Racing of DROPJ for the multiple justifications case \cite{kazantzidis2026safe}. \textit{Default} is always preselected for brevity (to often save a click). On the left, the agent-car (red) passed safely the blue car; and on the right, it passed the car but stepped on the chuckhole. Ans: (\textit{Chuckhole}, \textit{Left}) $\rightarrow$ $\mu=w_{\mathit{chuck}}$.}
    \label{fig:gui_multi}
\end{figure}

\pdfbookmark[2]
  {Model Predictive Control}
  {app-mpc}
\subsection*{Model Predictive Control}

After learning the reward model, we deploy immediately the agent with MPC \cite{garcia1989model}. We used the same sample-based MPC approach with \cite{reddy2020learning} and \cite{rahtz2022safe} described in Step 4 (Section \ref{sec:dropj} and Algorithm \ref{alg:dropj}). Given the computational resources, for all algorithms we generated $S=15$ sensible random action sequences. They include some predefined sequences such as drive straight, drive forward while turning left or right, brake, and two-stage randomly generated combinations of these, all within the horizon of the sequence (for example for a horizon of 15, a sequence could be: drive forward while turning left for 10 steps, and just drive straight for another 5 steps). A short experimentation was needed to tune the horizon of each method. ReQueST and DROS were found to operate best with $H=25$ and DROP, DROPe and DROPJ with $H=15$. Also, all methods re-planned every $R=4$ ($R=3$ generated similar results for most methods). Figure \ref{fig:mpc_dreams} shows the end of the MPC dreams of a set of sensible random action sequences of horizon $H=15$. The sequences are succinctly described in the figure: inside the brackets it is the action $[\mathit{steer}, \mathit{gas}, \mathit{brake}]$ with the ranges $\mathit{steer}\in[-1.0, 1.0]$, $\mathit{gas} \in [0.0, 1.0]$ and $\mathit{brake}\in[0.0, 1.0]$ and the number next to the brackets indicates the number of steps that that action was taken in the sequence. The action sequence of the trajectory highlighted in red has the highest predicted return, so the first $R=4$ actions of that sequence were taken, until MPC re-planned. We also found it slightly beneficial to reset (zero) the LSTM's memory every 50 steps, which can have several explanations, such as preventing the accumulation of errors or managing non-stationarity.

\begin{figure}
    \centering
    \includegraphics[width=1\textwidth]{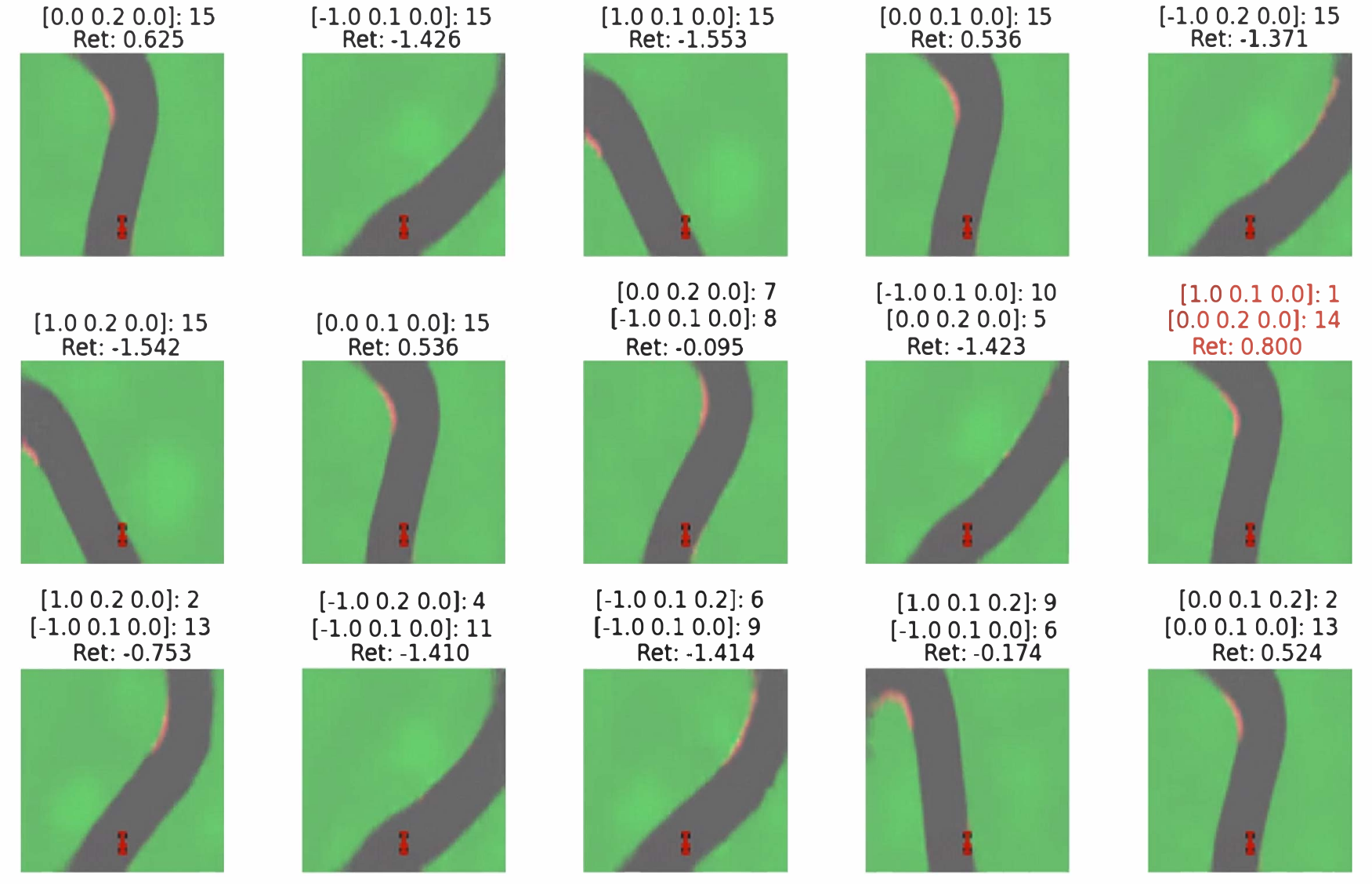}
    \caption{MPC dreams of a set of sensible random action sequences \cite{kazantzidis2026safe}.}
    \label{fig:mpc_dreams}
\end{figure}

%
%
\bibliographystyle{splncs04}
\bibliography{mybibliography}
%




\end{document}